 \newcommand{\B}{\mathcal{B}}
 \newcommand{\dfour}{{\sqrt{d_1+1}}}
 \newcommand{\dfoursq}{{(d_1+1)}}
 \newcommand{\tp}{{TP}\xspace}
 \newcommand{\mmax}{{m_\text{max}}}
 \newcommand{\mpmax}{{m'_\text{max}}}
 \newcommand{\psuper}[2]{{\mathrm{p}(#1)}}
 \newcommand{\pshort}[1]{\mathrm{p}({#1})}
 \newcommand{\ordot}{\tilde{\ordo}}
 \newcommand{\ordo}{\mathcal{O}}
 \newcommand{\Sr}{R_{H}}
 \newcommand{\sr}{r_{H}}
 \newcommand{\bI}{\mathbf{I}}
 \newcommand{\bJ}{\mathbf{J}}
\newcommand{\optnormconst}{\beta H}
\newcommand{\zero}{\boldsymbol{\mathbf{0}}}
  \newcommand{\pa}{\parallel}
 \newcommand{\miota}{{\thetabound^{-2}}}
\newcommand{\pd}{\text{PD}}
\newcommand{\pdseq}{\text{PD}^H}
\newcommand{\psd}{\text{PSD}}
\newcommand{\mge}{\succeq}
\newcommand{\mle}{\preceq}
\newcommand{\thetabound}{L_2}
\newcommand{\featurebound}{L_1}
\newcommand{\precondbound}{L_3}
\DeclareMathOperator{\Tr}{Tr}
\DeclareMathOperator{\Proj}{Proj}
\newcommand{\phio}{{\phi}}  %
\newcommand{\I}[1]{\mathbbm{1}\left\{#1\right\}}
\newcommand{\event}[1]{{\mathcal{E}_{#1}}}
\newcommand{\traj}[1]{s_{#1}\!\!\shortrightarrow}
\renewcommand{\tp}{{\otimes}}
\DeclareMathOperator{\trace}{Tr}
\DeclareMathOperator{\kernel}{Ker}
\DeclareMathOperator{\image}{Im}
\DeclareMathOperator{\range}{range}
\DeclareMathOperator{\clip}{clip}
\DeclareMathOperator{\ord}{ord}
\DeclareMathOperator{\stage}{stage}
\newcommand{\qpieleanor}{{\textsc{SkippyEleanor}}\xspace}
\newcommand{\skippypolicy}{{\textsc{SkippyPolicy}}\xspace}
\algnewcommand{\algorithmicgoto}{\textbf{goto}}%
\algnewcommand{\Goto}[1]{\algorithmicgoto~\ref{#1}}%
\algnewcommand{\Break}{\textbf{break}}%
\algnewcommand{\Initialize}[1]{%
  \State \textbf{Initialize:}
  \Statex \hspace*{\algorithmicindent}\parbox[t]{.8\linewidth}{\raggedright #1}
}
\algnewcommand{\Inputs}[1]{%
  \State \textbf{Inputs:}
  \Statex \hspace*{\algorithmicindent}\parbox[t]{.8\linewidth}{\raggedright #1}
}
\renewcommand{\phi}{\varphi}
\newcommand{\Q}{Q}
\renewcommand{\P}{\mathop{\mathcal{P}}}
\newcommand{\F}{\mathcal{F}}
\DeclareMathOperator{\Dists}{\mathcal{M}_1}
\DeclareMathOperator{\poly}{poly}
\DeclareMathOperator{\polylog}{polylog}
\definecolor{emerald}{rgb}{0.31, 0.78, 0.47}
\newtheorem{theorem}{Theorem}[section]
\newtheorem{proposition}[theorem]{Proposition}
\newtheorem{problem}[theorem]{Problem}
\newtheorem{optproblem}[theorem]{Optimization Problem}
 \newtheorem{lemma}[theorem]{Lemma}
 \newtheorem{corollary}[theorem]{Corollary}
 \newtheorem{definition}[theorem]{Definition}
\newtheorem{assumption}[theorem]{Assumption}
 \newtheorem{remark}[theorem]{Remark}
\newcommand{\abs}[1]{\left|#1\right|}
\newcommand{\E}{\mathop{\mathbb{E}}}
\newcommand{\ip}[1]{\left\langle #1 \right\rangle}
\newcommand{\norm}[1]{\left\|#1\right\|}
\newcommand{\R}{\mathbb{R}}
\newcommand{\N}{\mathbb{N}}
\newcommand{\cA}{\mathcal{A}}
\newcommand{\cP}{\mathcal{P}}
\newcommand{\cR}{\mathcal{R}}
\newcommand{\cS}{\mathcal{S}}
\newcommand{\one}[1]{\mathbb{I}\{#1\}}
\renewcommand{\epsilon}{\varepsilon}
\newcommand{\ceil}[1]{\left\lceil {#1} \right\rceil}
\DeclareMathOperator*{\argmin}{arg\ min}
\DeclareMathOperator*{\argmax}{arg\ max}
\newif\ifsup\suptrue
 \let\Ginclude@graphics\@org@Ginclude@graphics
\title{
Online RL in Linearly $q^\pi$-Realizable MDPs Is as Easy as in Linear MDPs If You Learn What to Ignore
}
\author{%
 Gell\'ert Weisz\\
 Google DeepMind, London, UK\\
 University College London, London, UK\\
 \And
 Andr\'as {Gy}\"orgy\\
 Google DeepMind, London, UK
 \And
 {Cs}aba {Sz}epesv\'ari\\
 Google DeepMind, Montreal, Canada\\
 University of Alberta, Edmonton, Canada\\
}
\begin{document}

\maketitle

\begin{abstract}
We consider online reinforcement learning (RL) in episodic Markov decision processes (MDPs) under the  linear $q^\pi$-realizability assumption, where it is assumed that the action-values of all policies can be  expressed as linear functions of state-action features. This class is known to be more general than  linear MDPs, where the transition kernel and the reward function are assumed to be linear functions of the feature vectors. As our first contribution, we show that the difference between the two classes is the presence of states in linearly $q^\pi$-realizable MDPs where for any policy, all the actions have  approximately equal values, and skipping over these states by following an arbitrarily fixed policy in those states transforms the problem to a linear MDP. Based on this observation, we derive a novel (computationally inefficient) learning algorithm for linearly $q^\pi$-realizable MDPs that simultaneously learns what states should be skipped over and runs another learning algorithm on the linear MDP hidden in the problem. The method returns an $\epsilon$-optimal policy after $\polylog(H, d)/\epsilon^2$ interactions with the MDP, where $H$ is the time horizon and $d$ is the dimension of the feature vectors, giving the first polynomial-sample-complexity online RL algorithm for this setting. The results are proved for the misspecified case, where the sample complexity is shown to degrade gracefully with the misspecification error.
\end{abstract}

\section{Introduction}
We consider reinforcement learning where an agent interacts in an online fashion with an environment modeled  as a Markov decision process: The agent, observing a state, takes an action that results in a random next state and reward, the latter of which is to be maximized over time.
To tackle large, possibly infinite state spaces, additional structure needs to be introduced to this problem.
One such structure is a ``feature-map'' that maps state-action pairs to $d$-dimensional vectors (for some positive integer $d$) with the intention that a ``good’’  feature-map extracts  important information from the state-action pairs so that learning with this extra information becomes tractable.
An example is the case of \emph{linear MDPs} \citep{Jin_Yang_Wang_Jordan_2019},
where the assumption is that both the transition and reward functions are linearly factorizable and their left factors are given by the feature-map.
In contrast, value-based approaches, such as \emph{$q^\pi$-realizability} \citep{Du_Kakade_Wang_Yan_2019,LaSzeGe19} aim to model only the action-values with the features.
In this work, we focus on the latter, a strictly more general setting than that of linear MDPs \citep[Proposition 4]{zanette2020learning}.

There are several sample-efficient algorithms
discovering near-optimal policies in linear MDPs under various MDP access models and settings (online access: \cite{Jin_Yang_Wang_Jordan_2019}; batch setting: \cite{jin2021pessimism};
reward-free setting: \cite{wagenmaker2022reward}).
The best known sample-complexity bound for the online access model is achieved by the computationally inefficient algorithm of \cite{zanette2020learning}, called \textsc{Eleanor}, which serves as a starting point of our work.

\definecolor{LightGreen}{rgb}{0.8,1,0.8}
\definecolor{LightRed}{rgb}{1,0.8,0.8}
\definecolor{LightBlue}{rgb}{0.68, 0.85, 0.9}
\begin{table}[t]
\centering
\begin{tabular}{|c|c|c|c|c|}
\hline
&\multicolumn{2}{c|}{Online RL}&\multicolumn{2}{c|}{Planning with simulator} \\
MDP class & $\poly(\cdot)$ sample & $\poly(\cdot)$ compute & $\poly(\cdot)$ sample & $\poly(\cdot)$ compute \\ \hline
Linear MDP & \multicolumn{4}{c|}{%
\citet{Jin_Yang_Wang_Jordan_2019}} \\\hline %
$q^\pi$-realizable MDP & 
\textbf{This work} & %
Open problem & \multicolumn{2}{c|}{%
\cite{yin2022efficient}} \\\hline
\end{tabular}
\medskip
\caption{Comparison of efficiency results for linear MDPs and $q^\pi$-realizable MDPs under online RL and planing with a simulator.
{\color{black}This work} establishes that $q^\pi$-realizable MDPs are also sample efficiently solvable under online RL.
The computational complexity of this problem remains {\color{black}open}.}
\label{tab:qpio}
\end{table}

In this work we consider the setting of linearly $q^\pi$-realizable MDPs.
As opposed to linear MDPs, before this work, sample efficient solutions 
were only known for this case when the MDP is accessed through a simulator
that implements some form of a state-reset function \citep{LaSzeGe19,yin2022efficient,weisz2022confident} (\cref{tab:qpio}).
In this work we 
resolve an open problem by \cite{Du_Kakade_Wang_Yan_2019}, and
show that having access to a state-reset is not essential in this setting. 
To this end, we present \qpieleanor (\cref{alg:main}) and a corresponding theorem (\cref{thm:main}) that shows that
\qpieleanor, which uses online interactions only, is
a provably sample-efficient solution to this problem.
The rest of this paper is organized as follows.
In \cref{sec:not} we introduce the basic definitions.
In \cref{sec:qpi-to-linear} we give an insight into the difference between linear $q^\pi$-realizability and linear MDPs, which motivates our approach.
In \cref{sec:alg} we describe our algorithm and the most important technical tools we discovered for its analysis.
Notably, in \cref{sec:auxiliary-real} we establish a rich structure inherent in $q^\pi$-realizable MDPs, which acts as the technical foundation to this work, and may be of independent interest.
Finally, \cref{sec:proof-overview} gives a summary of the proof of our main result (\cref{thm:main}),
 before concluding with some notes on future work in \cref{sec:future}.

\section{Preliminaries}\label{sec:not}

For a linear subspace $X$ of $\R^d$, let $\Proj_X$ denote the orthogonal projection matrix onto $X$. %
Throughout we fix $d \in \N^+$. For $L>0$, let $\B(L)=\{x \in \R^d: \norm{x}_2 \le L\}$ denote the $d$-dimensional Euclidean ball of radius $L$ centered at the origin, where $\|\cdot\|_2$ denotes the Euclidean norm.
Let $\pd$ denote the set of positive definite matrices in $\R^{d\times d}$.
We write $a\approx_\epsilon b$ for $a,b,\epsilon\in\R$ if $|a-b|\le\epsilon$.
Let $\one{B}$ be the indicator function of a boolean-valued (possibly random) $B$ taking value $1$ if $B$ is true and $0$ if false.
Let $\Dists(X)$ denote the set of probability distributions supported on set $X$. %
The rest of our notation is standard, but described in \cref{app:not} for completeness.

For the setting of episodic finite horizon RL, with horizon $H$,
a finite-action Markov decision process (MDP) describes an environment
for sequential decision-making.
It is defined by a tuple $(\cS,[\cA],P,\cR)$ as follows.
The state space $\cS$ is split across stages: $\cS=\left(\cS_t\right)_{t\in[H]}$ with $\cS_1=\{s_1\}$ for some designated initial state $s_1$.
Without loss of generality, we assume the $\left(\cS_t\right)_{t\in[H]}$ are disjoint sets.
We define the function $\stage:\cS\to[H]$ as $\stage(s)=t$ if $s\in\cS_t$.
We consider finite action spaces of size $\cA$ for some $\cA\in \N^+$, and without loss of generality, define the set of actions to be $[\cA]:=\{1,\dots,\cA\}$.
The transition kernel is $P:\left(\bigcup_{t\in[H-1]}\cS_t\right)\times[\cA]\to \Dists(\cS)$, with the property that transitions happen between successive stages, that is,
for any $t\in[H-1]$, state $s_t\in\cS_t$, and action $a\in[\cA]$, $P(s_t,a)\in\Dists(\cS_{t+1})$.
The reward kernel is $\cR:\cS\times[\cA]\to\Dists([0,1])$.
An agent interacts sequentially with this environment in an episode lasting $H$ steps by taking some action $a\in[\cA]$ in the current state.
The environment responds by transitioning to some next-state according to $P$, and giving a reward in $[0,1]$ according to $\cR$.%
\footnote{Here, the reward and next-state are independent, given the current state and last action. Independence is nonessential and is assumed only to simplify the presentation.}

We describe an agent interacting with the MDP by a \emph{policy} $\pi$,
which, to each history of interaction (including states, actions and rewards) assigns a probability distribution over the actions. Policies where this distribution only depend on the last state in the history are called \emph{memoryless}, and these are identified with elements of the set
$\Pi=\{\pi:\cS\to \Dists([\cA])\}$.
Using a policy $\pi$, starting at some state $s$ in an MDP induces a probability distribution over histories, which we denote by $\cP_{\pi,s}$.
For any $a\in[\cA]$, $\cP_{\pi,s,a}$ is the distribution over the histories when 
first action $a$ is used in state $s$, after which policy $\pi$ is followed.
$\E_{\bullet}$ is the expectation operator corresponding to a distribution $\P_{\bullet}$
(e.g., $\E_{\pi,s}$ is the expectation with respect to $\P_{\pi,s}$).
The state- and action-value functions $v^\pi$ and $q^\pi$ are defined as the expected total reward within the first episode while $\pi$ is used:
\begin{align*}
v^\pi(s)=\E_{\pi,s} \sum_{u=\stage(s)}^H R_u \quad\text{for }s\in\cS\,
\quad \text{ and }\quad
q^\pi(s,a)=\E_{\pi,s,a} \sum_{u=\stage(s)}^H R_u \quad\ \text{for }s\in\cS,\, a\in[\cA].
\end{align*}

Let $\pi^\star\in\Pi$ be an optimal policy, satisfying $q^{\pi^\star}(s,a)=\sup_{\pi\in\Pi}q^\pi(s,a)=\sup_{\pi\in\text{all policies}}q^\pi(s,a)$ for all $(s,a)\in\cS\times[\cA]$.
Let $q^\star(s,a)=q^{\pi^\star}(s,a)$ and $v^\star(s)=\sup_{a'\in[\cA]}q^\star(s,a)$ for all $(s,a)$. %

\section{From linear $q^\pi$-realizability to linear MDPs}\label{sec:qpi-to-linear}

As described in the introduction, we endow our MDP with a feature map $\phi:\cS\times[\cA]\to\B(\featurebound)$ for some $L_1>0$.
For reference, we start with a definition of linear MDPs with a parameter norm bound $\thetabound>0$, formalizing that the transition kernel and the expected rewards are approximately linear functions of the features:\footnote{Compared to the definition of \citet{jin2020provably}, our definition does not require
the existence of a vector-valued measure to represent the transition kernel.
This is a generalization that is compatible with all existing algorithms for linear MDPs.}
\begin{definition}\label{def:lin-mdp}[$\kappa$-approximately linear MDP]
For any $\kappa\le1$, an MDP 
is a $\kappa$-approximately linear MDP %
if {\em (i)} there exists 
$\theta_1,\dots,\theta_H \in \B(\thetabound)$ such that 
for any $h\in [H]$ and $(s, a)\in\cS_h\times[\cA]$,
$\abs{\E_{R\sim \cR(s,a)} R-\ip{\phi(s,a),\theta_h}}\le\kappa$
and {\em (ii)} for  any $f:\cS \to [0,H]$ and $h\in[H-1]$,
there exists
$\theta'_h\in \B(\thetabound)$ 
such that for all $(s, a)\in\cS_h\times[\cA]$,
$\abs{\E_{S'\sim P(s,a)} f(S')-\ip{\phi(s,a),\theta'_h}}\le\kappa$.
\end{definition}
A key consequence of the linear MDP assumption is that the \emph{inherent Bellman error}
\[\sup_{\theta_{h+1}\in\B(\thetabound)}\inf_{\theta_h\in\B(\thetabound)}
\sup_{(s,a)\in\cS_h\times[\cA]}
\Big|
\E_{
R\sim\cR(s,a), %
S'\sim P(s,a)
}%
R(s,a)+\max_{a'\in[\cA]}\ip{\phi(S',a'),\theta_{h+1}}
- 
\ip{\phi(s,a),\theta_h} 
\Big|
\,,
\]
scales with the misspecification $\kappa$.
This property is also referred to as the \emph{closedness to the Bellman operator}, and is a crucial component in the analysis of approximation errors for algorithms tackling linear MDPs.

In this work we consider a weaker linearity assumption where we only assume that the action-value functions are approximately linear:
\begin{definition}[$q^\pi$-realizability: uniform %
linear function approximation error of value-functions]
\label{def:q-pi-realizable}
Given an MDP, the uniform %
value-function approximation error (or misspecification) induced by
a feature map $\phi:\cS\times[\cA]\to\B(\featurebound)$,
over a set of parameters in $\B(\thetabound)$ is
\[
\eta = \sup_{\pi\in\Pi}%
\max_{h\in[H]} \inf_{\theta^{(h)}\in\B(\thetabound)} \sup_{(s,a)\in\cS_h\times[\cA]} \abs{ q^\pi(s,a) - \ip{\phi(s,a),\theta^{(h)}} }\,.
\]
For the MDP and the corresponding feature map, for all $h\in[H]$ fix any
$\theta_h:\Pi\to\B(\thetabound)$
mapping each memoryless policy $\pi\in \Pi$ to its ``parameter'',
such that %
\begin{align}\label{eq:near-realizable}
q^\pi(s,a) \approx_\eta \ip{\phio(s,a), \theta_h(\pi)}
\quad\quad\text{for all $\pi\in\Pi$, $s\in\cS_h$, and $a\in[\cA]$}
\,.
\end{align}
The set of all parameters $\Theta_h\subseteq\B(\thetabound)$ for a stage $h\in[H]$ is given by $\Theta_h = \left\{\theta_h(\pi)\,:\, \pi\in\Pi\right\}\,$.
\end{definition}
Note that $\theta_h$ satisfying \cref{eq:near-realizable} always exist \citep[Appendix C]{weisz2022confident}.
We focus on the feasible regime where $\eta$ is polynomially small in the relevant parameters.
Specifically, we assume that $\eta$ is bounded according to \cref{eq:eta-small-assumption}.
The main problem of interest in this work is the following:

\begin{problem}[informal]\label{prob:main}
For any $\epsilon, \zeta>0$ and any MDP with corresponding uniform %
value-function approximation error $\eta$,
derive an algorithm that,
with probability at least $1-\zeta$,
will
find an $\epsilon$-optimal policy (i.e., a policy $\pi$ such that $v^\pi(s_1)\ge v^\star(s_1)-\epsilon$)
by interacting with the MDP online for $T$ steps with $T$
bounded by a polynomial function of $(d,H,\epsilon^{-1},\log\zeta^{-1},\log \featurebound,\log\thetabound)$.
That the interaction with the MDP is online means that it is only possible
to observe the features corresponding to the current state, and to take an action and subsequently observe the resulting reward and next state, which then becomes the current state. We consider the fixed horizon episodic setting, that is, the next state is reset to the initial state $s_1$ after every $H$ steps.
\end{problem}

Algorithms developed for linear MDPs are not directly applicable to \cref{prob:main} when the MDP is only $q^\pi$-realizable:
While a linear MDP is also $q^\pi$-realizable, 
a $q^\pi$-realizable MDP may be neither a linear MDP, nor one with a low inherent Bellman error \citep{zanette2020learning}.
As an illustrative example, \cref{fig:qpi-lin}, left shows an MDP that is $q^\pi$-realizable but not linear.
To see this, observe that the features for both actions in $s_1$ are identical, but their transitions and rewards are not.
As illustrated in the figure however, if we \emph{skip} over the red states (with identical actions) by taking the first action on them and summing up the rewards received until we reach a black state, we arrive at a linear MDP.
This serves as the main intuition behind our work:
the red states have no bearing on action-values, so they can be skipped, and the resulting MDP is linear.

More generally, we can define the
\emph{range} of any state as the maximum possible difference in action-value that the choice of action in that state can make:
\begin{align}\label{eq:range-def}
\range(s)=\sup_{\theta\in\Theta_{\stage(s)}} \max_{i,j\in[\cA]} \ip{\phio(s,i,j), \theta}%
\text{ for all } h\in[H], s\in\cS_h\,,
\end{align}
where $\phio(s,i,j)=\phio(s,i)-\phio(s,j)$ is the notation for feature differences. Clearly, the choice of action in low-range states is not too important, as
\begin{equation}
v^\pi(s)-q^\pi(s,a) \le \range(s)+2\eta \quad\quad\text{ for any $\pi \in \Pi$ and all  $a\in[\cA]$.} \label{eq:vstar-vs-range}
\end{equation}
Not only are the action choices in low-range states unimportant for the task of finding a near-optimal policy for the MDP,
these choices can affect transitions and rewards in a nonlinear way.
Interestingly, the existence of low-range states is the reason why $q^\pi$-realizable MDPs are not necessarily linear, as shown by the next result (proved in \cref{proof:norange-linear-mdp}), which follows easily from \cref{lem:admissible-realizability}.
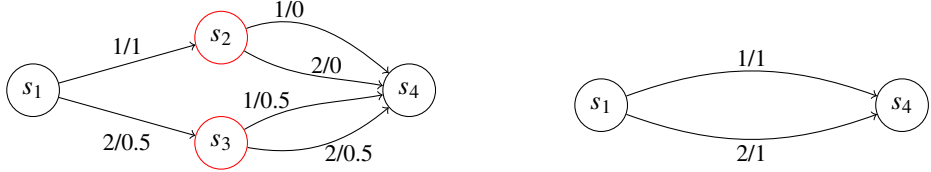
\begin{figure}
  \begin{subfigure}{0.5\textwidth}\centering
\begin{tikzpicture}[node distance=2.5cm,
fatnode/.style={draw,circle, minimum size=7mm}]
  \node[fatnode] (start) {$s_1$};
  \node[fatnode] [right of=start,yshift=0.7cm,draw=red] (child1) {$s_2$};
  \node[fatnode] [right of=start,yshift=-0.7cm,draw=red] (child2) {$s_3$};
  \node[fatnode] [right of=child2,yshift=0.7cm] (end) {$s_4$};

  \draw[->] (start) -- (child1) node[midway,above] {\small 1/1};
  \draw[->] (start) -- (child2) node[midway,below,yshift=-0.1cm] {\small 2/0.5};
  \draw[->] (child1) to [out=20,in=140] (end) node[midway,above,xshift=3.4cm,yshift=0.85cm] {\small 1/0};
  \draw[->] (child1) to [out=-30,in=170] (end) node[midway,above,xshift=3.9cm,yshift=0.1cm] {\small 2/0};
  \draw[->] (child2) to [out=30,in=-170] (end) node[midway,below,xshift=3.1cm, yshift=0.1cm] {\small 1/0.5};
  \draw[->] (child2) to [out=-10,in=-140] (end) node[midway,below,xshift=4.2cm, yshift=-0.6cm] {\small 2/0.5};
\end{tikzpicture}
  \end{subfigure}%
  \hspace*{\fill}   %
  \begin{subfigure}{0.5\textwidth}\centering
\begin{tikzpicture}[node distance=1.5cm,
fatnode/.style={draw,circle, minimum size=7mm}]
  \node[fatnode] (start) {$s_1$};
  \node[fatnode] [right of=child2,yshift=0.7cm] (end) {$s_4$};

  \draw[->] (start) to [out=340,in=200] (end) node[midway,above,xshift=2cm,yshift=0.4cm] {\small 1/1};
  \draw[->] (start) to [out=20,in=160] (end) node[midway,below,xshift=2cm,yshift=-0.4cm] {\small 2/1};
\end{tikzpicture}
  \end{subfigure}%
\caption{\textbf{Left:}
MDP with deterministic transitions and rewards (edges are labeled with action/reward).
\textbf{Right:}
The same MDP with the red ``low-range'' states ``skipped'' over.
$\phi(s_1,\cdot)=(1), \phi(s_3,\cdot)=(0.5),\phi(\cdot,\cdot)=(0)$ otherwise.
Both MDPs are $q^\pi$-realizable, but only the right MDP is linear.}  \label{fig:qpi-lin}
\end{figure}

\begin{proposition}\label{prop:norange-linear-mdp}
Consider an MDP with uniform value-function approximation error $\eta \ge 0$.
If there are no states $s\in\cS$ with $\range(s)<\alpha$ for some $\alpha>0$, then the transitions and rewards of the MDP are linear (\cref{def:lin-mdp}) with misspecification scaling with $\eta$, and parameter norms scaling inversely with $\alpha$.
\end{proposition}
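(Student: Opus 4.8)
\emph{Proof plan for \cref{prop:norange-linear-mdp}.}
The plan is to verify the two defining conditions of a $\kappa$-approximately linear MDP (\cref{def:lin-mdp}) directly, with $\kappa=\Theta\!\left(\eta\,\poly(H)/\alpha\right)$ and with the parameter balls enlarged to radius $\Theta\!\left(\poly(H)\,\thetabound/\alpha\right)$. Everything reduces to the following claim $(\star)$: for every $h\in[H-1]$ and every $f:\cS_{h+1}\to[0,H]$, the map $(s,a)\mapsto\E_{S'\sim P(s,a)} f(S')$ on $\cS_h\times[\cA]$ is realized, up to additive error $O(\eta H/\alpha)$, by a parameter of norm $O(H\thetabound/\alpha)$. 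Indeed, $(\star)$ is exactly condition (ii) of \cref{def:lin-mdp}. Condition (i), linearity of the expected reward, then follows by combining $(\star)$ with \cref{def:q-pi-realizable}: fixing any policy $\pi$, for $h<H$ we have $\E_{R\sim\cR(s,a)}R=q^\pi(s,a)-\E_{S'\sim P(s,a)} v^\pi(S')$ on $\cS_h\times[\cA]$, where the first term is $\eta$-realized with norm $\thetabound$ and the second is covered by $(\star)$ applied to $f=v^\pi\in[0,H]$; the last stage $h=H$ is immediate since there $\E_{R\sim\cR(s,a)}R=q^\pi(s,a)$.

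To prove $(\star)$ I would pass to \emph{differences} of value functions, which cancels the policy-independent reward term. Fix $h<H$. For two memoryless policies $\pi_1,\pi_2$ that agree on stages $1,\dots,h$, subtracting the approximate identity \cref{eq:near-realizable} written at stage $h$ for $\pi_1$ and for $\pi_2$ gives, for all $(s,a)\in\cS_h\times[\cA]$,
\[
\E_{S'\sim P(s,a)}\bigl[(v^{\pi_1}-v^{\pi_2})(S')\bigr]\ \approx_{2\eta}\ \ip{\phio(s,a),\,\theta_h(\pi_1)-\theta_h(\pi_2)}\,,
\]
so $(s,a)\mapsto\E_{S'\sim P(s,a)}[(v^{\pi_1}-v^{\pi_2})(S')]$ is $2\eta$-realized with norm $2\thetabound$. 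Hence it suffices to match the target: to find a scaling factor $c=\Theta(\alpha/H)$ and memoryless policies $\pi_1,\pi_2$ agreeing on stages $1,\dots,h$ such that $v^{\pi_1}(s')-v^{\pi_2}(s')\approx_{O(\eta)} c\,f(s')$ for \emph{all} $s'\in\cS_{h+1}$ simultaneously; dividing by $c$ then yields $(\star)$, with the rescaling by $1/c$ inflating the error and the norm by the factor $\Theta(H/\alpha)$.

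This is where the hypothesis ``no state has $\range<\alpha$'' and \cref{lem:admissible-realizability} do the work. The hypothesis gives, at every $s'\in\cS_{h+1}$, at least $\alpha$ of room in the action-values --- for instance one checks from \cref{eq:range-def} and \cref{eq:near-realizable} that $v^\star(s')-\inf_{\pi\in\Pi}v^\pi(s')\ge\range(s')-2\eta\ge\alpha-2\eta$ --- but this is only a pointwise, per-state statement, and naively choosing ``upper'' and ``lower'' action levers at each $s'$ fails because the continuation policy on stages $\ge h+2$ is shared by many states of $\cS_{h+1}$ and cannot be tuned per state. \cref{lem:admissible-realizability} is precisely the structural tool that upgrades the pointwise room into the required simultaneous statement: that a single pair of memoryless policies can drive the stage-$(h+1)$ value function to any prescribed profile $c\,f$ (for $c=\Theta(\alpha/H)$) over all of $\cS_{h+1}$, up to $O(\eta)$. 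I would invoke it as a black box; given the lemma, $(\star)$, and hence \cref{prop:norange-linear-mdp}, follow from the reductions above, with constants accumulating to $\kappa=\Theta(\eta\,\poly(H)/\alpha)$ and parameter norms $\Theta(\poly(H)\,\thetabound/\alpha)$.

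The main obstacle is therefore exactly this simultaneous-control step --- moving from ``each state separately has large range'' to ``one pair of memoryless policies realizes a prescribed rescaled target over the entire stage.'' Everything else (the reduction of (i) to (ii), the value-function-difference identity, and the $\Theta(H/\alpha)$ bookkeeping from rescaling) is routine.
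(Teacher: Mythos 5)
Your proposal is correct and follows essentially the same route as the paper: condition (ii) is obtained by observing that any $f:\cS_{h+1}\to[0,H]$ is $\Theta(\alpha/(\sqrt{d}H))$-admissible when every state has range at least $\alpha$ (via \cref{prop:opt-design-gap-and-vstar-gap} to pass from $\range$ to $\range_\Q$) and then invoking \cref{lem:admissible-realizability} as a black box, while condition (i) follows from the decomposition $\E_{R\sim\cR(s,a)}R=q^\pi(s,a)-\E_{S'\sim P(s,a)}v^\pi(S')$ with $f=v^\pi$. Your intermediate discussion of value-function differences of policy pairs is in fact a restatement of the internal mechanism of \cref{lem:admissible-realizability} rather than an additional step you need.
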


\paragraph{Our approach.} The above result immediately offers a strategy to learn under the (linear) $q^\pi$-realizability assumption. Assuming access to an oracle that can determine whether or not $\range(s)<\alpha$ for any state $s$, the MDP could be ``converted'' to one that has no low-range states but has near-identical state and action-value functions of any policy (compared to the original MDP), by skipping over low-range states %
(by executing an arbitrary action) until a state with a range at least $\alpha$ is reached.
We will call such a multi-state transition a \emph{skippy step} and refer to such a policy as a \emph{skippy policy}.
The reward presented for a skippy step is the cumulative reward over the skipped states. 
When the oracle is correct, the new MDP is a linear MDP, allowing techniques such as \textsc{Eleanor} to efficiently learn a near-optimal policy.
This conversion argument is part of the intuition of our method, but it is not strictly part of the proof,
so we defer the details to \cref{app:skipconvert}.
The only missing piece for solving the general case, \cref{prob:main}, is learning an oracle that can suggest when to skip over a state, and combining it with the learning algorithm for the linear MDP.
This general approach leads to our algorithm, \qpieleanor, 
which runs a modified version of \textsc{Eleanor} with guessed oracles. During the algorithm, we detect when an incorrect oracle leads to suboptimal results, and refine the oracle accordingly. The details of the algorithm are explained in the next section.

\section{Algorithm}\label{sec:alg}

In this section we present our main results following our plan outlined above. We first give
\cref{alg:main}, along with
a high-level overview of the algorithm; the details are explained throughout the section.
The parameters of the algorithm are presented in \cref{app:alg-params}.

\begin{algorithm}[ht]
\caption{
\qpieleanor
}\label{alg:main}
\begin{algorithmic}[1]
\State \textbf{Input:} accuracy $\epsilon>0$, failure probability $\zeta>0$
\State Initialize $m\gets0,$ $m'\gets0$, $\Q_h=\thetabound I$
for $h\in[H]$, $\pi^0=(s\mapsto 1)$
\While{$m'\le \mpmax$}\label{line:while}
	\State $m\gets m+1$, $m'\gets m'+1$ \Comment{$m'$ also counts iterations repeated due to Line~\ref{line:redo2}}
	\State Estimate optimistic problem parameters $\hat G, \bar\theta$ by solving \cref{def:opt-problem} \label{line:opt}
	\For{$k\in[H]$} \label{line:k-for} %
		\State Let $\pi^{mk}$ be the policy defined by $\skippypolicy(\hat{G},\bar\theta,k\!)$
		\State Sample $n$ trajectories by executing
		 $\pi^{mk}$ from $s_1$ for $n$ episodes\!\! \label{line:execute-traj}
		 \State Record data $(S^{mkj}_h,A^{mkj}_h,R^{mkj}_h)_{h \in [H], j \in [n]}$ and stage-mapping functions $(p^{mkj})_{j \in [n]}$ \label{line:stage-map-def} %
	\EndFor
 	\State Solve \cref{def:consistency-opt} with input $(\hat G,\bar\theta)$,
	 \label{line:consopt}\Comment{Consistency check}
	\Statex\hspace{0.46cm} record its value $x$ (maximum discrepancy), and arguments $v$ (direction) and $i$ (stage). 
	\State Calculate useful component $w\gets \Proj_{Z(\Q,i)} v$ \Comment{\cref{def:valid-preconditions}}
	\If{$x>\text{discrepancy\_threshold}$} %
		\State $\Q_i\gets \left(\Q_i^{-2}+\Q_i^{-1}ww^\top\Q_i^{-1}\right)^{-\frac12}$ \label{line:decoydetect} \Comment{append $\Q_i^{-1}w$ to $C_i$ according to \cref{eq:q-form}}
		\State $m\gets m-1$ \Comment{redo this iteration}\label{line:redo2}
		\State \textbf{continue}
	\EndIf
	\If{$\text{average\_uncertainty} \le \text{uncertainty\_threshold}$} \label{line:cons-check-pass} %
		\State \Return policy $\pi^{mH}$ \label{line:return}
	\EndIf
\EndWhile
\end{algorithmic}
\end{algorithm}

\begin{algorithm}[ht]
\caption{
\skippypolicy
}\label{alg:skippy}
\begin{algorithmic}[1]
\State Input: $\hat G, \bar\theta,k$
	\State Initialize $S_1\gets s_1$, $j\gets 1$, $\pi^0\gets (s\mapsto 1)$, stage mapping $p$
	\For{$i=1$ to $H$}
		\State Compute skip probabilities $\tau_i\gets\tau_{\hat G}(S_i)$ and non-skip action $a^+\gets \pi^+_{\bar\theta}(S_i)$ from \cref{eq:tau-pi+-def}
		\State Sample independently $B_i \sim \text{Bernoulli}(\tau_i)$
		\If{$B_i=0$}
		    $A_i\gets 1$ \Comment{skip (follow $\pi^0$) with probability $1-\tau_i$} %
		\Else
			\State $p(j)\gets i$, $j\gets j+1$
			\If{$j\le k$}
				$A_i\gets a^+$
				(Phase I)
				\textbf{else}
				$A_i\gets 1$
				(Phase II) %
			\EndIf
		\EndIf
		\If{$i=H$}
			\State $p(j')=H+1$ for $j'=j,\dots,H$
		\EndIf
	\EndFor

\end{algorithmic}
\end{algorithm}

For every stage $h \in [H]$, the algorithm keeps a progressively refined estimate of the geometry of the parameter space $\Theta_h$, by maintaining an ever shrinking ellipsoid enclosing $\Theta_h$. This ellipsoid is parametrized by an 'inverse covariance matrix'-like quantity $\Q_h$, determined by $\ordot(d)$ vectors, which guarantees $\max_{\theta_h \in \Theta_h} \norm{\theta_h}_{\Q_h^{-2}} = \ordot(\sqrt{d})$. Looking at the definition of $\range$ in \cref{eq:range-def}, it is clear that the smaller the ellipsoid becomes, the better estimate we can give for the ranges.

Given some data collected so far and $(Q_h)_{h\in[H]}$, \qpieleanor computes optimistic estimates of the action-values by calculating an optimistic policy parameter $\bar\theta$, as well as a guess $\hat G$ to a near-optimal design which is used to estimate the range for the states (due to technical reasons, $\hat G$ will guess a near-optimal design for the transformed parameter space $Q_h^{-1} \Theta_h$).

Data is collected by running stochastic versions of skippy policies on the MDP, where the states to be skipped over are determined based on the range estimates; when a state is skipped, an action is selected using a deterministic policy $\pi^0$ that always chooses the first action in every state. To ensure that the estimation problem is smooth in terms of $\hat G$, we use a smoothed version of skippy policies, where states are skipped randomly, and the probability of skipping is larger for states with lower ranges, while high-range states are never skipped.
Similarly to \textsc{Eleanor}, we aim to estimate the action-value function of a state-action pair by adding the estimated one-step reward to the estimated value-function of the next state. However, unlike \textsc{Eleanor}, we would like to do this in the reduced MDP, where the low-range states that are skipped over are removed (and the corresponding transitions are replaced by skippy steps). Since we do not know these states in advance, we run exploratory policies that skip over next states starting from any state: 
namely, we run $\skippypolicy(\hat G, \bar\theta,k)$ for all $k \in [H]$ with a maximum number of unskipped states $k$ (Phase I), and once this is skip budget is exhausted, all remaining states are skipped over by rolling out $\pi^0$ (Phase II), which ensures that we collect enough data at every stage of the MDP to be able to estimate the one-skippy-step reward of any skipping mechanism.
Compared to \textsc{Eleanor}, this introduces an additional loop in Line~\ref{line:k-for} of \qpieleanor; see \cref{app:eleanor-perspective} for additional details.
For any execution, \skippypolicy maintains a stage-mapping function $p$, which, for any stage $h$ of the trajectory in the reduced MDP %
gives the stage index in the original MDP. In other words, $p(j)$ is the stage of the landing state of the $j^{th}$ skippy step.

Finally, we check if the data collected is consistent with our estimates $\hat G$ and $\bar\theta$, by calculating the maximal discrepancy of the estimates of the action-value difference at the last non-skipped state of $\pi^{mk}=\skippypolicy(\hat{G}, \bar\theta,k)$ and that of the fixed skipping policy $\pi^0$ in different directions in the parameter space. If the discrepancy is too large for any $k$, we add the discrepancy-maximizing direction to $\Q$ and throw away the data collected in this (i.e., the $m^{th}$) iteration; this is achieved by reducing the iteration counter $m$ by 1. On the other hand, if the discrepancy is small enough, we can guarantee that the gap between the value of $\pi^{mH}$ %
and $v^\star(s_1)$
scales with how much new information we collected, thus the algorithm can terminate returning this policy if this term is sufficiently small (which it eventually has to be).

The following theorem shows that with high probability, \qpieleanor finds a near-optimal policy after polynomially many interactions with the MDP. Rhe proof sketch is provided in \cref{sec:proof-overview}, while our method and proof strategy is explained from the perspective of \textsc{Eleanor} in \cref{app:eleanor-perspective}.
\begin{theorem}\label{thm:main}
With probability at least $1-\zeta$,
\qpieleanor interacts with the MDP for at most
$\ordot\left(H^{11}d^7/\epsilon^2\right)$ many steps,
before returning a policy $\pi$ that satisfies
$v^\star(s_1) \le v^{\pi}(s_1)+\epsilon$.
\end{theorem}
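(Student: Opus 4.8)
\textbf{Proof plan for \cref{thm:main}.} The argument combines an optimism claim with two potential (progress) arguments, all conditioned on a single high-probability concentration event. First I would fix a ``clean'' event $\cE$ on which every empirical quantity the algorithm forms from its $n$ sampled trajectories is close to its mean: the estimated one-skippy-step rewards at every stage, the empirical visitation covariances, and the discrepancy value $x$ returned by \cref{def:consistency-opt}. A union bound over the at most $\mpmax$ iterations and the $H$ inner values of $k$, together with self-normalised/Bernstein concentration, gives $\Prob{\cE}\ge 1-\zeta$ provided $n$ is polynomial in $d,H,\log(1/\zeta)$ and the chosen thresholds; all subsequent claims are made on $\cE$.

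\emph{Optimism and correctness at termination.} Using \cref{prop:norange-linear-mdp} (hence \cref{lem:admissible-realizability}) and the skip-conversion construction of \cref{app:skipconvert}, the MDP obtained by skipping over all states of range below the algorithm's threshold is an approximately linear MDP whose misspecification scales with $\eta$. Together with a near-optimal design for the transformed parameter sets $\Q_h^{-1}\Theta_h$, its true parameters form a feasible point of the optimistic program of \cref{def:opt-problem}; here I use the invariant (maintained by the update on Line~\ref{line:decoydetect}, cf. \cref{def:valid-preconditions}) that each $\Q_h$ still encloses $\Theta_h$. Hence the value implied by $(\hat G,\bar\theta)$ over-estimates $v^\star(s_1)$ up to $O(H\eta)$ plus threshold terms. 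When the loop exits on Line~\ref{line:return}, the consistency check has passed ($x\le$ discrepancy\_threshold for every $k$); a backward induction over the skippy stages of $\pi^{mH}$ then shows that along the trajectory distribution of $\pi^{mH}$ the reduction induced by the guessed oracle $\hat G$ is faithful — the estimated action-value differences at the last non-skipped states agree with the true ones up to $O(\text{threshold})$, invoking \cref{eq:vstar-vs-range} to control the skipped states — so a performance-difference decomposition for the skippy policy bounds $v^\star(s_1)-v^{\pi^{mH}}(s_1)$ by (a constant times) ``average\_uncertainty'' plus $O(H\eta)$. Since Line~\ref{line:cons-check-pass} fired, this is at most $\epsilon$ once the thresholds and $\eta$ (via \cref{eq:eta-small-assumption}) are small relative to $\epsilon$.

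\emph{Iteration count.} Two potentials bound the number of while-iterations. For \emph{refinements} (Lines~\ref{line:decoydetect}--\ref{line:redo2}): the update replaces $\Q_i^{-2}$ by $\Q_i^{-2}+(\Q_i^{-1}w)(\Q_i^{-1}w)^\top$, so $\log\det\Q_i^{-2}$ grows by $\log(1+\norm{w}_2^2)$; the update is triggered only when $x>$ discrepancy\_threshold, which via $w=\Proj_{Z(\Q,i)}v$ and the structural results of \cref{sec:auxiliary-real} forces $\norm{w}_2$ bounded away from $0$, so the potential rises by a constant each time. As $\log\det\Q_i^{-2}$ is bounded above throughout, at most $\ordot(d)$ vectors are ever appended to each $C_i$, i.e. $\ordot(dH)$ refinements in total, so $m$ and $m'$ stay within $\ordot(\poly(d,H))$ of each other. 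For \emph{productive iterations} (neither refining nor returning): then ``average\_uncertainty'' exceeds its threshold, so for some $k$ the trajectories of $\pi^{mk}$ put non-trivial mass on features with large uncertainty relative to the data collected so far; folding this into the running covariance and applying the elliptical potential lemma — now per stage and per value of $k$, as in the \textsc{Eleanor} analysis recalled in \cref{app:eleanor-perspective} — bounds the number of such iterations by a polynomial in $d$ and $H$. Thus $\mpmax=\ordot(\poly(d,H))$ suffices and the loop terminates; since each iteration runs $H$ skippy policies for $n$ episodes of length $H$, i.e. $H^2 n$ environment steps, substituting the parameter settings $n=\ordot(\poly(d,H)/\epsilon^2)$ of \cref{app:alg-params} and tracking the exponents gives $T=\ordot\!\left(H^{11}d^7/\epsilon^2\right)$.

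\emph{Main obstacle.} The hard part is making the consistency check of \cref{def:consistency-opt} simultaneously \emph{sound} — passing it must certify that the guessed skip oracle yields a reduction linear enough to carry the linear-MDP regret analysis through to an $\epsilon$ bound — and \emph{productive when it fails} — a bad guess must produce both a large discrepancy and an appended direction $\Q_i^{-1}w$ that genuinely shrinks the ellipsoid, so refinements at a fixed stage cannot recur more than $\ordot(d)$ times. Quantifying this requires the structural theory of \cref{sec:auxiliary-real} describing how $\range$ (\cref{eq:range-def}), the subspaces $Z(\Q,i)$, and the parameter sets $\Theta_h$ interact; it replaces the naive ``just skip the low-range states'' intuition of \cref{sec:qpi-to-linear} with the smoothed guess-and-refine mechanism actually implemented, and this is where essentially all the technical effort lies.
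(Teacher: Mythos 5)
Your plan matches the paper's proof in all essential respects: a union-bounded clean event, an optimism lemma showing the correct guess is feasible for \cref{def:opt-problem}, a backward induction relating the optimistic value $C_{\hat G\bar\theta}(s_1)$ to $v^{\pi^{mH}}(s_1)$ up to the average-uncertainty term, and two potential arguments (an elliptical potential for productive iterations bounding $m\le\mmax$, and a per-stage bound of $\ordot(d)$ on refinements of $\Q$ via \cref{lem:valid-precond-d1-limit,lem:decoy-find}), combined exactly as you describe to give $\ordot(H^{11}d^7/\epsilon^2)$ steps. The one place you diverge is optimism: you route it through the explicit skip-conversion to a linear MDP (\cref{prop:norange-linear-mdp}, \cref{app:skipconvert}), whereas the paper explicitly treats that conversion as intuition only and instead constructs $\bar\theta_h$ backward from the correct guess via \cref{cor:F-components-realizable}, proving feasibility by showing the specific least-squares targets $E^\to_{\hat G\bar\theta}$ (which depend on the smoothed, guess-dependent skipping $\tau$) are realizable and concentrate around $\hat\theta_h$; a literal reduction to a linear MDP would not by itself certify that $\bar\theta$ lies in the confidence ellipsoid of \cref{def:opt-problem}'s estimator, so this step would need to be reworked along the paper's lines, though it rests on the same underlying lemma (\cref{lem:admissible-realizability}).
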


\subsection{Preconditioning: the enclosing ellipsoid}\label{sec:precond}\label{sec:range}

In this section we give the technical details about the effects of using the matrix $Q_h$ describing an enclosing ellipsoid for $\Theta_h$ (see \cref{lem:theta-precond-norm-bound}) as preconditioning the features.

\begin{definition}[Valid preconditioning]\label{def:valid-preconditions}
$Q=(\Q_h)_{h\in[H]}$ is a valid preconditioning matrix sequence if
for all $h\in[H]$
\begin{align}\label{eq:q-form}
\textstyle
\Q_h=\left(\miota I
+\sum_{v\in C_h} vv^\top
\right)^{-1/2}
\end{align}
for some
sequence $C_h=(v_1,\dots,v_n)$ of vectors in $\R^d$ %
such that for all $1\le i\le n$,
\begin{align}\label{eq:precond-lownorm}
\textstyle
\sup_{\theta\in\Theta_h} \left|\ip{\theta, v_i}\right| \le 1 %
\quad\text{and}\quad
\norm{\left(\miota I + \sum_{j=1}^{i-1}v_j v_j^\top\right)^{-\frac12} v_i}_2^2\ge \frac12
\quad\text{and}\quad
\norm{v}_2\le \precondbound\,,
\end{align}
where %
$\precondbound$ is some fixed
polynomial of the problem parameters
$(d,H,\epsilon^{-1},\log\zeta^{-1},\log \featurebound,\log\thetabound)$.
(see \cref{eq:precondbound-choice} for its precise value).

For a valid preconditioning $\Q$ and some $h\in[H]$,
let $Z(\Q,h)$ be the linear subspace spanned by those eigenvectors of $Q$ whose corresponding eigenvalues are at least $\precondbound^{-2}$.
Let $\Proj_{Z(\Q,h)}$ be the orthogonal projection matrix onto this subspace.
\end{definition}

Sometimes it will be convenient to \emph{precondition} the features and parameters so that the enclosing ellipsoid is transformed to a ball of controlled radius (as \cref{lem:theta-precond-norm-bound} will show).
To this end, introduce for all $h\in[H]$ and $(s,a,b)\in\cS_h\times[\cA]\times[\cA]$ the following:\footnote{Note that $Q_h, h \in [H]$ is invertible by construction.}
\begin{align}\begin{split}\label{eq:defs-related-to-q}
\phi_\Q(s,a)=\Q_{h}\phi(s,a),\qquad
\phi_\Q(s,a,b)=\Q_{h}\phi(s,a,b) \\
\theta_h^\Q(\pi)=\Q_h^{-1} \theta_h(\pi),\qquad
\Theta_h^\Q=\left\{\theta_h^\Q(\pi) \,:\, \pi\in\Pi  \right\} = \left\{\Q_h^{-1} \theta\,:\, \theta \in \Theta_h\right\} \\ %
\hat q^\pi(s,a)=\ip{\phi(s,a),\theta_h(\pi)} = \ip{\phi_\Q(s,a), \theta_h^\Q(\pi)}
\text{ for all } \pi\in\Pi\,.
\end{split}\end{align}

The next lemma (proved in \cref{proof:theta-precond-norm-bound}) shows that for all $h \in [H]$, $\Q_h$ defines an enclosing ellipsoid for $\Theta_h$; that is, $\Theta_h \subset \{\theta: \norm{\theta}_{\Q_h^{-2}} \le \sqrt{d_1+1}\}$.
\begin{lemma}\label{lem:theta-precond-norm-bound}
Let $d_1\!=\! 4d\log(1\!+\!16\precondbound^4\thetabound^4)\!=\!\ordot(d).$ Then,
for any valid preconditioning $\Q$ and $h\in[H]$,
\[
\textstyle
\sup_{\theta\in\Theta_h}\norm{\theta}_{\Q_h^{-2}}
=\sup_{\theta\in\Theta_h^\Q}\norm{\theta}_2\le \dfour \,.
\]
\end{lemma}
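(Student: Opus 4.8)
The plan is to bound $\norm{\theta}_{\Q_h^{-2}}^2 = \norm{\Q_h^{-1}\theta}_2^2$ for an arbitrary $\theta\in\Theta_h$ by using the defining structure \eqref{eq:q-form}, namely $\Q_h^{-2} = \miota I + \sum_{v\in C_h} vv^\top$. Expanding, $\norm{\theta}_{\Q_h^{-2}}^2 = \miota\norm{\theta}_2^2 + \sum_{v\in C_h}\ip{\theta,v}^2$. The first term is at most $\miota\thetabound^2 = \thetabound^{-2}\thetabound^2 = 1$ since $\theta\in\Theta_h\subseteq\B(\thetabound)$ (recall $\miota = \thetabound^{-2}$). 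Each summand $\ip{\theta,v}^2$ is at most $1$ by the first condition in \eqref{eq:precond-lownorm}, so the whole sum is at most $|C_h|$. Hence it suffices to bound $|C_h|$, the number of vectors appended to the enclosing ellipsoid, by $d_1 = 4d\log(1+16\precondbound^4\thetabound^4)$; then $\norm{\theta}_{\Q_h^{-2}}^2 \le 1 + d_1 = d_1+1$, giving the claimed $\dfour$.

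To bound $|C_h|$ I would use a standard potential/volume argument driven by the second condition in \eqref{eq:precond-lownorm}. Write $M_i = \miota I + \sum_{j=1}^{i-1} v_j v_j^\top$, so $M_{i+1} = M_i + v_i v_i^\top$ and, by the matrix determinant lemma, $\det M_{i+1} = \det M_i \cdot (1 + v_i^\top M_i^{-1} v_i) = \det M_i\cdot(1 + \norm{M_i^{-1/2}v_i}_2^2) \ge \det M_i\cdot(1 + \tfrac12) = \tfrac32\det M_i$. Iterating from $M_1 = \miota I$ (with $\det M_1 = \miota^d$) gives $\det M_{n+1} \ge (3/2)^n \miota^d$. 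On the other hand, the eigenvalues of $M_{n+1}$ are bounded above: $M_{n+1} \preceq \miota I + \sum_{j=1}^n v_j v_j^\top$ and, using $\norm{v_j}_2\le\precondbound$ together with $\norm{M_j^{-1/2}v_j}_2^2\ge 1/2$ — which forces the largest eigenvalue of $\sum_j v_jv_j^\top$ to grow in a controlled way — one gets $\norm{M_{n+1}}_{\mathrm{op}} \le \miota + $ (a polynomial bound). More directly, each $v_j v_j^\top \preceq \precondbound^2 I$ but summing $n$ of these is too crude; instead one uses that whenever $v_i$ is appended it has $\norm{M_i^{-1/2}v_i}_2^2\ge 1/2$, i.e. it carries "new information", and $\det M_{n+1}\le (\trace M_{n+1}/d)^d \le (\miota + n\precondbound^2/d)^d$. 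Combining the lower and upper bounds on $\det M_{n+1}$ yields $(3/2)^n \le (1 + n\precondbound^2/(d\miota))^d = (1 + n\precondbound^2\thetabound^2/d)^d$, and a routine manipulation (using $\log(3/2)\ge $ const and that $n\le$ something polynomial makes the right side at most $(n\precondbound^2\thetabound^2)^d$ up to constants) gives $n = O(d\log(\precondbound\thetabound))$; chasing the constants carefully lands on $n\le 4d\log(1+16\precondbound^4\thetabound^4) = d_1$.

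Finally I would assemble: $\sup_{\theta\in\Theta_h}\norm{\theta}_{\Q_h^{-2}}^2 \le 1 + |C_h| \le 1 + d_1$, hence $\sup_{\theta\in\Theta_h}\norm{\theta}_{\Q_h^{-2}} \le \sqrt{d_1+1} = \dfour$; the identity $\norm{\theta}_{\Q_h^{-2}} = \norm{\Q_h^{-1}\theta}_2$ and the definition of $\Theta_h^\Q$ in \eqref{eq:defs-related-to-q} give the stated equality $\sup_{\theta\in\Theta_h}\norm{\theta}_{\Q_h^{-2}} = \sup_{\theta\in\Theta_h^\Q}\norm{\theta}_2$. I expect the main obstacle to be pinning down the exact constant $4d\log(1+16\precondbound^4\thetabound^4)$ in the bound on $|C_h|$: the volume argument easily gives $O(d\log(\precondbound\thetabound))$, but matching the precise form requires carefully tracking how the upper bound on $\det M_{n+1}$ (via trace, using $\norm{v}_2\le\precondbound$ and $\miota=\thetabound^{-2}$) interacts with the $(3/2)^n$ growth, and verifying that the stated $d_1$ is indeed a valid upper bound rather than just the right order of magnitude.
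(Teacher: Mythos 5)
Your proposal is correct and follows essentially the same route as the paper: the quadratic-form expansion $\norm{\theta}_{\Q_h^{-2}}^2 \le \miota\thetabound^2 + |C_h| \le 1+d_1$ is exactly the paper's argument, and your determinant-growth bound on $|C_h|$ is the same elliptical-potential argument the paper factors out into \cref{lem:valid-precond-d1-limit} (invoking Lemma 19.4 of \citet{LaSze19:book} and the $\log(1+x)\le\sqrt{x}$ bootstrap to land on $d_1=4d\log(1+16\precondbound^4\thetabound^4)$). The constant-chasing you leave open does go through as stated.
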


\vspace{-2mm}Clearly, every time a new vector is added to $C_h$, the enclosing ellipsoid $\{\theta: \norm{\theta}_{\Q_h^{-2}} \le \sqrt{d_1+1}\}$ 
shrinks (as a positive semidefinite matrix is added to $\Q_h^{-2}$). %
The following lemma (also proved in \cref{proof:valid-precond-d1-limit}) uses an elliptical potential argument to bound the number of times this can happen. %
\begin{lemma}\label{lem:valid-precond-d1-limit}
For any valid preconditioning $\Q$, for all $h\in[H]$, the length of sequence $C_h$ corresponding to $\Q_h$ according to \cref{def:valid-preconditions} is at most $d_1$.
\end{lemma}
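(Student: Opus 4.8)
The plan is to run the standard elliptical-potential counting argument on the matrices $\Q_h^{-2}$. Fix $h\in[H]$ and write $C_h=(v_1,\dots,v_n)$; the goal is to bound $n$. Set $M_0=\miota I$ and $M_i=M_{i-1}+v_iv_i^\top$, so that $M_i\succeq\miota I\succ0$ is invertible and $M_n=\Q_h^{-2}$ by \cref{eq:q-form}. I will sandwich $\det M_n$ between an exponential lower bound and a polynomial upper bound and then solve for $n$; only the second and third requirements in \cref{eq:precond-lownorm} will be used.

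For the lower bound, the matrix determinant lemma gives
\[
\det M_i=\det M_{i-1}\,\bigl(1+v_i^\top M_{i-1}^{-1}v_i\bigr)=\det M_{i-1}\,\bigl(1+\norm{M_{i-1}^{-1/2}v_i}_2^2\bigr)\ge\tfrac32\det M_{i-1}\,,
\]
where the last inequality is exactly the second condition in \cref{eq:precond-lownorm} (recall $M_{i-1}=\miota I+\sum_{j<i}v_jv_j^\top$). Iterating from $\det M_0=\miota^{\,d}=\thetabound^{-2d}$ yields $\det M_n\ge(3/2)^n\thetabound^{-2d}$.

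For the upper bound, $M_n$ is positive definite with $\Tr M_n=d\miota+\sum_{i=1}^n\norm{v_i}_2^2\le d\thetabound^{-2}+n\precondbound^2$ by the norm bound $\norm{v_i}_2\le\precondbound$, so AM--GM applied to the eigenvalues of $M_n$ gives $\det M_n\le(\Tr M_n/d)^d\le(\thetabound^{-2}+n\precondbound^2/d)^d$. Combining the two bounds, taking $d$-th roots and multiplying by $\thetabound^2$, gives $(3/2)^{n/d}\le1+(n/d)\precondbound^2\thetabound^2$, i.e., with $x:=n/d$ and $c:=\precondbound^2\thetabound^2$,
\[
x\log(3/2)\le\log(1+cx)\,.
\]
The left side is linear in $x$ and the right side concave, they agree at $x=0$, and the right side has the larger derivative there (since $c\ge1>\log(3/2)$ under the standing assumptions on $\precondbound,\thetabound$), so this inequality can hold only for $x$ up to the unique positive crossing point $x^\star$. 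Hence it suffices to check the reverse inequality at $x_0:=4\log(1+16c^2)$, i.e. $(1+16c^2)^{4\log(3/2)}\ge1+4c\log(1+16c^2)$: since $4\log(3/2)>1$, the left side grows superlinearly in $c$ while the right side is $O(c\log c)$, so it holds for all $c\ge1$ by a short explicit estimate. This forces $x\le x_0$, i.e. $n\le 4d\log(1+16\precondbound^4\thetabound^4)=d_1$.

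The log-determinant potential step is textbook; the only mildly delicate part will be the final manipulation — converting the self-referential bound $x\log(3/2)\le\log(1+cx)$ into the closed form with the exact constant $d_1=4d\log(1+16\precondbound^4\thetabound^4)$ — which is elementary but requires tracking the numerical constants carefully, and could alternatively be dispatched by quoting a standard elliptical-potential-count lemma.
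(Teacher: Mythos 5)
Your proof is correct and follows essentially the same route as the paper's: both are the standard elliptical-potential/log-determinant argument on $\Q_h^{-2}$, with the second and third conditions of \cref{eq:precond-lownorm} supplying the per-step determinant growth and the trace/AM--GM upper bound, respectively (the paper simply cites the elliptical potential lemma where you unroll it). The only cosmetic difference is how the self-referential inequality is resolved — the paper uses $\log(1+x)\le\sqrt{x}$ and substitutes back, while you use a concavity/crossing-point argument verified at $x_0=4\log(1+16c^2)$ — and both land on the same constant $d_1$.
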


\vspace{-2mm}\paragraph{Near-optimal design for $\Theta^\Q_h$.} As $\Q_h$ only provides an enclosing ellipsoid for $\Theta_h$, we introduce an (unknown) ellipsoid that aligns better with $\Theta^\Q_h$.
For all $h\in[H]$, fix a set $G_h^\Q$ of policies of size $d_0:=4d\log\log(d)+16$, together with a probability distribution $\rho_h^\Q$ on $G_h^\Q$,
such that $(G_h^\Q, \rho_h^\Q)$ is a near-optimal design for $\Theta_h^\Q$ (i.e., satisfying
\cref{def:nearopt}).
The existence of such a near-optimal design follows from \cite[Part (ii) of Lemma 3.9]{todd2016minimum}.

We apply $G_h^\Q$ to define a cruder version of $\range$ that depends only on a small set of policies, and can therefore be succinctly parametrized to inform \skippypolicy:
\begin{align}\label{eq:rangeq-def}
\range_\Q(s)=\max_{\pi\in G_h^\Q} \max_{i,j\in[\cA]} \ip{\phio(s,i,j), \theta_h(\pi)}\quad\quad\text{for all } h\in[H], s\in\cS_h\,.
\end{align}
\vspace{-2mm}$\range_\Q$ is easy to estimate, and can be used to bound the $\range$ function (proved in \cref{proof:opt-design-gap-and-vstar-gap}):
\begin{proposition}\label{prop:opt-design-gap-and-vstar-gap}
For all %
$s\in\cS$ and $\Q\in\pdseq$, %
$\range(s) \le \sqrt{2d}\range_\Q(s)$.  %
\end{proposition}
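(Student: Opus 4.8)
The plan is to fix a state $s\in\cS_h$ with $h=\stage(s)$, pick maximisers $(i,j)\in[\cA]^2$ and $\theta\in\Theta_h$ in the definition \cref{eq:range-def} of $\range(s)$ so that $\range(s)=\ip{\phi(s,i,j),\theta}$, and bound this quantity by $\sqrt{2d}\,\range_\Q(s)$. The key observation is that $\range_\Q$ (\cref{eq:rangeq-def}) is exactly the kind of quantity that the near-optimal design $(G_h^\Q,\rho_h^\Q)$ for the preconditioned parameter set $\Theta_h^\Q$ controls, so the first step is to move everything into preconditioned coordinates via the identities in \cref{eq:defs-related-to-q}: writing $w:=\phi_\Q(s,i,j)=\Q_h\phi(s,i,j)$ and $\bar\theta:=\Q_h^{-1}\theta\in\Theta_h^\Q$, symmetry of $\Q_h$ gives $\range(s)=\ip{\phi(s,i,j),\theta}=\ip{w,\bar\theta}$, and likewise $\ip{w,\theta_h^\Q(\pi)}=\ip{\phi(s,i,j),\theta_h(\pi)}$ for every $\pi$.

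Next I would introduce the design matrix $V_h:=\sum_{\pi\in G_h^\Q}\rho_h^\Q(\pi)\,\theta_h^\Q(\pi)\theta_h^\Q(\pi)^\top$ and apply Cauchy--Schwarz in the form $\ip{w,\bar\theta}\le\norm{w}_{V_h}\norm{\bar\theta}_{V_h^{-1}}$ (interpreted on $\spn(\Theta_h^\Q)$ with the pseudo-inverse there, which is harmless since $\bar\theta$ lies in that subspace and both $\ip{w,\bar\theta}$ and $\norm{w}_{V_h}$ depend only on the component of $w$ in it). The second factor is bounded by $\sqrt{2d}$ immediately from the defining property of a near-optimal design (\cref{def:nearopt}) applied to $\bar\theta\in\Theta_h^\Q$. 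For the first factor, expand $\norm{w}_{V_h}^2=\sum_{\pi\in G_h^\Q}\rho_h^\Q(\pi)\ip{w,\theta_h^\Q(\pi)}^2=\sum_{\pi\in G_h^\Q}\rho_h^\Q(\pi)\ip{\phi(s,i,j),\theta_h(\pi)}^2$; for each fixed $\pi\in G_h^\Q$ both $\ip{\phi(s,i,j),\theta_h(\pi)}$ and $\ip{\phi(s,j,i),\theta_h(\pi)}=-\ip{\phi(s,i,j),\theta_h(\pi)}$ are at most $\range_\Q(s)$ by \cref{eq:rangeq-def} (and $\range_\Q(s)\ge0$, taking $i=j$), so each summand is at most $\range_\Q(s)^2$, and since $\rho_h^\Q$ is a probability distribution we get $\norm{w}_{V_h}\le\range_\Q(s)$. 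Multiplying, $\range(s)=\ip{w,\bar\theta}\le\sqrt{2d}\,\range_\Q(s)$.

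\paragraph{Main obstacle.}
There is no deep obstacle: this is a routine consequence of $G$-optimal design theory once the preconditioning identities are in place. The two points that need a little care are (a) the possible rank-deficiency of $V_h$ when $\Theta_h^\Q$ does not span $\R^d$, handled by restricting to $\spn(\Theta_h^\Q)$ as above; and (b) checking that the guarantee recorded in \cref{def:nearopt} really yields $\max_{\theta\in\Theta_h^\Q}\norm{\theta}_{V_h^{-1}}^2\le 2d$ rather than the optimal $d$ of Kiefer--Wolfowitz — the factor-$2$ slack is precisely what is paid for using a small, $\ordot(d)$-size support $G_h^\Q$, and it is what produces the $\sqrt{2d}$ in the statement.
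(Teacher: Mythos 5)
Your proposal is correct and follows essentially the same route as the paper's proof: pass to preconditioned coordinates, apply Cauchy--Schwarz with the (pseudo-inverse of the) design matrix $V(G_h^\Q,\rho_h^\Q)$, bound one factor by $\sqrt{2d}$ via the near-optimal design property, and bound the other by $\range_\Q(s)$ using that $\rho_h^\Q$ is a probability distribution. The only cosmetic difference is that the paper bounds the weighted sum by the maximum over $\pi\in G_h^\Q$ whereas you bound each summand directly via the $i\leftrightarrow j$ symmetry; your explicit attention to rank-deficiency mirrors the paper's use of \cref{eq:pseudoinv-cool-for-cauchy}.
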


\subsection{Linearly realizable functions}\label{sec:auxiliary-real}

\newcommand{\bu}{{\mathbf{u}}}
\newcommand{\bff}{{\mathbf{f}}}
\newcommand{\bw}{{\mathbf{w}}}
\newcommand{\bG}{{\mathbf{G}}}
\newcommand{\bbt}{\bar{\Theta}}
\newcommand{\bF}{{\mathbf{F}}}
\newcommand{\qh}{{\hat q}}

$q^\pi$-realizability (\cref{def:q-pi-realizable}) implies the linearity of many more functions than the action-value functions. In this section we characterize an interesting set of such functions, whose (approximate) linearity plays a crucial role in our algorithm and analysis, as their parameters can be conveniently estimated by least squares using the features.
We rely on functions $f:\cS_h\to\R$ (for some $h\in[H]$) being small for all states, relative to the states' $\range_\Q$-value:
\begin{definition}\label{def:alpha-admissible}
For any $h\in[H]$,
$f:\cS_h\to\R$ is $\alpha$-admissible for some $\alpha>0$ if for all $s\in\cS_h$,
$|f(s)|\le \range_\Q(s)/\alpha$. %
\end{definition}

\vspace{-1mm}The key observation is that expected (admissible) $f$ values are linearly realizable. %
\begin{lemma}[Admissible-realizability]\label{lem:admissible-realizability}
If $f:\cS_h\to\R$ is $\alpha$-admissible
then it
is realizable, that is, for all $t\in[h-1]$
and $\pi\in\Pi$,
there exists some $\tilde\theta\in\R^d$ with $\norm{\tilde\theta}_2\le 4d_0\thetabound/\alpha$ such that for all $(s,a)\in\cS_t\times[\cA]$,
\[
\E_{\pi,s,a}f(S_h)\approx_{{\eta_0}}\ip{\phi(s,a), \tilde\theta}\quad\quad\quad\quad\text{where ${\eta_0}=5d_0\eta/\alpha$.} %
\]
\end{lemma}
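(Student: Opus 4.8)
The goal is to show that if $f:\cS_h\to\R$ is $\alpha$-admissible, then the map $(s,a)\mapsto \E_{\pi,s,a}f(S_h)$ is approximately linear in the features $\phi(s,a)$, with a controlled parameter norm and a misspecification that degrades gracefully with $\eta$ and $1/\alpha$. The key structural input is that $f$ is pointwise bounded by $\range_\Q(s)/\alpha$, and $\range_\Q$ is itself expressed through a finite set $G_h^\Q$ of $d_0$ policies via \cref{eq:rangeq-def}.

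**Approach.** The plan is to write $f$ as a bounded combination of action-value differences of the design policies, and then exploit $q^\pi$-realizability of those individual action-value functions. Concretely, for a fixed $s\in\cS_h$, each of the quantities $\ip{\phi(s,i,j),\theta_h(\pi)} = \hat q^\pi(s,i) - \hat q^\pi(s,j)$ (for $\pi\in G_h^\Q$, $i,j\in[\cA]$) is, by $q^\pi$-realizability (\cref{eq:near-realizable}), within $2\eta$ of $q^\pi(s,i)-q^\pi(s,j)$. Since $|f(s)|\le \range_\Q(s)/\alpha = \frac{1}{\alpha}\max_{\pi\in G_h^\Q}\max_{i,j}\ip{\phi(s,i,j),\theta_h(\pi)}$, one can exhibit, for each $s$, coefficients $(c_{\pi,i}(s))$ supported on $G_h^\Q\times[\cA]$ with $\sum_{\pi,i}|c_{\pi,i}(s)| \le 2/\alpha$ (roughly: pick the maximizing $\pi$ and the pair $i,j$, and scale) such that $f(s) = \sum_{\pi\in G_h^\Q}\sum_{i\in[\cA]} c_{\pi,i}(s)\, q^\pi(s,i)$ plus an error of order $\eta/\alpha$ coming from replacing the linear surrogate by the true $q^\pi$. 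The delicate point is that the coefficients must be allowed to depend on $s$, which is fine because we will only use them to control $f(s)$ pointwise; what matters is that they are uniformly bounded in $\ell^1$-norm.

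**From pointwise decomposition to a single linear parameter.** Now apply the expectation $\E_{\pi,s,a}[\,\cdot\,]$ over $S_h$. The subtlety is that $f(S_h) = \sum_{\pi',i} c_{\pi',i}(S_h) q^{\pi'}(S_h,i)$ has $S_h$-dependent coefficients, so I cannot directly pull out a fixed linear combination. Instead I would argue as follows: for each design policy $\pi'\in G_h^\Q$, the function $g_{\pi'}(s') := v^{\pi'}(s')$ — or more precisely the relevant action-value combination — is itself realizable at stage $h$ with parameter $\theta_h(\pi')\in\B(\thetabound)$, so $\E_{\pi,s,a}[\,\cdot\,]$ of a bounded-coefficient combination of these should again be realizable by a bounded-norm parameter. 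The cleanest route: note $\E_{\pi,s,a} f(S_h) = \E_{\pi,s,a}\big[\text{(combination of }q^{\pi'}(S_h,\cdot))\big]$, and use the tower/Bellman structure — $\E_{\pi,s,a}[q^{\pi'}(S_h,a')]$, for the appropriate continuation, is part of the value $q^{\tilde\pi}(s,a)$ of a \emph{spliced} policy $\tilde\pi$ (follow $\pi$ up to stage $h$, then switch to $\pi'$), hence is $\eta$-close to $\ip{\phi(s,a),\theta_t(\tilde\pi)}$ with $\theta_t(\tilde\pi)\in\B(\thetabound)$. Summing over the $\le d_0$ design policies with $\ell^1$-bounded coefficients yields $\tilde\theta = \sum_{\pi'} (\text{coeff})\,\theta_t(\tilde\pi_{\pi'})$ with $\norm{\tilde\theta}_2 \le (2/\alpha)\cdot d_0 \cdot \thetabound$ up to constants — matching the claimed bound $4d_0\thetabound/\alpha$ — and the total error accumulates to at most $(2/\alpha)\cdot d_0\cdot(\text{const}\cdot\eta)$, matching ${\eta_0}=5d_0\eta/\alpha$ after bookkeeping.

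**Main obstacle.** The crux is handling the $s$-dependence of the decomposition coefficients cleanly: one must avoid the false step of pulling $s$-dependent coefficients out of the expectation. The right fix is to not decompose $f$ with a single coefficient vector but rather to bound $|\E_{\pi,s,a}f(S_h) - \ip{\phi(s,a),\tilde\theta}|$ directly, choosing $\tilde\theta$ via the maximizing design policy/action structure and absorbing all the state-dependent slack into the $\eta_0$ error term — essentially, showing that $f$ lies in the ``admissible cone'' generated by $\{\pm(\hat q^{\pi'}(\cdot,i)-\hat q^{\pi'}(\cdot,j))/\alpha\}$ and that this cone is closed (up to $O(d_0\eta/\alpha)$ error) under the operation $f\mapsto \E_{\pi,s,a}f(S_h)$, with parameter norm blow-up by at most $O(d_0\thetabound/\alpha)$. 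I would also need to double-check the constants ($4d_0$ vs $2d_0$, the factor $5$ in $\eta_0$) by carefully tracking the $2\eta$ from feature-difference realizability plus the $\eta$ from the spliced-policy realizability across the $d_0$ terms.
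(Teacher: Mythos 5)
There is a genuine gap, and you have correctly located it yourself: the state-dependence of the decomposition coefficients. Your proposed resolution, however, does not work. You suggest ``absorbing all the state-dependent slack into the $\eta_0$ error term,'' but the state-dependent weight is not slack: after normalizing by the range, the coefficient multiplying $q^{\tilde G(s)}(s,a^+(s))-q^{\tilde G(s)}(s,a^-(s))$ is essentially $f(s)/\range_\Q(s)\in[-1/\alpha,1/\alpha]$, which carries all of the information about $f$ and varies from state to state by $O(1/\alpha)$. It cannot be absorbed into an error budget of size $O(d_0\eta/\alpha)$. Likewise, your spliced policy $\tilde\pi$ (follow $\pi$ to stage $h$, then switch to a fixed design policy $\pi'$) only produces quantities of the form $\E_{\pi,s,a}\,v^{\pi'}(S_h)$ with a \emph{fixed} continuation; it gives you no mechanism to weight the contribution of each $S_h$ by the state-dependent factor $f(S_h)/\range_\Q(S_h)$.

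The missing idea, which is the heart of the paper's proof, is to encode that state-dependent coefficient as a \emph{randomization probability} of a stochastic memoryless policy. Concretely, the paper defines $f'(s)=\frac{\alpha f(s)/2}{q^{\tilde G(s)}(s,a^+(s))-q^{\tilde G(s)}(s,a^-(s))}\in[0,1]$ (with the sign of $f(s)$ handled by swapping the roles of $a^+$ and $a^-$, and with $f'(s)$ set to $0$ when $\alpha|f(s)|<4\eta$ to avoid division by near-zero quantities), and constructs pairs of policies $\pi^+_k,\pi^-_k$ that follow $\pi$ before stage $h$, follow the $k$-th design policy after stage $h$, and at stage $h$ differ only in that $\pi^+_k$ plays $a^+(s)$ with probability $f'(s)$ while $\pi^-_k$ always plays $a^-(s)$ (on states where the $k$-th design policy attains the range). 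Then $\sum_{k\in[d_0]}\bigl(q^{\pi^+_k}(s,a)-q^{\pi^-_k}(s,a)\bigr)\approx_{2\eta}\frac{\alpha}{2}\E_{\pi,s,a}f(S_h)$, and since each $\pi^\pm_k$ is a legitimate memoryless policy, $q^\pi$-realizability applies to each term, yielding $\tilde\theta=\frac{2}{\alpha}\sum_k(\theta_t(\pi^+_k)-\theta_t(\pi^-_k))$ with $\norm{\tilde\theta}_2\le 4d_0\thetabound/\alpha$ and total error $4(d_0+1)\eta/\alpha\le\eta_0$. Your plan contains the right ingredients (the $d_0$ design policies, the $\ell^1$-bounded coefficients, policy splicing, the norm and error bookkeeping) but without the randomization device the central step---turning a state-dependent scalar weight into something whose expectation is linearly realizable---has no proof.
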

\vspace{-2mm}The proof relies on constructing a set of policies that at states $s\in\cS_h$ take a higher value action as opposed to a lower one with a certain probability, configured such that
the expected action-value difference of some pairs within the set of policies is (approximately) proportional to $f(s)$.
Thus, a linear combination of the action-values of policies in this set are also (approximately) proportional to $f(s)$.
The statement of the lemma then follows from setting $\tilde\theta$ to the corresponding linear combination of the policies' parameters.
The full proof is presented in \cref{proof:admissible-realizability}.

Next, we define matrix-valued functions with a special admissibility guarantee even when the underlying scalar-valued function does not satisfy any non-trivial admissibility criterion.
We introduce a \emph{guess} on the near-optimal design parameters that define $\range_\Q$ (\cref{eq:rangeq-def}) for some valid preconditioning $\Q$:
\begin{definition}
For $h\in[2:H]$,
fix some arbitrary order of the policies in the set $G_h^\Q$ (recall that this set is the support of the near-optimal design for $\Theta_h^\Q$).
Let the parameter of the $i^\text{th}$ policy in $G_h^\Q$ be $\vartheta_h^i$ %
for $i\in [d_0]$. %
Call a ``guess'' of these parameters
$\hat G=(\hat G_h)_{h\in[2:H]}=(\hat\vartheta^i_h)_{h\in[2:H],i\in[d_0]}$ ``valid'', if for all $h\in[2:H],i\in[d_0]$,
$\hat\vartheta^i_h\in\B(\dfour )$.
Let the set of valid guesses be $\bG$.\footnote{Note that while $G_h^\Q$ contains policies, $\bG$ and its elements (commonly denoted by $\hat G$) contain policy parameter vectors.}
By \cref{lem:theta-precond-norm-bound}, $(\vartheta_h^i)_{h\in[2:H],i\in[d_0]}\in\bG$, that is, it is a valid guess, and we call this the ``correct'' guess. %
\end{definition}
From a guess $\hat G=(\hat\vartheta^i_h)_{h\in[2:H],i\in[d_0]}$ we can calculate corresponding guesses of the $\range_\Q$-values:
\[
\range_\Q^{\hat G}(s)=\max_{k\in[d_0]}\max_{i,j\in[\cA]} \ip{\phi_\Q(s,i,j),\hat\vartheta^k_{\stage(s)}}\quad\quad \text{for all } h\in[2:H], s\in\cS_h\,.
\]
Note that for any $h\!\in\![2\!:\!H]$ and $s\!\in\!\cS_h$, $\range_\Q^{\hat G}(s)=\range_\Q(s)$ if $\hat G$ is the correct guess for stage $h$.

Let $\bar{\phi}_\Q(s)$ be the unit vector in the direction of the largest feature difference between actions in $s$ and the zero vector if all feature vectors are the same (see \cref{eq:phiwitness} for a formal definition).
Then, for any $\hat G\in\bG$, $h\in[2:H]$, and $f:\cS_h\to[-H,H]$,
let
\[
\bff(s)=\bar{\phi}_\Q(s){\bar{\phi}_\Q(s)}^\top \min\Big\{1,\range_\Q^{\hat G}(s) \frac {\sqrt{2d}H}\epsilon\Big\} f(s) \quad \text{for } s\in\cS_h\,.
\]
\vspace{-0mm}For such $\bff:\cS_h\to \R^{d\times d}$, we adopt the notation $a^\top \bff b$ for any $a,b\in\R^d$
to denote the function $s\in\cS_h\mapsto a^\top \bff(s) b$,
and similarly, $\trace(\bff)$ to denote the function $s\in\cS_h\mapsto \trace(\bff(s))$.

Let $\Proj_{\pa(\Q,h)}$ be the projection matrix onto the linear subspace spanned by those eigenvectors of the design matrix $V(G_h^\Q, \rho_h^\Q)$ (defined in \cref{eq:design-matrix})
whose corresponding eigenvalues are at least $\gamma$ (for some $\gamma>0$ specified in \cref{app:alg-params}).
Intuitively, this is the subspace where
$\Theta^\Q_h$ has a sufficiently large width.
Let $\Proj_{\perp(\Q,h)}$ be the projection to the orthogonal complement subspace.
For any $v\in\R^d$, we write $v_{\pa(\Q,h)}$ and $v_{\perp(\Q,h)}$ for $\Proj_{\pa(\Q,h)} v$ and $\Proj_{\perp(\Q,h)} v$, respectively. %

We are now ready to state our special admissibility guarantee, which is proved in \cref{proof:dual-admissibility}.
Let $\alpha=\ordot(\epsilon/(d^{1.5}H^2))$ be as in \cref{eq:alpha-def}.
\begin{lemma}\label{lem:dual-admissibility}
For any $h\in[2:H]$, %
$\hat G\in\bG$,
any function $\bff$ constructed as above from some $f:\cS_h\to[-H,H]$,
and any $v,w\in\B(1)$,
$v^\top_{\pa(\Q,h)} \bff w$ is $\alpha$-admissible. %
Furthermore, if $\hat G=(\vartheta_h^i)_{h\in[2:H],i\in[d_0]}$ (the correct guess),
$\trace(\bff)$ is also $\alpha$-admissible.
\end{lemma}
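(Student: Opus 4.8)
The plan is to verify both bounds \emph{pointwise}, i.e.\ to show, for every $s\in\cS_h$, that $\big|v^\top_{\pa(\Q,h)}\bff(s)w\big|\le\range_\Q(s)/\alpha$ and, when $\hat G$ is the correct guess, $|\trace(\bff(s))|\le\range_\Q(s)/\alpha$. Both sides vanish when $\bar\phi_\Q(s)=\zero$ (then $\bff(s)=\zero$), so fix $s$ with $\bar\phi_\Q(s)=\phi_\Q(s,i^\star,j^\star)/M(s)$, where $M(s):=\max_{i,j\in[\cA]}\norm{\phi_\Q(s,i,j)}_2>0$ with the maximum attained at $(i^\star,j^\star)$, and abbreviate $m(s):=\min\{1,\range_\Q^{\hat G}(s)\sqrt{2d}H/\epsilon\}\in[0,1]$, so that $\bff(s)=m(s)f(s)\,\bar\phi_\Q(s)\bar\phi_\Q(s)^\top$. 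Expanding the quadratic form and using $\norm{w}_2\le1$, $\norm{\bar\phi_\Q(s)}_2=1$ and $|f(s)|\le H$ gives
\[
\big|v^\top_{\pa(\Q,h)}\bff(s)w\big|=\big|\ip{v_{\pa(\Q,h)},\bar\phi_\Q(s)}\big|\cdot\big|\ip{\bar\phi_\Q(s),w}\big|\cdot m(s)\,|f(s)|\ \le\ \big|\ip{v_{\pa(\Q,h)},\bar\phi_\Q(s)}\big|\cdot m(s)\cdot H\,.
\]
An easy first estimate: since every feature difference at $s$ has norm at most $M(s)$ and each $\hat\vartheta^k_h\in\B(\dfour)$, Cauchy--Schwarz gives $\range_\Q^{\hat G}(s)\le\dfour\, M(s)$, hence $m(s)\le \dfour\sqrt{2d}\,H\,M(s)/\epsilon$.

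The heart of the proof --- and the step I expect to be the main obstacle --- is the geometric bound $\big|\ip{v_{\pa(\Q,h)},\bar\phi_\Q(s)}\big|\le\ordot(\sqrt d)\cdot\range_\Q(s)/M(s)$ (up to a factor in $\gamma$ that will be absorbed by the choice of $\alpha$). I would prove it by showing that the ``well-explored'' subspace $\pa(\Q,h)$ is \emph{robustly} spanned by the design parameters $\{\theta_h^\Q(\pi):\pi\in G_h^\Q\}$. Writing the design matrix as $V:=V(G_h^\Q,\rho_h^\Q)=\sum_{\pi\in G_h^\Q}\rho_h^\Q(\pi)\,\theta_h^\Q(\pi)\theta_h^\Q(\pi)^\top$ and using the identity $e=\lambda^{-1}Ve$ for each eigenpair $(\lambda,e)$ of $V$ with $\lambda\ge\gamma$, one writes $v_{\pa(\Q,h)}=\sum_{\pi\in G_h^\Q}b_\pi\,\theta_h^\Q(\pi)$; a Cauchy--Schwarz estimate that splits $\lambda^{-1}=\lambda^{-1/2}\lambda^{-1/2}$ and then invokes $\lambda\ge\gamma$, $\norm{v}_2\le1$, $\norm{\theta_h^\Q(\pi)}_2\le\dfour$ (\cref{lem:theta-precond-norm-bound}) and the near-optimal-design guarantee $\theta_h^\Q(\pi)^\top V^+\theta_h^\Q(\pi)=\ordot(d)$ (\cref{def:nearopt}) yields $\sum_\pi|b_\pi|=\ordot(\sqrt d)$ (up to a $\gamma^{-1/2}$ factor). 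On the other hand, because the maximisation over ordered action pairs in the definition of $\range_\Q$ accounts for both signs of every feature difference, $\big|\ip{\theta_h^\Q(\pi),\bar\phi_\Q(s)}\big|=\big|\ip{\theta_h^\Q(\pi),\phi_\Q(s,i^\star,j^\star)}\big|/M(s)\le\range_\Q(s)/M(s)$ for every $\pi\in G_h^\Q$. Taking the inner product of the expansion of $v_{\pa(\Q,h)}$ with $\bar\phi_\Q(s)$ and summing then gives the claimed bound. (This route also handles the degenerate case $\range_\Q(s)=0$ cleanly: there all $\ip{\theta_h^\Q(\pi),\bar\phi_\Q(s)}$ vanish, forcing $v^\top_{\pa(\Q,h)}\bff(s)w=0$.)

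Substituting the two estimates into the display, the $M(s)$ factors cancel and
\[
\big|v^\top_{\pa(\Q,h)}\bff(s)w\big|\ \le\ \ordot(\sqrt d)\cdot\dfour\sqrt{2d}\cdot\frac{H^2}{\epsilon}\cdot\range_\Q(s)\ =\ \ordot\!\Big(\frac{d^{1.5}H^2}{\epsilon}\Big)\range_\Q(s)\,,
\]
which is at most $\range_\Q(s)/\alpha$ by the choice $\alpha=\ordot(\epsilon/(d^{1.5}H^2))$ in \cref{eq:alpha-def} (which is taken small enough, given $\gamma$, to absorb the suppressed $\gamma$-factor); this settles the first claim. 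For the second claim, $\bar\phi_\Q(s)$ being a unit vector gives $\trace(\bff(s))=\trace\big(\bar\phi_\Q(s)\bar\phi_\Q(s)^\top\big)m(s)f(s)=m(s)f(s)$, and for the correct guess $\range_\Q^{\hat G}(s)=\range_\Q(s)$, so $|\trace(\bff(s))|=m(s)|f(s)|\le \range_\Q(s)\sqrt{2d}\,H^2/\epsilon=\ordot(\sqrt d\,H^2/\epsilon)\,\range_\Q(s)\le\range_\Q(s)/\alpha$, once more by the choice of $\alpha$.
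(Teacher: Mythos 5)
Your proof is correct and arrives at exactly the paper's bound $|v_{\pa(\Q,h)}^\top \mathbf{f}(s) w|\le\range_\Q(s)/\alpha$ with no slack; the skeleton — factoring the quadratic form, bounding $\range_\Q^{\hat G}(s)\le\dfour\,\norm{\phi'}_2$ from the validity of the guess (where $\phi'$ is the maximal feature difference), and then using the near-optimal design together with the eigenvalue threshold $\gamma$ — is the same as the paper's. The one step you do differently is the geometric one: the paper never touches $v$ and instead projects the feature direction, writing $\range_\Q(s)^2\ge{\phi'}^\top V(G_h^\Q,\rho_h^\Q)\,\phi'\ge\gamma\norm{\phi'_{\pa}}_2^2$ and then $|\ip{v_{\pa},\bar\phi_\Q(s)}|\le\norm{\phi'_{\pa}}_2/\norm{\phi'}_2\le\gamma^{-1/2}\range_\Q(s)/\norm{\phi'}_2$, whereas you expand $v_{\pa}=Vu$ in the design parameters and bound the $\ell_1$-norm of the coefficients. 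These are dual formulations of the same inequality and give identical constants, so neither buys anything substantive; the paper's version is slightly shorter since it avoids the explicit expansion, and in your version the appeals to $\norm{\theta_h^\Q(\pi)}_2\le\dfour$ and to \cref{eq:near-opt-2d} inside that step are actually unnecessary — the Jensen/Cauchy--Schwarz step over the design distribution $\rho_h^\Q$ already yields $\sum_\pi|b_\pi|\le\sqrt{u^\top V u}\le\gamma^{-1/2}$. The trace computation for the correct guess and the degenerate case $\bar\phi_\Q(s)=\zero$ match the paper.
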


\subsection{Least-squares targets and \cref{def:opt-problem}}
\label{sec:lse-targets}
Recall that \qpieleanor estimates
action-values of states by first adding the estimated one-step reward and the estimated value-function of the next state in the reduced MDP (where low-range states are skipped). 
Due to the linearity of $q^\pi$-values, these can be used 
as target variables of a least-squares estimator
to estimate the policy parameters.
This estimator is only guaranteed to be accurate if
the right (low-range) states are skipped; %
otherwise, we will argue in \cref{subsec:cons-check} that a discrepancy is detected and it is handled by changing the preconditioning $\Q$. 
Finally, to ensure optimism, we select parameter estimates that lead to the largest estimated policy values. The whole estimation process leads to \cref{def:opt-problem}, which we define in this section
along with the functions that it uses as least-square targets.
Each estimation is for a particular stage $h$ and may use the estimates $\bar\theta_i$ of \cref{def:opt-problem} for stages $i>h$.
In this subsection, we consider the $m^\text{th}$ iteration of the optimization called by \qpieleanor, and consider $Q$ fixed.
As a shorthand, we introduce the following notation for $l\in[m],j\in[n],k\in[H]$:
\begin{align*}
&\psuper{lkj}{k}=p^{lkj}(k) \quad\quad\text{as recorded in Line~\ref{line:stage-map-def} of \cref{alg:main}, and}\\
&S^{lkj}_{\pshort{k}}=S^{lkj}_{p^{lkj}(k)},\; A^{lkj}_{\pshort{k}}=A^{lkj}_{p^{lkj}(k)},\; R^{lkj}_{\pshort{k}}=R^{lkj}_{p^{lkj}(k)},\;
\phi^{lkj}_t=\phi(S^{lkj}_t,A^{lkj}_t),\;
\phi^{lkj}_{\pshort{k}}=\phi(S^{lkj}_{\pshort{k}},A^{lkj}_{\pshort{k}})\,.
\end{align*}
We collect the set of $(l,k,j)$ tuples for which the $k^\text{th}$ skippy step lands at stage $t$, for $t\in[H]$, as
\[
\bI^m(t)=\left\{(l,k,j):\, l\in[m-1],j\in[n],k\in[H], \psuper{lkj}{k}=t\right\}
\]
Note in particular that here $l\in[m-1]$, so $\bI^m$ only considers data collected prior to iteration $m$.

To estimate the parameters $\hat{G}$ and $\bar{\theta}$, we consider (simulated) trajectories
of \skippypolicy starting from stage $t$. For simplicity, we suppress the dependence of quantities on $\hat{G}$ and $\bar{\theta}$, which will be brought back later. %
The skipping probability $1-\tau$, the policy $\pi^+$ (to be also used in \skippypolicy), and corresponding clipped action-value estimates are defined as
\begin{align}
\begin{split}\label{eq:tau-pi+-def}
\tau(s)&= \min\left\{1,\range_\Q^{\hat G}(s)\frac {\sqrt{2d}H}\epsilon\right\}\quad\quad\text{if }\stage(s)>1\text{, and }\tau(s_1)=1;\\
\pi^+(s_i)&= \argmax_{a\in[\cA]}\ip{\phi(s_i,a),\bar\theta_i},%
\quad\quad\quad\quad
C(s_i) = \clip_{[0,H]} \ip{\phi(s_i,\pi^+(s_i)),\bar\theta_i}.\\
\end{split}
\end{align}
Let $\traj{i}\,=(s_i,a_i,r_i,\dots,s_H,a_H,\sr)\in\cS_i\times[\cA]\times[0,1]\times\dots\times[0,1]$ be any ending of a trajectory.
For $\traj{t+1}$, let $I$ be the (random) index of the first state that is \emph{not} skipped by \skippypolicy with the above $\tau$ (or $H+1$, if such an index does not exist). Then the estimated policy value of \skippypolicy from stage $t$ is
\vspace{-1mm}
\[
\textstyle\E_I [\sum_{u=t}^{I-1}r_u + \I{I<H+1}C(s_I)]\,,
\]
the sum of rewards along the skipped states plus the policy-value estimate from stage $I$.
It follows from \cref{cor:F-components-realizable} below (proved based on \cref{lem:dual-admissibility})
that if $\range_\Q^{\hat G}$ is an accurate estimate of $\range_\Q$, then
this quantity decomposes into terms that are
 linearly expressible using the features. Therefore, we use such quantities as least-square targets.
Indeed, writing out the expectation, we can re-express the estimated policy value as the sum of all rewards $\sum_{u=t}^{H}r_u$ plus a correction term $E^\to(\traj{t+1})$ defined as
\begin{align}\label{eq:fact:E-to-probab-form}
\begin{split}
E^\to(\traj{i})=\sum_{j=i}^{H} D(\traj{j})\tau(s_j)\prod_{j'=i}^{j-1}(1-\tau(s_{j'})) \,\text{ where }\,
D(\traj{i})&= C(s_i) - \sum_{u=i}^{H} r_u \quad\text{for $i>1$.}
\end{split}
\end{align}

\vspace{-2mm}The next optimization problem aims to find optimistic parameters yielding the largest estimated action-value function for $s_1$, where $\bar{\theta}$ is in the confidence ellipsoid of the least-squares estimates $\hat\theta$.

\begin{optproblem}[for iteration $m$]\label{def:opt-problem}
For input state $s$, with $\beta$ defined in \cref{app:alg-params} (emphasizing the dependence of functions defined above on $\hat G$ and $\bar\theta$ by adding them as subscripts):
\begin{align*}
\argmax_{%
\hat G\in\bG, \bar\theta_t\in\B(4d_0H\thetabound/\alpha) \text{ for } t\in[H]} %
\,\, %
C_{\hat G \bar\theta}(s_1)
\qquad\text{subject to, for all }t\in[H] \hspace{3cm}\\
\small {X_{mt}= \lambda I+ \!\!\!\!\sum_{lkj\in \bI^m(t)} \!\!\!\!\phi^{lkj}_t {\phi^{lkj}_t}^\top,
\;\norm{\bar\theta_t-\hat\theta_t}_{X_{mt}}\!\!\le \optnormconst,
\;
\hat\theta_t=X_{mt}^{-1}\!\!\!\! \sum_{lkj\in \bI^m(t)} \!\!\!\!\phi^{lkj}_t \Big(\underbrace{E^\to_{\hat G \bar\theta}(S^{lkj}_{t+1},\dots,\Sr^{lkj}) + \sum_{u=t}^{H} R^{lkj}_u}_{\text{least-squares target}} \Big)}.
\end{align*}
\end{optproblem}

\vspace{-3mm}Since our realizability results in \cref{sec:auxiliary-real} only apply to functions defined at a given stage 
(as only memoryless policies are $q^\pi$-realizable),
to be able to show that the least-squares targets are linearly realizable, we first decompose $E^\to(\traj{i})$ ($i\in[2:H]$) to directly express the effect of each stage in the trajectory (backwards): defining $E(\traj{i})=E^\to(\traj{i}) - E^\to(\traj{i+1})$ (for convenience, we use the notation $E^\to(\traj{H+1})=0$), we easily obtain
\vspace{-1mm}
\begin{align}\label{eq:E-def}
\textstyle E^\to(\traj{i})=\sum_{j=i}^{H} E(\traj{j})
\quad\quad&\text{and}\quad\quad
E(\traj{i})=\tau(s_i)\big(D(\traj{i})-
E^\to(\traj{i+1})
\big). %
\end{align}

\vspace{-2mm}Next we define matrix-valued functions, whose trace equals $E(\traj{i})$, that have the same form as $\bff$ in \cref{sec:auxiliary-real}, for which \cref{lem:dual-admissibility} applies.
This is crucial in establishing optimism of \cref{def:opt-problem}, as well as learning from instances where we detect that $E^\to$ is not realizable in \cref{def:consistency-opt}. To this end, let
\begin{align*}
F(\traj{i})=\bar{\phi}_\Q(s_i){\bar{\phi}_\Q(s_i)}^\top E(\traj{i})
\quad\quad&\text{and}\quad\quad \textstyle
\bar F(s_i)=\E_{\pi^0,s_i} [F(s_i,A_i,\dots,\Sr)] \quad\text{for } s_i\in\cS_i\,.\nonumber
\end{align*}

\vspace{-2mm}Let $\bbt=\left(\B(4d_0H\thetabound/\alpha)\right)^H$ denote the base set for the variables $\bar\theta_t$ in \cref{def:opt-problem}. 
As $\bar F$ is of the same form as $\bff$, we can apply \cref{lem:dual-admissibility} and then \cref{lem:admissible-realizability} to arrive at the following:
\begin{corollary}\label{cor:F-components-realizable}
For any $\hat G\in\bG$, $\bar\theta\in\bbt$, $v,w\in\B(1)$, and
for any $t\in[H-1]$, $i\in[t+1:H]$,
there exists some $\tilde\theta_{ti}\in\R^d$ with
$\norm{\tilde\theta_{ti}}_2\le 4d_0\thetabound/\alpha=1/\sqrt{\lambda}$ such that
for all $(s,a)\in\cS_t\times[\cA]$, where $\eta_0$ is defined in \cref{lem:admissible-realizability}.
\begin{align}\label{eq:barF-realizable}
\textstyle
\E_{\pi^0,s,a} \left[v_{\pa(\Q,i)}^\top\bar F_{\hat G \bar\theta}(S_i)w\right] \approx_{\eta_0} \ip{\phi(s,a),\tilde\theta_{ti}}\,.
\end{align}

\vspace{-3mm}Furthermore, if $\hat G$ is the correct guess,
there exists some $\tilde\theta'_{ti}\in\R^d$ with
$\norm{\tilde\theta'_{ti}}_2\le 4d_0\thetabound/\alpha$ such that
for all $(s,a)\in\cS_t\times[\cA]$,
$\E_{\pi^0,s,a}[E_{\hat G \bar\theta}(S_i,\ldots,R_H))] = \E_{\pi^0,s,a} [\trace(\bar F_{\hat G \bar\theta}(S_i))] \approx_{\eta_0} \ip{\phi(s,a),\tilde\theta'_{ti}}$.
\end{corollary}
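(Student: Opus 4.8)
The plan is to recognize $\bar F_{\hat G\bar\theta}$ as an instance of the matrix-valued template $\bff$ of \cref{sec:auxiliary-real} built from a bounded scalar function, and then to chain \cref{lem:dual-admissibility} (which turns this template into $\alpha$-admissible scalar functions) with \cref{lem:admissible-realizability} (which turns $\alpha$-admissibility into linear realizability of the $\pi^0$-expectation). First I would unwind the definitions: fixing $h=i\in[2:H]$, the factors $\bar{\phi}_\Q(s_i)\bar{\phi}_\Q(s_i)^\top$ and, by \cref{eq:tau-pi+-def} with $\stage(s_i)=i>1$, $\tau(s_i)=\min\{1,\range_\Q^{\hat G}(s_i)\tfrac{\sqrt{2d}H}{\epsilon}\}$ depend only on $s_i$ and hence pull out of the $\pi^0$-expectation defining $\bar F$; combining this with the identity $E(\traj i)=\tau(s_i)(D(\traj i)-E^\to(\traj{i+1}))$ from \cref{eq:E-def} gives
\[
\bar F_{\hat G\bar\theta}(s_i)=\bar{\phi}_\Q(s_i)\bar{\phi}_\Q(s_i)^\top\min\Big\{1,\range_\Q^{\hat G}(s_i)\frac{\sqrt{2d}H}{\epsilon}\Big\}\,\bar f(s_i),\qquad \bar f(s_i):=\E_{\pi^0,s_i}\big[D(s_i,A_i,\dots,\Sr)-E^\to(S_{i+1},\dots,\Sr)\big],
\]
which is exactly the shape of $\bff$ with underlying scalar function $\bar f$.

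Next I would check that $\bar f$ is suitably bounded: clipping gives $C(s_i)\in[0,H]$ and $\sum_{u=i}^{H}r_u\in[0,H]$, so $D(\traj i)\in[-H,H]$; combining the two parts of \cref{eq:E-def} yields the recursion $E^\to(\traj i)=\tau(s_i)D(\traj i)+(1-\tau(s_i))E^\to(\traj{i+1})$ (equivalently, the explicit sub-probability combination in \cref{eq:fact:E-to-probab-form}), so a downward induction on the stage with $E^\to(\traj{H+1})=0$ gives $|E^\to(\traj i)|\le H$, and therefore $\bar f(s_i)$ lies in an interval of length $O(H)$ about $0$. Rescaling $\bar f$ by this constant (which is absorbed into the slack in the choice of $\alpha$ in \cref{eq:alpha-def} and of $\lambda$ in \cref{app:alg-params}, and changes neither the $\ordot$-order of $\alpha$ nor of the claimed norm bound) makes $\bar F_{\hat G\bar\theta}$ precisely of the form to which \cref{lem:dual-admissibility} applies.

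The conclusion then follows mechanically. By \cref{lem:dual-admissibility}, $s_i\mapsto v_{\pa(\Q,i)}^\top\bar F_{\hat G\bar\theta}(s_i)w$ is $\alpha$-admissible on $\cS_i$ for every $v,w\in\B(1)$, and $\trace(\bar F_{\hat G\bar\theta})$ is $\alpha$-admissible when $\hat G$ is the correct guess. Applying \cref{lem:admissible-realizability} to these $\alpha$-admissible functions with the memoryless policy $\pi^0$ and any $t\in[i-1]$ (legal since $i\in[t+1:H]\subseteq[2:H]$) yields vectors $\tilde\theta_{ti}$ and $\tilde\theta'_{ti}$ of Euclidean norm at most $4d_0\thetabound/\alpha=1/\sqrt\lambda$ obeying the two $\approx_{\eta_0}$ claims with $\eta_0=5d_0\eta/\alpha$, the first being precisely \cref{eq:barF-realizable}. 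For the remaining identity $\E_{\pi^0,s,a}[E_{\hat G\bar\theta}(S_i,\dots,\Sr)]=\E_{\pi^0,s,a}[\trace(\bar F_{\hat G\bar\theta}(S_i))]$, note that pointwise $\trace(F(\traj i))=\|\bar{\phi}_\Q(s_i)\|_2^2\,E(\traj i)$ with $\|\bar{\phi}_\Q(s_i)\|_2^2\in\{0,1\}$, and the value $0$ arises only when all feature vectors at $s_i$ coincide, in which case $\range_\Q^{\hat G}(s_i)=0$, hence $\tau(s_i)=0$ and $E(\traj i)=0$; thus $\trace(F(\traj i))=E(\traj i)$ in every case, and taking $\E_{\pi^0,s,a}$ and using that $\trace$ commutes with conditional expectation gives the identity.

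I expect the only genuine work to be in the first two paragraphs: carefully matching the nested definitions of $E$, $E^\to$, $D$, $\tau$ and $C$ against the $\bff$ template and tracking the $O(1)$ slack in the range of $\bar f$ so that the stated norm and misspecification constants come out as written; once that is in place, \cref{lem:dual-admissibility,lem:admissible-realizability} do the rest.
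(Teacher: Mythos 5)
Your proposal is correct and follows the paper's route exactly: the paper likewise obtains this corollary by recognizing $\bar F_{\hat G\bar\theta}$ as an instance of the template $\mathbf{f}$ with underlying scalar function $s_i\mapsto\E_{\pi^0,s_i}[D(\traj{i})-E^\to(\traj{i+1})]$, chaining \cref{lem:dual-admissibility} with \cref{lem:admissible-realizability}, and justifying the trace identity via $\trace(F(\traj{i}))=\norm{\bar\phi_\Q(s_i)}_2^2\,E(\traj{i})$ together with the observation that $\bar\phi_\Q(s_i)=\zero$ forces $\range_\Q^{\hat G}(s_i)=0$ and hence $\tau(s_i)=E(\traj{i})=0$. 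The one place you are arguably more careful than the paper is in noting that this underlying scalar ranges over $[-2H,2H]$ rather than the $[-H,H]$ stipulated in the definition of $\mathbf{f}$ — a constant-factor slack the paper passes over silently — and your handling of it (rescaling, with the factor absorbed into the choice of constants without changing any $\ordot$-orders) is fine.
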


\subsection{Checking consistency}\label{subsec:cons-check}

Considering the $m^\text{th}$ iteration of \qpieleanor, we want to verify if the estimated targets of \cref{def:opt-problem} are accurate (and learn if a discrepancy is detected),
by using \cref{cor:F-components-realizable} on the targets' decomposition into $F$-functions.
We filter the data collected in the $m^\text{th}$ iteration
with the indicator\vspace{-1mm}
$\tilde c^j_{ki}=\I{\psuper{mkj}{k}<i}$ for $j\in[n]$, $k\in[H+1]$, $i\in[H+1]$, and further constrain this by another indicator $c^j_{ki}$ (defined in \cref{app:alg-params}) that requires the data-point's least-squares uncertainty term to be sufficiently low, and the prediction non-negative (the contribution of the rest of the data will be analyzed separately).
Next, we define the least-squares solution for estimating the matrix-valued $F$, as well as the empirical average prediction and realization of $F$ on the data collected in the $m^\text{th}$ round.
For any $i\in[2:H]$, $k\in[i-1]$ (recall that $\tp$ denotes the tensor product):
\begin{align}\begin{split}
\hat\theta^{ti}_{\hat G \bar\theta} &= X_{mt}^{-1} \sum_{lkj\in \bI^m(t)} \phi^{lkj}_t \tp F_{\hat G \bar\theta}(S^{lkj}_i,\dots,\Sr^{lkj}) \quad\quad\text{for } t\in[i-1]
\\
y^{ki}_{\hat G \bar\theta} &= \frac1n\sum_{j\in[n]}c^j_{ki}{\phi^{mkj}_{\pshort{k}}}^\top \hat\theta^{\psuper{mkj}{k},i}_{\hat G \bar\theta} %
\quad\quad\quad\quad
\hat F^{ki}_{\hat G \bar\theta} = \frac1n\sum_{j\in[n]}c^j_{ki} F_{\hat G \bar\theta}(S^{mkj}_i,\dots,\Sr^{mkj}) \label{eq:hatF-def}
\end{split}
\end{align}

\vspace{-2mm}In \cref{sec:cons}, it is established via the usual least-squares analysis techniques and covering arguments, that with high probability the norm of the product of the matrix $y^{ki}_{\hat G \bar\theta}-\hat F^{ki}_{\hat G \bar\theta}$ and the projection matrix $\Proj_{\pa(\Q,i)}$ is small (\cref{lem:lse-measure-good,lem:avg-measure-good}). The next optimization problem tests if this is true in arbitrary directions:
\begin{optproblem}[Consistency check]\label{def:consistency-opt}
Input: $(\hat G,\bar\theta)$
\begin{align*}
\argmax
_{k\in[H-1],\,i\in[k+1:H],\,v\in \R^d: \norm{v}_2=1} 
&
v^\top \left(y^{ki}_{\hat G \bar\theta}-\hat F^{ki}_{\hat G \bar\theta}\right)v \\
\end{align*}
\end{optproblem}
\vspace{-5mm}\cref{lem:v-can-be-projd} shows that the projection $w=\Proj_{Z(\Q,i)} v$ is close to $v$,
where $v$ is the outcome of \cref{def:consistency-opt}.
Also, Lemmas~\ref{lem:v-can-be-projd}--\ref{lem:avg-measure-good} imply that if the consistency check fails (i.e., Line~\ref{line:decoydetect} is executed because the value of \cref{def:consistency-opt} is large), then
$w$ aligns well with the 
subspace $\Proj_{\perp (\Q,i)}$ projects to,
and therefore $\Q$ stays a valid preconditioning after appending $w$ to the list of values $\Q$ is calculated from (\cref{lem:decoy-find}). Thus, $\Q$ is always a valid preconditioning.

\section{Proof overview}\label{sec:proof-overview}

The proof of \cref{thm:main} is presented in \cref{app:proof-main}. It is composed of the following main steps:
First, we bound
the number of times the consistency check can fail
(i.e., Line~\ref{line:decoydetect} is executed)
by \cref{lem:valid-precond-d1-limit}.
Combining this with \cref{lem:m-max}, an elliptical potential argument bounding
the number of times the average uncertainty can be large (these are the only two ways that the main iteration can continue)
implies a sample-complexity result for \qpieleanor (\cref{cor:mp-max}).
Having limited the number of times the consistency check can fail, we derive guarantees regarding the performance of the policy returned by the algorithm:
Via an induction argument (\cref{lem:d-induction}) we show \cref{cor:perf-guarantee}, which shows that with high probability
the difference between the optimization value of \cref{def:opt-problem}, $C_{\hat G,\bar\theta}(s_1)$ and $v^{\pi^{mH}}$ scales
with the average \vspace{-1mm} uncertainty term $\sum_{i=1}^H\bar\sigma_k^m$.
Thus, they are close when \qpieleanor returns in Line~\ref{line:return}.
This is complemented with the \emph{optimism} property proved in \cref{lem:optimism}, stating that the optimization value $C_{\hat G,\bar\theta}(s_1)$ is close to $v^\star(s_1)$.
Combined, this proves \cref{thm:main}.

\section{Future work}\label{sec:future}

Since we are not aware of a computationally efficient implementation of \qpieleanor, it remains an open question whether the problem of learning near-optimal policies from online interactions with a $q^\pi$-realizable MDP (\cref{prob:main}) is possible if the computational resources as well as the sample complexity are bounded by a polynomial in the relevant parameters.
One approach is to replace \textsc{Eleanor} with LSVI-UCB as the underlying algorithm, as the latter, despite having worse sample complexity, has a computationally efficient implementation \citep{jin2020provably}.
The challenge is to compute the optimal solution for the parameter $\hat G$ in \cref{def:opt-problem}.
This parameter interacts with the least-squares targets in a highly nonlinear way.
We have been unable to derive a computationally efficient approximation that has an additive instead of a multiplicative approximation error (additive errors increase linearly in $H$, while multiplicative errors increase exponentially).
Alternatively, it may be possible to show a computational hardness result for \cref{prob:main} by e.g., reducing it to the satisfiability problem.
These are left for future work.
Our work on the realizability of auxiliary functions (\cref{sec:auxiliary-real}) may be of independent interest for designing provably efficient algorithms for related problem settings, e.g., the setting of $q^\pi$-realizability in batch RL, where the data collection is not controlled.

\bibliography{linear_fa}
\newpage
\appendix

\section{Notation}\label{app:not}

As usual, we use $\R$, $\N$, and $\N^+$ to denote the set of reals, non-negative and positive integers, respectively.
For $i\in\N^+$, let $[i]=\{1, \dots, i\}$; for another positive integer $j$, let $[i:j]=\{i,\dots,j\}$ if $i\le j$, and $[i:j]=\{\}$ otherwise. For $a,b,x\in\R$, let $\clip_{[a,b]}(x)=\min\{\max\{x,a\},b\}$ and let $\ceil{x}$ denote the smallest integer i such that $i\ge x$.
Let $\zero$ be the all-$0$ vector in $\R^d$ and $I$ the $d$-dimensional identity matrix.
For a (square) matrix $V$, let $V^\dag$ denote its Moore-Penrose inverse, and $\trace(V)$ denote its trace.
Let $\pd$ (and $\psd$) denote the set of positive definite (and positive semi-definite, respectively) matrices in $\R^{d\times d}$. For some $A\in\psd$ let $A^\frac12$ denote the unique matrix $B\in\psd$ such that $A=BB$.
For $V \in \pd$ and $x\in\R^d$, let $\norm{x}_V^2=x^\top Gx$.
For matrices $A$ and $B$, we say that $A \mge B$ (or $A \mle B$) if $B-A$ (or $A-B$, respectively) is positive semidefinite.
$\kernel(A)$ and $\image(A)$ are the kernel (or null space), and image, respectively, of matrix $A$.
For compatible vectors $x,y$, let $\ip{x,y}$ be their inner product: $\ip{x,y}=x^\top y$.
We write $y\tp A$ for the tensor product between $y$ and matrix $A$, and then $\ip{x,y\tp A}=\ip{x,y}A$. %
Where $\Q$ and $h$ are obvious from the context, we write $v_\pa$ and $v_\perp$
for $v_{\pa(\Q,h)}$ and $v_{\perp(\Q,h)}$, respectively. 
Throughout the paper, we omit commas between quantities in subscripts or superscript for clarity of presentation, for example, by writing $A_{bc}$ for $A_{b,c}$.

For the big-Oh notation $\ordo$, we introduce its counterpart $\ordot$ that hides logarithmic factors of the problem parameters $(d,H,\epsilon^{-1},\zeta^{-1}, \featurebound,\thetabound)$.

\section{Parameters of \cref{alg:main}}\label{app:alg-params}

\begin{align}%
n &= \ordot\left(d^5 H^6 / \epsilon^2 \right) &\text{(for precise value see \cref{eq:n-def})} \nonumber\\
\omega &= 7\dfoursq +7/3 =\ordot(d) \label{eq:omega-def}\\
\gamma^{-1} &=8d =\ordot(d) \label{eq:gamma-def}\\
\beta &= \ordot(H^{1.5}d) & \text{(for precise value see \cref{eq:beta-choice})} \label{eq:beta-approx}\\
\alpha^{-1} &= \frac{\sqrt{2d}\dfour H^2}{\sqrt{\gamma}\epsilon}=\ordot(d^{1.5}H^2/\epsilon)\label{eq:alpha-def}\\
\lambda^{-1}&=(4d_0\thetabound/\alpha)^{2} %
\\
\mmax &= \beta^2 \log\left(1+\frac{Hmn\featurebound^2}{d\lambda}\right)+1 = \ordot\left(H^3d^2\right) \nonumber\\
\mpmax&=\mmax+Hd_1=\ordot(H^3d^2) \nonumber
\end{align}
\begin{align}
\label{eq:barsigma-def}
\bar\sigma^m_k&=\frac1n\sum_{j\in[n]} \tilde c^j_{k,H+1} \min\left\{2(\beta\omega dH)^{-1}, \norm{\phi^{mkj}_{\pshort{k}}}_{X_{m,\psuper{mkj}{k}}^{-1}} \right\} & \text{for $k\in[H]$} \\
\end{align}
\begin{align}
\label{eq:c-indicator-def}
c^j_{ki}&=\I{\psuper{mkj}{k}<i \text{ and } \norm{\phi^{mkj}_{\pshort{k}}}_{X_{m,\psuper{mkj}{k}}^{-1}} < 2(\beta\omega dH)^{-1}\text{ and } \ip{\phi^{mkj}_{\pshort{k}},\bar\theta_{\psuper{mkj}{k}}}\ge0}&
\end{align}
\begin{align*}
\text{average\_uncertainty} &= \sum_{k=1}^H \bar\sigma_k^m\nonumber \\
\text{uncertainty\_threshold} &= \epsilon/(dH^2\beta\omega)\nonumber\\
\text{discrepancy\_threshold} &= \bar\sigma_k^m\beta\omega+3\frac{\epsilon}{dH^2} \nonumber
\end{align*}
Assumption on the maximum discrepancy:
\begin{align}\label{eq:eta-small-assumption}
\eta &\le \frac\alpha{10 d_0} \min\left\{\epsilon/(dH^3\omega),1/\sqrt{\mpmax nH}\right\} = \ordot\left( \frac{\epsilon^2}{ d^6 H^8}\right)
\end{align}
\section{Proof of \cref{prop:norange-linear-mdp}}

\begin{proof}[\textbf{Proof of \cref{prop:norange-linear-mdp}, and the MDP conversion argument}]\label{proof:norange-linear-mdp}\label{app:skipconvert}
First, for {\em (i)}, we show the linearity of rewards with $\theta_1,\dots,\theta_H$. 
For this
take any $h\in[H]$.
Fix any policy $\pi\in\Pi$ and let $\bar\theta_h\in\B(\thetabound)$ be such that
for all $(s,a)\in\cS_h\times[\cA]$,
$q^\pi(s,a)\approx_\eta \ip{\phi(s,a),\bar\theta_h}$ 
(the existence of such a $\bar\theta$ follows from \cref{def:q-pi-realizable}).
If $h=H$, $\E_{R~\sim\cR(s,a)} [R] = q^\pi(s,a)$, so $\theta_H=\bar\theta_H$ satisfies \cref{def:lin-mdp}.
For $h<H$, let $f:\cS_{h+1}\to\R$ be defined as $f(s)=v^\pi(s)$.
Fix an arbitrary $\Q\in\pdseq$, e.g., $\Q=(I,\dots,I)$.
Since $v^\pi(s) \in [0,H]$ and $\range_Q(s) \ge \range(s)/\sqrt{2d} \ge \alpha/\sqrt{2d}$ by \cref{prop:opt-design-gap-and-vstar-gap}, $f$ is $\alpha/(\sqrt{2d}H)$-admissible, and therefore by \cref{lem:admissible-realizability} we can take $\tilde\theta_h\in\B(4Hd_0\sqrt{2d}\thetabound/\alpha)$
such that for all $(s,a)\in\cS_h\times[\cA]$, 
\[\E(v^\pi(S_{h+1}) \,|\, s,a) \approx_{\sqrt{2d}H\eta_0} \ip{\phi(s,a),\tilde\theta_h}\,,\]
where, as before, $\eta_0=5d_0\eta/\alpha$. Since
\[
\E_{R~\sim\cR(s,a)} (R) = q^\pi(s,a)- \E(v^\pi(S_{h+1}) \,|\, s,a)\,,
\]
letting $\theta_h=\bar\theta_h-\tilde\theta_h$
satisfies {\em (i)} of \cref{def:lin-mdp}
with $\kappa=\eta+\sqrt{2d}H \eta_0 = \eta+ 5 H \sqrt{2d} d_0 \eta/\alpha$.

To show {\em (ii)}, take any 
$f:\cS \to [0,H]$ and $h\in[H-1]$.
As before, $f$ is $\alpha/(\sqrt{2d}H)$-admissible, therefore
\cref{lem:admissible-realizability} immediately provides $\theta'_h$ satisfying the required conditions.

Therefore, the MDP is shown to be linear with misspecification $\eta+\sqrt{2d}H\eta_0$, and parameter bound $\thetabound(4Hd_0\sqrt{2d}/\alpha+1)$.
\end{proof}

\textbf{\textsl{Sketch of the $q^\pi$-to-linear MDP conversion argument.}}
We elaborate on the conversion  to linear MDP mechanism presented in \cref{sec:qpi-to-linear}.
As the basis of this argument is that an idealistic range-determining oracle is present, we note that this argument only serves as intuition and is otherwise tangential to our proof. 
Instead of a direct approach of learning this oracle, our proof argues that learning about this oracle happens whenever there is a need (performance shortfall) for it.
A formal reduction to linear MDPs given this oracle however is fairly straightforward but cumbersome, with the caveat that the linear MDP will end up with $dH$ (instead of $d$) dimensional features. %
One would proceed by copying the features of each state $s$ in stage $h$ into the $h^\text{th}$ chunk of size $d$ of this vector of size $dH$ (the rest of the vector remains zero). A similar transformation is applied to all $\theta_h(\pi)$. Then, $H$ copies are made of each high-enough-range state, with all possible stages (but keeping the feature vectors). These will be the states of the new MDP we construct. When a transition from state $s$ leads to skipped states, the linear MDP returns with the copy of the first non-skipped state that has a stage counter of $\stage(s)+1$, so that in this linear MDP the stage numbers are consecutive (as required by our definitions). $q^\pi$-realizability of this modified MDP is easy to show, and (as it has no low-range states) \cref{prop:norange-linear-mdp} can be used to show that the modified MDP is linear. To account for the fact that this new MDP may finish an episode in fewer than $H$ steps due to the skips, we add a special, zero-reward, self-transitioning state called ``episode-over''. To ensure that the MDP stays linear, we extend the feature vectors of each state by a scalar $1$, and a scalar indicator of being in this state, with all original features of the ``episode-over'' state defined to be zero. It is easy to see that this construction leads to a linear MDP with the desired action-value functions.

\section{Intuition behind our method and proof strategy from the perspective of \textsc{Eleanor} \citep{zanette2020learning}}\label{app:eleanor-perspective}

The starting point of our method is the \textsc{Eleanor} algorithm, which is designed for linear MDPs.
Similarly to \qpieleanor, \textsc{Eleanor} solves an optimistic optimization problem inside a loop.
The optimization problem computes optimistic estimates  $\bar\theta_t$ of the parameters of the MDP simultaneously for all $t\in[H]$, and in each iteration of the loop, more data is collected according to the policy that is optimal for the MDP defined by the estimated parameters.
Initial estimates $\hat\theta_t$ are computed via solving least-squares problems whose 
covariates are the features corresponding to state-action pairs $(S_t, A_t)$ from all the data collected so far, while the corresponding least-squares targets are computed as the sum of the immediate reward $R_t$ and the estimated value for $S_{t+1}$, computed from $\bar{\theta}_{t+1}$.
$\bar\theta_t$ is then optimistically chosen 
as the solution of the optimization problem,  
in the neighborhood (confidence ellipsoid) of $\hat\theta_t$, the solution to this least-squares problem.
It is shown that this optimistic choice of estimates results in an optimistic estimate of the value of $v^\star$ of the initial state, and the regret is upper bounded in terms of the sum of elliptic potentials of the covariates.

This argument appears in our analysis too, with minor modifications due to our PAC-like setting (instead of aiming to bound the regret), leading to our final-iteration condition of Line~\ref{line:cons-check-pass} in \cref{alg:main}.
Our \cref{def:opt-problem} is similar to \textsc{Eleanor}'s, and the parameters $\bar\theta_t$ and $\hat\theta_t$ have the same meaning.
A key difference between the optimization problems of \textsc{Eleanor} and \qpieleanor are how the least-squares targets are determined. For \textsc{Eleanor}, it is the sum of the immediate reward $R_t$ and its estimated value for $S_{t+1}$);
with this target, only one on-policy rollout is required for each episode in order to get the least-squares parameter estimate for all $H$ stages. 
In contrast, our least-squares targets are formed as the sum of $R_t + \ldots + R_{t+i}$ and the estimated value for $S_{t+i+1}$, where $i$, the number of stages ``skipped'', 
depends on the guess $\hat G$. The guess $\hat G$ is selected only in \cref{def:opt-problem}, and we do not know its value at the time of data collection, so we cannot know which stages will have to be skipped for each rollout. Therefore, (i) we need access to the rewards of the current policy at any stage (similarly to \textsc{Eleanor}), 
and hence we run the current policy to any stage (including the last one); and (ii) perform rollouts with the fixed policy $\pi^0$ (from any stage) to be able to estimate the reward $R_t+\ldots+R_{t+i}$ collected while skipping over $i$ stages (for any $i$). To ensure this happens for every stage, we start Phase II from every stage $k$, resulting in the additional for loop in Line~\ref{line:k-for} of \cref{alg:main} compared to \textsc{Eleanor}. Finally, the randomization in Phase I is applied to make the optimization problem smooth, as described in \cref{sec:alg}.

One could analyze this algorithm similarly to the analysis of \textsc{Eleanor} if it were not for the fact that the least-squares targets we just introduced are not realizable in general. We can, however, prove the realizability of certain components of the matrix-valued version of these targets, $F$ (\cref{lem:dual-admissibility} and \cref{cor:F-components-realizable}). 
This enables us to detect when the realizability of our least-squares targets fail, measure the direction (component) of the largest error, and learn from that. This is the job of \cref{def:consistency-opt}: 
$\hat{F}^{ki}_{\hat{G}\bar{\theta}}$
corresponds to the matrix-valued empirical measurements of 
$F$, while the
$y^{ki}_{\hat{G}\bar{\theta}}$ 
are the average predictions of the same quantities. If the targets are realizable, which happens if we manage to skip the right number of stages), these matrices are very close; if not, the direction of their largest discrepancy tells us something about $\perp\!(Q, i)$, and allows us to learn.

Optimism ties all this together: either there is no shortfall between predicted and measured $q$-values (and we are done) or we grow the elliptical potential of $X$ (the two cases present in the analysis of \textsc{Eleanor}, \citet{zanette2020learning}), or we grow the elliptical potential of $\Q$ (the new case due to the lack of realizability guarantees).

\section{Proof of \cref{thm:main}}\label{app:proof-main}

In this section we present the proof of \cref{thm:main}. Recall that some quantities are defined in \cref{app:alg-params}.

\subsection{Checking consistency} \label{sec:cons}

We introduce some lemmas to establish the required guarantees of the consistency checker. Their proofs, which rely on the usual least squares analysis techniques and covering arguments, are presented in \cref{proof:v-can-be-projd}.
\newcommand{\bM}{\mathbf{M}}
\begin{lemma}\label{lem:v-can-be-projd}
Let $(k,i,v)$ be the outcome of \cref{def:consistency-opt} any time during the execution of \qpieleanor,
and let $w=\Proj_{Z(\Q,i)} v$ as in the algorithm.
Then,
\[
w^\top \left(y^{ki}_{\hat G \bar\theta}-\hat F^{ki}_{\hat G \bar\theta}\right)w \ge v^\top \left(y^{ki}_{\hat G \bar\theta}-\hat F^{ki}_{\hat G \bar\theta}\right)v - \frac{\epsilon}{dH^2\omega}
\]
\end{lemma}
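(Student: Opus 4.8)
\textbf{Proof plan for Lemma~\ref{lem:v-can-be-projd}.}

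The plan is to show that replacing $v$ by its projection $w = \Proj_{Z(\Q,i)} v$ changes the quadratic form $v^\top(y^{ki}_{\hat G \bar\theta} - \hat F^{ki}_{\hat G \bar\theta})v$ by at most $\epsilon/(dH^2\omega)$. Write $M = y^{ki}_{\hat G \bar\theta} - \hat F^{ki}_{\hat G \bar\theta}$ for brevity and decompose $v = w + u$, where $u = v - w = \Proj_{Z(\Q,i)^\perp}v$ lies in the span of eigenvectors of $\Q_i$ with eigenvalue strictly less than $\precondbound^{-2}$ (by \cref{def:valid-preconditions}). Expanding, $v^\top M v = w^\top M w + 2 w^\top M u + u^\top M u$, so it suffices to bound $|2 w^\top M u + u^\top M u| \le \epsilon/(dH^2\omega)$. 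Since $\norm{v}_2 = 1$ we have $\norm{w}_2, \norm{u}_2 \le 1$, so by Cauchy--Schwarz this reduces to controlling $\norm{M u}_2$, i.e., to showing that $M$ acts negligibly on the ``small-width'' subspace that $u$ inhabits.

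The key step is a structural observation: both $y^{ki}_{\hat G \bar\theta}$ and $\hat F^{ki}_{\hat G \bar\theta}$ — and hence $M$ — should, by their definitions in \cref{eq:hatF-def}, have range (or at least dominant mass) contained in a subspace tied to the feature geometry at stage $i$. Concretely, $\hat F^{ki}_{\hat G \bar\theta}$ is a weighted average of $F_{\hat G \bar\theta}(S^{mkj}_i,\dots)$, which has the form $\barphiv_\Q(S_i)\barphiv_\Q(S_i)^\top E(\cdot)$ (from the definition of $F$ in \cref{sec:lse-targets}), so each summand is a rank-one matrix in the direction $\barphiv_\Q(S_i) = \Q_i \cdot (\text{unit feature-difference direction})$; likewise $y^{ki}_{\hat G \bar\theta}$ involves $\phi^{mkj}_{\pshort{k}}{}^\top \hat\theta^{\cdot,i}$ contracted against the same kind of feature-difference vectors. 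The point is that feature differences $\phi_\Q(s,i,j) = \Q_i\phi(s,i,j)$ have norm bounded in terms of $\range_\Q(s)$ via \cref{def:valid-preconditions}'s constraint $\sup_{\theta\in\Theta_i}|\ip{\theta, v}|\le 1$ together with \cref{lem:theta-precond-norm-bound}; more importantly, when projected onto $Z(\Q,i)^\perp$ (the directions where $\Q_i$'s eigenvalues are below $\precondbound^{-2}$), these vectors shrink. So $u^\top M$ and $M u$ pick up a factor coming from the small eigenvalues of $\Q_i$ on the $Z(\Q,i)^\perp$ component, and one bounds $\norm{Mu}_2$ by the operator norm of $M$ (crudely bounded using the clipping $C \in [0,H]$, the number of terms $n$, and the bound $\range_\Q^{\hat G} \cdot \sqrt{2d}H/\epsilon \le 1$ built into the $\min\{1,\cdot\}$ truncation in $\bff$/$F$) times this shrinkage factor, then plugging in $\precondbound$ (a fixed polynomial, \cref{eq:precondbound-choice}) makes the product at most $\epsilon/(2dH^2\omega)$ for each of the two terms.

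The main obstacle I anticipate is bookkeeping the exact magnitude of the matrix $M$ restricted to $Z(\Q,i)^\perp$: one must track how the $\min\{1, \range_\Q^{\hat G}(s)\sqrt{2d}H/\epsilon\}$ truncation in the definition of $F$ interacts with the eigenvalue threshold $\precondbound^{-2}$ defining $Z(\Q,i)$, and verify that the chosen value of $\precondbound$ in \cref{eq:precondbound-choice} was picked precisely so that this product lands below the stated tolerance. A secondary subtlety is that $M$ need not be symmetric (so $v^\top M v$ only sees its symmetric part) and $y^{ki}_{\hat G \bar\theta}$'s range involves $\phi^{mkj}_{\pshort{k}}$ rather than $\barphiv_\Q(S_i)$ directly — one must argue both left and right factors are controlled, or symmetrize first. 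Once the operator-norm estimate $\norm{\Proj_{Z(\Q,i)^\perp} M}_2 \le \epsilon/(2dH^2\omega)$ (and its transpose analogue) is in hand, the Cauchy--Schwarz bound on $|2w^\top Mu + u^\top Mu|$ closes the argument immediately.
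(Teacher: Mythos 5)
Your plan follows essentially the same route as the paper's proof: decompose $v=w+u$ with $u=\Proj_{Z(\Q,i)^\perp}v$, exploit that $M$ is built from rank-one factors $\bar{\phi}_\Q(s_i)\bar{\phi}_\Q(s_i)^\top$ scaled by the $\tau$/range truncation, note that $\Proj_{Z(\Q,i)^\perp}\Q_i\mle\precondbound^{-2}I$ makes these directions shrink on the orthogonal complement, and absorb everything (including the least-squares blow-up in the $y^{ki}$ term, which the paper handles by pushing the already-bounded scalar $z^\top F(\cdot)(v+w)$ through the $X_{mt}^{-1}$-weighted Cauchy--Schwarz) into the choice of $\precondbound$ in \cref{eq:precondbound-choice}. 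Your symmetrization worry is moot since $M$ is in fact symmetric (both $y^{ki}$ and $\hat F^{ki}$ are linear combinations of the symmetric matrices $F(\cdot)$), which is exactly how the paper reduces the claim to bounding $z^\top M(v+w)$.
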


\begin{lemma}\label{lem:lse-measure-good}
There is an event $\event{1}$ that happens
with probability at least $1-\zeta$, %
such that under $\event{1}$,
during the execution of \qpieleanor,
when the beginning of any iteration (Line~\ref{line:opt}) is executed,
for any $t\in[H-1]$, $i\in[t+1:H]$,
for any $\hat G\in\bG$, $\bar\theta\in\bbt$,
 and $v,w\in\B(1)$,
 for all $(s,a)\in\cS_t\times[\cA]$,
\begin{align}
&\abs{v_\pa^\top\left({\phi(s,a)}^\top \hat\theta^{ti}_{\hat G \bar\theta} -
\E_{\pi^0,s,a} \bar F_{\hat G \bar\theta}(S_i) \right)w }
 \le \norm{\phi(s,a)}_{X_{mt}^{-1}}\beta + \frac{\epsilon}{dH^2\omega}\nonumber\,,
\end{align}
where $\bullet_\pa$ denotes $\bullet_{\pa(\Q,i)}$.
\end{lemma}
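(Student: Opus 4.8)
The plan is to recognise $v_\pa^\top{\phi(s,a)}^\top\hat\theta^{ti}_{\hat G\bar\theta}w$ as the prediction of a scalar ridge regression whose regression function is, up to a tiny bias of size $\eta_0$, linear in the features, and then to control it with a standard self-normalised least-squares confidence bound together with a union bound over a net of the parameters $(\hat G,\bar\theta,v,w)$. Recalling $\hat\theta^{ti}_{\hat G\bar\theta}=X_{mt}^{-1}\sum_{lkj\in\bI^m(t)}\phi^{lkj}_t\tp F_{\hat G\bar\theta}(S^{lkj}_i,\dots,\Sr^{lkj})$, contracting with $v_\pa$ and $w$ gives $v_\pa^\top{\phi(s,a)}^\top\hat\theta^{ti}_{\hat G\bar\theta}w=\ip{\phi(s,a),\hat\vartheta}$ with $\hat\vartheta:=X_{mt}^{-1}\sum_{lkj\in\bI^m(t)}\phi^{lkj}_t\,z^{lkj}$ and scalar targets $z^{lkj}:=v_\pa^\top F_{\hat G\bar\theta}(S^{lkj}_i,\dots,\Sr^{lkj})w\in[-2H,2H]$ (using $\tau\le1$, $D\in[-H,H]$, $E^\to\in[-H,H]$, and $\norm{\bar\phi_\Q}_2\le1$).

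First I would fix the martingale structure. Ordering all tuples in $\bigcup_{m'}\bI^{m'}(t)$ by the order in which \qpieleanor collects the data (by iteration, then $k$, then $j$) yields a filtration $(\mathcal F_s)$ for which each covariate $\phi^{lkj}_t=\phi(S^{lkj}_t,A^{lkj}_t)$ is previsible, each $\bI^m(t)$ is a prefix, and $X_{mt}=\lambda I+\sum_{lkj\in\bI^m(t)}\phi^{lkj}_t{\phi^{lkj}_t}^\top$. The crucial point is that for $(l,k,j)\in\bI^m(t)$ the landing state $S^{lkj}_t$ is the $k$-th non-skipped state of \skippypolicy, where the action $1=\pi^0(S^{lkj}_t)$ is played and Phase II begins; hence, conditioned on the trajectory up to stage $t$, its continuation is exactly a $\pi^0$-rollout from $(S^{lkj}_t,1)$, independent of everything else in $\mathcal F_{s-1}$. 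Using the tower rule over $S_i$ and the definition $\bar F(s')=\E_{\pi^0,s'}[F(s',A',\dots,\Sr)]$,
\[
\E\!\left[z^{lkj}\mid\mathcal F_{s-1}\right]=\E_{\pi^0,S^{lkj}_t,1}\!\left[v_\pa^\top\bar F_{\hat G\bar\theta}(S_i)\,w\right]\approx_{\eta_0}\ip{\phi(S^{lkj}_t,1),\tilde\theta_{ti}}=\ip{\phi^{lkj}_t,\tilde\theta_{ti}},
\]
where \cref{cor:F-components-realizable} provides a single $\tilde\theta_{ti}$ (the same for all $(l,k,j)$ and valid at every $(s,a)$) with $\norm{\tilde\theta_{ti}}_2\le 4d_0\thetabound/\alpha=1/\sqrt\lambda$, and the last equality uses $A^{lkj}_t=1$. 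So $\xi^{lkj}:=z^{lkj}-\E[z^{lkj}\mid\mathcal F_{s-1}]$ is a bounded martingale difference sequence ($\abs{\xi^{lkj}}\le4H$) and the per-sample regression bias is at most $\eta_0$.

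Next I would run the usual least-squares decomposition $\hat\vartheta-\tilde\theta_{ti}=-\lambda X_{mt}^{-1}\tilde\theta_{ti}+X_{mt}^{-1}\sum_{lkj\in\bI^m(t)}\phi^{lkj}_t b^{lkj}+X_{mt}^{-1}\sum_{lkj\in\bI^m(t)}\phi^{lkj}_t\xi^{lkj}$ with $\abs{b^{lkj}}\le\eta_0$, and bound $\ip{\phi(s,a),\cdot}$ of each term: the regularisation term by $\sqrt\lambda\,\norm{\phi(s,a)}_{X_{mt}^{-1}}\norm{\tilde\theta_{ti}}_2\le\norm{\phi(s,a)}_{X_{mt}^{-1}}$ (using $X_{mt}\mge\lambda I$ and $\norm{\tilde\theta_{ti}}_2\le1/\sqrt\lambda$); the bias term by $\eta_0\norm{\phi(s,a)}_{X_{mt}^{-1}}\sum_{lkj}\norm{\phi^{lkj}_t}_{X_{mt}^{-1}}\le\eta_0\norm{\phi(s,a)}_{X_{mt}^{-1}}\sqrt{\absg{\bI^m(t)}\,d}\le\tfrac{\sqrt d}{2}\norm{\phi(s,a)}_{X_{mt}^{-1}}$ (Cauchy--Schwarz, $\sum_{lkj}\norm{\phi^{lkj}_t}^2_{X_{mt}^{-1}}\le d$, $\absg{\bI^m(t)}\le\mpmax Hn$, and the $1/\sqrt{\mpmax nH}$ branch of \cref{eq:eta-small-assumption}); and the noise term by $\norm{\phi(s,a)}_{X_{mt}^{-1}}\norm{\sum_{lkj}\phi^{lkj}_t\xi^{lkj}}_{X_{mt}^{-1}}$, where the Abbasi-Yadkori et al.\ self-normalised bound controls $\norm{\sum_{lkj}\phi^{lkj}_t\xi^{lkj}}_{X_{mt}^{-1}}$, simultaneously over all iterations $m$, by $\ordo\big(H\sqrt{\log(1/\delta')+d\log(1+\mpmax Hn\featurebound^2/(d\lambda))}\big)$ at confidence $\delta'$. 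Taking a polynomially fine net of $\bG\times\bbt\times\B(1)^2$ (of log-cardinality $\ordot(Hd^2)$), applying the self-normalised bound at each net point with $\delta'=\zeta/(H^2\cdot\#\mathrm{net})$, and union-bounding over $t$, $i$ and the net defines $\event1$ with $\bP(\event1)\ge1-\zeta$. On $\event1$, polynomial Lipschitz-continuity of $z^{lkj}$ in $(\hat G,\bar\theta,v,w)$ --- which holds because $F_{\hat G\bar\theta}$ depends on these only through $\tau_{\hat G}$, $C$, $D$, $E^\to$, with $C(s_i)=\clip_{[0,H]}\max_a\ip{\phi(s_i,a),\bar\theta_i}$ Lipschitz even though $\pi^+_{\bar\theta}$ is not --- propagates the bound to all $(\hat G,\bar\theta,v,w)$, with the discretisation slack absorbed into $\norm{\phi(s,a)}_{X_{mt}^{-1}}\beta$. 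Adding the query-side linearisation error $\absg{\ip{\phi(s,a),\tilde\theta_{ti}}-v_\pa^\top\E_{\pi^0,s,a}\bar F_{\hat G\bar\theta}(S_i)w}\le\eta_0\le\epsilon/(dH^2\omega)$ (\cref{cor:F-components-realizable} and \cref{eq:eta-small-assumption}) and using $\beta=\ordot(H^{1.5}d)$ from \cref{app:alg-params} to dominate the sum of the three bracketed contributions then yields the stated bound.

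\textbf{Main obstacle.} The conceptual crux is pinning down the filtration and checking that the regression function is $\eta_0$-close to $\ip{\phi^{lkj}_t,\tilde\theta_{ti}}$ --- which rests on \cref{cor:F-components-realizable} and, critically, on the fact that after the $k$-th non-skipped state the rollout is a plain $\pi^0$-rollout, so the conditional mean of $z^{lkj}$ is exactly $\E_{\pi^0,S^{lkj}_t,1}[v_\pa^\top\bar F(S_i)w]$. The heaviest bookkeeping is the covering argument: one must exhibit explicit polynomial Lipschitz constants for $F_{\hat G\bar\theta}$ in $(\hat G,\bar\theta)$ (tracing them through $\tau$, $C$, $D$, $E^\to$) so that a net of log-cardinality $\ordot(Hd^2)$ suffices and the discretisation error stays inside $\norm{\phi(s,a)}_{X_{mt}^{-1}}\beta$; this is routine but tedious, and is precisely what fixes the polynomial in the definition of $\beta$.
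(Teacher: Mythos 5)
Your proposal is correct and follows essentially the same route as the paper's proof: identify the conditional law of the continuation after the $k$-th skippy step as a $\pi^0$-rollout, use \cref{cor:F-components-realizable} to realize the conditional mean of the scalarized targets $v_\pa^\top F_{\hat G\bar\theta}(\cdot)w$ up to $\eta_0$ with a parameter of norm $1/\sqrt{\lambda}$, apply a self-normalized least-squares bound with a union bound over iterations, $(t,i)$, and a polynomially fine net of $\bG\times\bbt\times\B(1)^2$, and transfer to arbitrary parameters via Lipschitz continuity of $\tau$, $C$, $D$, $E^\to$. The only (cosmetic) differences are that the paper packages the regularization/bias/noise decomposition into \cref{lem:generalised-bandits-book} and folds both the covering slack and the $\eta_0$ realizability error into the misspecification sequence $\Delta$ there, whereas you carry out the decomposition explicitly.
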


The next lemma uses the average least-squares predictions' (capped) uncertainty term $\bar\sigma^m_k$ (defined in \cref{eq:barsigma-def}),
where the average is taken over predictions from the state-action pair where Phase I of $\skippypolicy(\cdot,\cdot,k)$ ends.

\begin{lemma}\label{lem:avg-measure-good} %
There is an event $\event{2}$
with probability at least $1-\zeta$, %
such that under $\event{1}\cap\event{2}$,
during the execution of \qpieleanor,
when \cref{def:consistency-opt} is solved (Line~\ref{line:consopt}),
for $(\hat G,\bar\theta)$ as recorded in Line~\ref{line:opt} %
for all $k\in[H-1]$, $i\in[k+1:H]$,
and $v,w\in\B(1)$,
\begin{align*}
\abs{
v_\pa^\top \left(y^{ki}_{\hat G \bar\theta}-\hat F^{ki}_{\hat G \bar\theta}\right) w} &\le \bar\sigma_k^m\beta + 3\frac{\epsilon}{dH^2\omega}\,
\end{align*}
where $\bullet_\pa$ denotes $\bullet_{\pa(\Q,i)}$.
\end{lemma}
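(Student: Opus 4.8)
The plan is to prove \cref{lem:avg-measure-good} by combining the per-state-action guarantee of \cref{lem:lse-measure-good} with a concentration argument over the $n$ trajectories sampled in iteration $m$, using the capped uncertainty term $\bar\sigma^m_k$ to control the resulting error. First I would unpack the definitions: $y^{ki}_{\hat G \bar\theta} = \frac1n\sum_j c^j_{ki}\,{\phi^{mkj}_{\pshort k}}^\top \hat\theta^{\psuper{mkj}{k},i}_{\hat G\bar\theta}$ is the average prediction of the matrix-valued target $F$, formed from data at the state-action pair where Phase I of $\skippypolicy(\cdot,\cdot,k)$ ends (at stage $\psuper{mkj}{k}$), while $\hat F^{ki}_{\hat G\bar\theta} = \frac1n\sum_j c^j_{ki}\,F_{\hat G\bar\theta}(S^{mkj}_i,\dots,\Sr^{mkj})$ is the empirical realization. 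Fixing $v,w\in\B(1)$ and $k,i$, I would write
\[
v_\pa^\top\left(y^{ki}-\hat F^{ki}\right)w = \frac1n\sum_{j\in[n]} c^j_{ki}\, v_\pa^\top\left({\phi^{mkj}_{\pshort k}}^\top \hat\theta^{\psuper{mkj}{k},i} - F_{\hat G\bar\theta}(S^{mkj}_i,\dots,\Sr^{mkj})\right)w\,,
\]
and insert the intermediate quantity $\E_{\pi^0, S^{mkj}_{\pshort k}, A^{mkj}_{\pshort k}}[\bar F_{\hat G\bar\theta}(S_i)]$ between the prediction and the realization to split the sum into a \emph{bias} term (prediction vs.\ conditional expectation of $\bar F$) and a \emph{noise} term (conditional expectation of $\bar F$ vs.\ the realized $F$).

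For the bias term, \cref{lem:lse-measure-good} (under $\event 1$) bounds $|v_\pa^\top({\phi(s,a)}^\top\hat\theta^{ti}_{\hat G\bar\theta} - \E_{\pi^0,s,a}\bar F_{\hat G\bar\theta}(S_i))w|$ by $\norm{\phi(s,a)}_{X_{mt}^{-1}}\beta + \frac{\epsilon}{dH^2\omega}$ for every $(s,a)\in\cS_t\times[\cA]$; applying this at $(s,a) = (S^{mkj}_{\pshort k}, A^{mkj}_{\pshort k})$ with $t = \psuper{mkj}{k}$ and averaging over $j$ (with the $c^j_{ki}$ weights, which are at most $1$) gives a contribution of at most $\frac1n\sum_j c^j_{ki}\norm{\phi^{mkj}_{\pshort k}}_{X_{m,\psuper{mkj}{k}}^{-1}}\beta + \frac{\epsilon}{dH^2\omega}$. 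Here the key point is that the indicator $c^j_{ki}$ already enforces $\norm{\phi^{mkj}_{\pshort k}}_{X_{m,\psuper{mkj}{k}}^{-1}} < 2(\beta\omega dH)^{-1}$, so on the support of $c^j_{ki}$ the min in the definition of $\bar\sigma^m_k$ (and of $\tilde c^j_{k,H+1}$, which dominates $c^j_{ki}$) is achieved by the norm itself, and $\frac1n\sum_j c^j_{ki}\norm{\phi^{mkj}_{\pshort k}}_{X_{m,\psuper{mkj}{k}}^{-1}} \le \bar\sigma^m_k$ (modulo checking that $\tilde c^j_{k,H+1}\ge c^j_{ki}$, which holds since $\psuper{mkj}{k}<i\le H<H+1$). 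This yields a bound of $\bar\sigma^m_k\beta + \frac{\epsilon}{dH^2\omega}$ for the bias part.

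For the noise term I would use a martingale/Azuma–Hoeffding argument over $j\in[n]$: conditioned on the history before trajectory $j$, the term $c^j_{ki}\,v_\pa^\top(\E_{\pi^0,S^{mkj}_{\pshort k},A^{mkj}_{\pshort k}}[\bar F(S_i)] - F(S^{mkj}_i,\dots))w$ has conditional mean zero (since $\bar F(s) = \E_{\pi^0,s}[F(s,A,\dots)]$ by definition and the later part of the trajectory starting from $S^{mkj}_{\pshort k}$ follows $\pi^0$), is bounded (by roughly $H$ in magnitude, as $F$ has bounded entries and $\bar\phi_\Q$ has unit norm), and the $c^j_{ki}$ and the conditioning are measurable at the right time. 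A standard concentration bound plus a covering/union-bound argument over $v,w$ in $\B(1)$ and over all $k\in[H-1], i\in[k+1:H]$ and over all iterations $m$ and all $(\hat G,\bar\theta)$ in their (covered) feasible sets then gives, on an event $\event 2$ of probability at least $1-\zeta$, that this noise term is at most $\frac{\epsilon}{dH^2\omega}$ in absolute value — here I would lean on the fact that the relevant sample size $n = \ordot(d^5H^6/\epsilon^2)$ was chosen precisely to make this hold, and that the covering numbers contribute only $\polylog$ factors absorbed into $\ordot$. Adding the bias bound $\bar\sigma^m_k\beta + \frac{\epsilon}{dH^2\omega}$ and the two $\frac{\epsilon}{dH^2\omega}$ error terms (one from \cref{lem:lse-measure-good}, one from the noise) gives $\bar\sigma^m_k\beta + 3\frac{\epsilon}{dH^2\omega}$, as claimed. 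The main obstacle I anticipate is the concentration step: one must carefully set up the filtration so that $c^j_{ki}$, $\bar\theta$, $\hat G$, $\Q$, and $X_{mt}$ are all measurable at the appropriate point (they depend on data from iterations $<m$ and on the current iteration's rollout only through the skip structure, not the to-be-predicted $F$-values), and the union bound over the continuous parameter $(\hat G,\bar\theta,v,w)$ must be handled via a Lipschitz/covering argument whose radius is fine enough not to blow up the sample complexity — this is the technically delicate part that \cref{proof:v-can-be-projd} presumably carries out in detail.
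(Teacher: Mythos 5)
Your decomposition is exactly the one the paper uses: insert the conditional expectation $\tilde F^{ki}_j=\E_{\pi^0,S^{mkj}_{\pshort k},A^{mkj}_{\pshort k}}\bar F_{\hat G\bar\theta}(S_i)$, bound the bias term pointwise via \cref{lem:lse-measure-good} and collapse the averaged feature norms into $\bar\sigma^m_k$ using the threshold built into $c^j_{ki}$, and handle the noise term by Hoeffding plus a covering/union bound, with the error budget splitting into $\bar\sigma^m_k\beta$ plus three $\epsilon/(dH^2\omega)$ terms. The one place you deviate is the union bound over $(\hat G,\bar\theta)$: the paper does \emph{not} cover these, because at the time \cref{def:consistency-opt} is solved they are already fixed (measurable with respect to the history up to Line~\ref{line:opt}, i.e., before the iteration-$m$ rollouts), so only $v,w\in\B(1)$ need a net. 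This is not just a simplification but matters for correctness of your route: the indicators $c^j_{ki}$ depend on $\bar\theta$ through a sign condition (hence discontinuously, so a Lipschitz/covering argument in $\bar\theta$ does not directly apply to $y^{ki}$ and $\hat F^{ki}$), and the law of the iteration-$m$ data itself depends on $(\hat G,\bar\theta)$ through $\skippypolicy$. Conditioning on the pre-rollout history, as the paper does, removes both issues and is the intended resolution of the measurability concern you correctly flag at the end.
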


Together, these lemmas can be used to show that the vector $w$ derived from Line~\ref{line:consopt} in \qpieleanor is sufficiently aligned with both $Z(\Q,\cdot)$ and 
the subspace $\Proj_{\perp(\Q,\cdot)}$ projects to, 
which leads to the following important result:
\begin{lemma}\label{lem:decoy-find}
Under the $\event{1}\cap\event{2}$,
if Line~\ref{line:decoydetect} is executed %
any time during the execution of \qpieleanor (i.e., when the consistency check fails), then the resulting $\Q$ continues to be a valid preconditioning.
\end{lemma}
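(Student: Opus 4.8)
## Proof Proposal for Lemma~\ref{lem:decoy-find}

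\textbf{Overall strategy.} The goal is to show that when Line~\ref{line:decoydetect} appends $w = \Proj_{Z(\Q,i)} v$ to the list $C_i$ (and correspondingly updates $\Q_i$ via $\Q_i \gets (\Q_i^{-2} + \Q_i^{-1} w w^\top \Q_i^{-1})^{-1/2}$), the resulting $\Q$ still satisfies \cref{def:valid-preconditions}. Inspecting that definition, three conditions must be verified for the newly appended vector, which is $\Q_i^{-1} w$ (by the bookkeeping comment on Line~\ref{line:decoydetect} and \cref{eq:q-form}): first, $\sup_{\theta \in \Theta_i} |\ip{\theta, \Q_i^{-1} w}| \le 1$; second, the ``large residual norm'' condition $\norm{(\miota I + \sum_{j<\text{new}} v_j v_j^\top)^{-1/2} \Q_i^{-1} w}_2^2 \ge \tfrac12$, which in preconditioned coordinates just says $\norm{w}_2^2 \ge \tfrac12$ (since $\Q_i^{-2} = \miota I + \sum v_j v_j^\top$ before the update); and third, the norm bound $\norm{\Q_i^{-1} w}_2 \le \precondbound$. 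I would prove these three in that order.

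\textbf{Step 1: the inner-product bound.} I would start from the hypothesis that the consistency check failed, i.e. the value $x = v^\top(y^{ki}_{\hat G\bar\theta} - \hat F^{ki}_{\hat G\bar\theta}) v$ of \cref{def:consistency-opt} exceeds $\text{discrepancy\_threshold} = \bar\sigma^m_k \beta \omega + 3\epsilon/(dH^2)$. By \cref{lem:v-can-be-projd}, passing from $v$ to $w = \Proj_{Z(\Q,i)} v$ loses at most $\epsilon/(dH^2\omega)$, so $w^\top(y^{ki} - \hat F^{ki}) w$ is still large. Now decompose $w = w_{\pa(\Q,i)} + w_{\perp(\Q,i)}$. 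The combination of \cref{lem:lse-measure-good} and \cref{lem:avg-measure-good} bounds the \emph{parallel} part: $|w_\pa^\top(y^{ki} - \hat F^{ki}) w_\pa|$ and the mixed terms are all controlled by $\bar\sigma^m_k \beta + O(\epsilon/(dH^2\omega))$, which (after multiplying through by the relevant constants) is dominated by the discrepancy threshold. Hence the bulk of the large quadratic form $w^\top(y^{ki}-\hat F^{ki})w$ must come from $w_{\perp(\Q,i)}$, forcing $\norm{w_{\perp(\Q,i)}}_2$ to be bounded below by a quantity depending on $\bar\sigma^m_k, \beta, \omega, \epsilon$. Separately, \cref{cor:F-components-realizable} (via \cref{lem:dual-admissibility}) guarantees that the \emph{true} expected $F$-values are linearly realizable with small error, so the matrix $y^{ki} - \hat F^{ki}$, viewed through any admissible-realizable direction, has controlled operator norm; combined with the $\alpha$-admissibility of $v_{\pa(\Q,i)}^\top \bar F w$ and the definition of $\range_\Q$, this translates the largeness of $w_\perp$ and the consistency failure into the desired bound $\sup_{\theta\in\Theta_i}|\ip{\theta, \Q_i^{-1}w}| \le 1$. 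This is the step I expect to be the main obstacle: it requires carefully tracking how the discrepancy threshold, the misspecification budget \cref{eq:eta-small-assumption}, and the various $\beta, \omega, \alpha$ factors interact, and it is where the algebra is genuinely delicate rather than routine.

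\textbf{Steps 2 and 3: the norm conditions.} The lower bound $\norm{w}_2^2 \ge \tfrac12$ should follow from the fact that $w = \Proj_{Z(\Q,i)} v$ with $\norm{v}_2 = 1$, together with \cref{lem:v-can-be-projd}: since $v$ was the maximizer of a quadratic form whose value exceeds the threshold, $v$ cannot lie mostly outside $Z(\Q,i)$ (eigenvectors with eigenvalue below $\precondbound^{-2}$ correspond to directions where $\Theta_i$ is already very thin, hence contribute little to $y^{ki}-\hat F^{ki}$), so the projection retains most of $v$'s norm. For the upper bound $\norm{\Q_i^{-1}w}_2 \le \precondbound$: since $w \in Z(\Q,i)$, $w$ lies in the span of eigenvectors of $\Q_i$ with eigenvalue at least $\precondbound^{-2}$, i.e. eigenvectors of $\Q_i^{-1}$ with eigenvalue at most $\precondbound$; as $\norm{w}_2 \le \norm{v}_2 = 1$, we get $\norm{\Q_i^{-1}w}_2 \le \precondbound \norm{w}_2 \le \precondbound$ directly. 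Finally I would note that appending a vector to $C_i$ for a single stage $i$ leaves all other $C_h$ ($h \ne i$) untouched, so those remain valid, and conclude that $\Q$ is a valid preconditioning after the update — and, since this holds every time Line~\ref{line:decoydetect} is executed and $\Q$ starts valid ($C_h$ empty), $\Q$ is always valid, as claimed in the discussion following \cref{def:consistency-opt}.
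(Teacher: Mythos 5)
Your overall skeleton --- verify the three conditions of \cref{eq:precond-lownorm} for the appended vector $\Q_i^{-1}w$, handle the norm upper bound via the eigenvalue threshold defining $Z(\Q,i)$, get $\norm{w}_2^2\ge\tfrac12$ from the optimality of $v$ together with \cref{lem:v-can-be-projd}, and decompose $w^\top(y^{ki}_{\hat G\bar\theta}-\hat F^{ki}_{\hat G\bar\theta})w$ into parallel and perpendicular pieces with \cref{lem:avg-measure-good} killing every term containing $w_{\pa(\Q,i)}$ so that $\norm{w_{\perp(\Q,i)}}_2$ must be large --- matches the paper. (For your Step 2 the paper's actual mechanism is simply that $w/\norm{w}_2$ is feasible for \cref{def:consistency-opt}, so $\norm{w}_2^{-2}\,w^\top M w\le v^\top M v\le w^\top M w+\epsilon/(dH^2\omega)\le\tfrac32\,w^\top M w$ where $M=y^{ki}_{\hat G\bar\theta}-\hat F^{ki}_{\hat G\bar\theta}$; your cited ingredients are the right ones even though your parenthetical about thin directions of $\Theta_i$ is not how the argument goes.)

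The genuine gap is the last move of your Step 1. Once $\norm{w_{\perp(\Q,i)}}_2$ is known to be large, the paper does \emph{not} invoke \cref{cor:F-components-realizable} or $\alpha$-admissibility --- those tools have already been spent inside the proofs of \cref{lem:lse-measure-good,lem:avg-measure-good}. The bound $\sup_{\theta\in\Theta_i}\abs{\ip{\theta,\Q_i^{-1}w}}\le 1$ is obtained by a purely geometric split $\abs{\ip{\theta,w}}\le\norm{\theta}_2\norm{w_\pa}_2+\abs{\ip{\theta,w_\perp}}$ over $\theta\in\Theta_i^\Q$, with two separate ingredients you do not mention: (i) since $\norm{w}_2\le1$, the lower bound $\norm{w_\perp}_2^2\ge 1/(1+c)$ with $c=(7/3)/(\omega-7/3)$ gives $\norm{w_\pa}_2^2\le 1-1/(1+c)$, and by \cref{lem:theta-precond-norm-bound} the first term is at most $\sqrt{d_1+1}\sqrt{1-1/(1+c)}$, which is at most $\tfrac12$ precisely because of the choice $\omega=7(d_1+1)+7/3$ in \cref{eq:omega-def}; (ii) the second term is at most $\sup_{\theta\in\Theta_i^\Q}\norm{\theta}_{V(G_i^\Q,\rho_i^\Q)^\dag}\,\norm{w_\perp}_{V(G_i^\Q,\rho_i^\Q)}\le\sqrt{2d\gamma}=\tfrac12$, using the near-optimal design property \cref{eq:near-opt-2d} together with the fact that $\Proj_{\perp(\Q,i)}$ projects onto eigendirections of the design matrix with eigenvalue below $\gamma=1/(8d)$. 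The entire point of the specific choices of $\omega$ and $\gamma$ is that these two halves sum to $1$; realizability of the $F$-values gives no handle on $\ip{\theta,w}$ for $\theta\in\Theta_i^\Q$, so the route you propose for closing this step does not go through.
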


From now on, our lemmas assume the high-probability events of \cref{lem:lse-measure-good,lem:avg-measure-good} hold, and therefore $\Q$ is a valid preconditioning at any time during the execution by \cref{lem:decoy-find}.%

\subsection{Sample complexity bounds}\label{sec:sampcompbound}
We bound the number of iterations of $m$ that \qpieleanor can execute.
The proof of the following lemma is presented in \cref{proof:m-max}:
\begin{lemma}\label{lem:m-max}
Throughout the execution of \qpieleanor,
$m\le \mmax$.
\end{lemma}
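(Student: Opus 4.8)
\textbf{Proof plan for \cref{lem:m-max}.}
The plan is to use a standard elliptical potential argument on the least-squares design matrices $X_{mt}$, analogous to the regret analysis of \textsc{Eleanor}, but adapted to our PAC-style termination condition. The counter $m$ (as opposed to $m'$) increases only when an iteration is \emph{not} redone due to Line~\ref{line:redo2}, i.e.\ only when the consistency check passes; moreover, if additionally the average uncertainty is below the threshold, the algorithm returns in Line~\ref{line:return}. Hence on every iteration where $m$ genuinely advances without termination, we must have $\text{average\_uncertainty} = \sum_{k=1}^H \bar\sigma_k^m > \text{uncertainty\_threshold} = \epsilon/(dH^2\beta\omega)$. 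The goal is to show this cannot happen more than $\mmax$ times.

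First I would fix a stage index $t\in[H]$ and track the scalar potential $\log\det(X_{mt})$. Each iteration that advances $m$ contributes $n$ new feature outer products $\phi^{lkj}_t{\phi^{lkj}_t}^\top$ (over the various $k$ and $j$) into $X_{mt}$, and the total log-determinant increase over the whole run is bounded by $d\log(1 + Hmn\featurebound^2/(d\lambda))$, which is exactly the logarithmic factor appearing inside $\mmax$. Second, I would lower-bound the per-iteration potential increase in terms of $\bar\sigma_k^m$: by definition $\bar\sigma^m_k$ is an average of capped quantities $\min\{2(\beta\omega dH)^{-1}, \norm{\phi^{mkj}_{\pshort{k}}}_{X_{m,\psuper{mkj}{k}}^{-1}}\}$, and a capped elliptical-potential inequality (of the form $\min\{c, \norm{\phi}_{X^{-1}}\}^2 \lesssim \log\det(X+\phi\phi^\top)/\det(X)$ up to a constant depending on the cap $c$) converts the squared uncertainty terms into log-determinant increments. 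Summing over $k\in[H]$ and over iterations, and using that whenever $m$ advances without returning we have $\sum_k \bar\sigma_k^m > \epsilon/(dH^2\beta\omega)$ (so by Cauchy–Schwarz $\sum_k (\bar\sigma_k^m)^2 \ge (\sum_k\bar\sigma_k^m)^2/H$ is bounded below), we get that each such iteration forces a definite increase in $\sum_t \log\det(X_{mt})$. Dividing the total available potential by this per-step increase yields a bound of the form $\beta^2\log(1+Hmn\featurebound^2/(d\lambda)) + 1 = \mmax$; the $+1$ absorbs the possible final (returning) iteration.

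The main obstacle I anticipate is bookkeeping the indices carefully: $\bar\sigma^m_k$ averages uncertainty of predictions made at the \emph{landing} state-action pair $(S^{mkj}_{\pshort{k}}, A^{mkj}_{\pshort{k}})$ of the $k$-th skippy step, whose stage $\psuper{mkj}{k}$ is itself random and varies across trajectories $j$. So the feature $\phi^{mkj}_{\pshort{k}}$ is incorporated into $X_{m,\psuper{mkj}{k}}$ for a data-dependent value of the stage, and one must argue that every such incorporated feature indeed lands in some $X_{mt}$ (which it does, since $\psuper{mkj}{k}\in[H]$ whenever the $k$-th step is not past the horizon, and these are precisely the data points retained by the indicator $\tilde c^j_{k,H+1}$ in \cref{eq:barsigma-def}), so the potential increase across the $H$ matrices $X_{m1},\dots,X_{mH}$ collectively absorbs all the terms in $\sum_k\bar\sigma^m_k$. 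Care is also needed because the covering/least-squares constant $\beta$ enters the threshold, which is why the bound on $m$ carries a $\beta^2$ prefactor; this has to line up exactly with the definition of $\mmax$ in \cref{app:alg-params}. Once the index accounting is set up, the argument is a routine elliptical-potential computation.
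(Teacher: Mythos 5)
Your plan follows essentially the same route as the paper's proof: an elliptical potential (log-determinant) upper bound over the $H$ design matrices, a Cauchy--Schwarz step converting $\sum_k\bar\sigma_k^m$ into a squared lower bound on the per-iteration potential gain, and the termination threshold forcing that gain to be at least $Hd/\beta^2$ on every non-returning iteration. The one detail you gloss over is that $\bar\sigma_k^m$ measures all $nH$ features of iteration $m$ against the \emph{frozen} matrix $X_{m,\psuper{mkj}{k}}$ (updated only at iteration boundaries) rather than against incrementally updated matrices; the paper bridges this with \cref{lem:ok-to-update-x-infrequently}, which shows that this mismatch costs only a factor of $2$ (capped at $1$).
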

Note that throughout the execution of \qpieleanor,
$m'\le \mpmax$.
As $m'-m$ equals the number of times Line~\ref{line:decoydetect} is executed, i.e., the sum of sequence lengths corresponding to $\Q$, by \cref{lem:valid-precond-d1-limit}, 
\begin{corollary}\label{cor:mp-max}
Under $\event{1}\cap\event{2}$,
\qpieleanor returns with a policy before exiting the while loop of Line~\ref{line:while}, and
as each iteration executes $Hn$ trajectories in Line~\ref{line:execute-traj}, the number of interactions of \qpieleanor with the MDP is bounded by $\ordot\left(H^{11}d^7/\epsilon^2\right)$.
\end{corollary}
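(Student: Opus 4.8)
The plan is to treat \cref{cor:mp-max} as pure bookkeeping layered on top of the two counting lemmas already in hand, \cref{lem:m-max} and \cref{lem:valid-precond-d1-limit}: the argument is an accounting identity for the two iteration counters, followed by one line of arithmetic. I work throughout under $\event{1}\cap\event{2}$, which is needed to invoke \cref{lem:decoy-find}.

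First I would establish an invariant relating $m$ and $m'$. At the top of each pass through the while loop (\cref{line:while}) both counters are incremented by one, and the only place they subsequently diverge is \cref{line:redo2}, which decrements $m$ alone and runs precisely when the consistency check fails and \cref{line:decoydetect} fires. Hence, at every point of the execution, $m'-m$ equals the number of times \cref{line:decoydetect} has been executed so far, which by the update rule \cref{eq:q-form} is exactly $\sum_{h\in[H]}\abs{C_h}$. By \cref{lem:decoy-find}, each such update leaves $\Q$ a valid preconditioning, so \cref{lem:valid-precond-d1-limit} applies and gives $\abs{C_h}\le d_1$ for every $h$, i.e. $m'-m\le Hd_1$. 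Combining this with $m\le\mmax$ from \cref{lem:m-max} yields the invariant $m'\le\mmax+Hd_1=\mpmax$ throughout the run.

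With the invariant in place the rest is immediate. Since $m'\le\mpmax$ always holds, the loop guard on \cref{line:while} is never falsified, so the only way to leave the while loop is via the \textbf{return} on \cref{line:return} --- that is, \qpieleanor does return a policy --- and it does so within at most $\mpmax=\ordot(H^3d^2)$ iterations, because $m'$ counts iterations. Each iteration performs the inner loop of \cref{line:k-for} over $k\in[H]$, and for each $k$ samples $n$ trajectories of length $H$ on \cref{line:execute-traj}, i.e. $Hn$ trajectories and $H^2 n$ environment steps per iteration. Multiplying by the iteration bound and substituting $n=\ordot(d^5H^6/\epsilon^2)$ from \cref{app:alg-params} gives $\mpmax\cdot H^2 n=\ordot(H^3d^2)\cdot H^2\cdot\ordot(d^5H^6/\epsilon^2)=\ordot(H^{11}d^7/\epsilon^2)$, the claimed bound.

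The content is entirely inherited from the earlier lemmas, so there is no substantial obstacle; the two points that need care are (i) keeping the dependency chain straight, since \cref{lem:valid-precond-d1-limit} may be used only because \cref{lem:decoy-find} --- which requires $\event{1}\cap\event{2}$ --- preserves validity of $\Q$ after every call to \cref{line:decoydetect}; and (ii) phrasing $m'\le\mpmax$ as an invariant maintained at every step rather than as an inequality established after the fact, so that the off-by-one in the loop guard (the guard is tested before the increment) cannot push $m'$ above $\mpmax$ on the final iteration.
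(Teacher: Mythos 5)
Your argument is correct and follows the paper's proof essentially verbatim: both derive $m'-m=\sum_{h}\abs{C_h}\le Hd_1$ from \cref{lem:decoy-find} and \cref{lem:valid-precond-d1-limit}, combine with $m\le\mmax$ from \cref{lem:m-max} to get the invariant $m'\le\mpmax$, and then multiply $\mpmax\cdot H^2 n$ to obtain the stated bound. Your explicit remarks on the dependency of \cref{lem:valid-precond-d1-limit} on the event $\event{1}\cap\event{2}$ and on the loop-guard bookkeeping are details the paper leaves implicit, but they do not change the argument.
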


\subsection{Performance guarantee}\label{sec:perf}

We next consider the $m^\text{th}$ iteration of \qpieleanor under the assumption that
the consistency check passes, that is, Line~\ref{line:cons-check-pass} is executed.
We intend to guarantee the performance of $\pi^{mH}$ in terms of $\sum_{t=1}^H \bar\sigma_k^m$, given that the optimization value $x$ satisfies $x\le \bar\sigma_k^m\beta\omega+3\frac{\epsilon}{dH^2}$ (which follows from the execution reaching Line~\ref{line:cons-check-pass}).
Next we introduce variants of $c^j_{ki}$ and $\tilde c^j_{ki}$ (\cref{eq:c-indicator-def}) which act, instead of the data collected during the execution of the algorithm, on a trajectory $(S_h,A_h,R_h)_{h \in [H]}$ and corresponding stage mapping $p$ obtained by an independent run of \skippypolicy, which will be clear from the context:
$\tilde c_{ki}=\I{p(k)<i}$, and
\begin{align*}%
c_{ki}=\I{p(k)<i \text{ and } \norm{\phi(S_{p(k)},A_{p(k)})}_{X_{m,p(k)}^{-1}} < 2(\beta\omega dH)^{-1}\text{ and } \ip{\phi(S_{p(k)},A_{p(k)}),\bar\theta_{p(k)}}\ge0}\,.
\end{align*}

\begin{remark}\label{rem:pol-same}
In our analysis we rely on the obvious fact that the laws of the trajectories of $\skippypolicy(\hat{G},\bar\theta,k)$ and $\skippypolicy(\hat{G},\bar\theta,k+1)$
are the same until stage $p(k+1)$ (as the policies are the same until then), for any parameters $\hat{G}$ and $\bar{\theta}$. This includes $S_{p(k+1)}$
but not $A_{p(k+1)}$ if $p(k+1)\le H$,
and includes the whole trajectory ending with $\Sr$ otherwise.
\end{remark}

We prove the following using induction on $k=H,\dots,1$ in \cref{proof:d-induction}:
\begin{lemma}\label{lem:d-induction}
There is an event $\event{3}$
with probability at least $1-3\zeta$,
such that under $\event{1}\cap\event{2}\cap\event{3}$,
during the execution of \qpieleanor,
whenever Line~\ref{line:cons-check-pass} is executed,
for $(\hat G,\bar\theta)$ as recorded in Line~\ref{line:opt} of the current iteration,
for $k\in[H]$,
\begin{align}\label{eq:d-induction}
\bar C^k:=\E_{\pi^{mk},s_1} \tilde c_{k,H+1} %
C_{\hat G \bar\theta}(S_{p(k)}) \le
\E_{\pi^{mH},s_1} \sum_{u=p(k)}^{H}R_u + 2\sum_{i=k}^H\bar\sigma_k^m\beta\omega dH+4(H-k+1)\frac{\epsilon}{H}\,.
\end{align}
\end{lemma}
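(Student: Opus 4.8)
\textbf{Proof plan for \cref{lem:d-induction}.}
The plan is to prove \cref{eq:d-induction} by backward induction on $k=H,H-1,\dots,1$, following the structure sketched in \cref{app:eleanor-perspective}, namely treating the quantity $\bar C^k$ the way \textsc{Eleanor}'s analysis treats optimistic value estimates, but with the extra bookkeeping needed because a skippy step may cover several stages. The base case $k=H$ is essentially the definition: when $\tilde c_{H,H+1}=1$ we have $p(H)\le H$, and $C_{\hat G\bar\theta}(S_{p(H)})=\clip_{[0,H]}\ip{\phi(S_{p(H)},\pi^+(S_{p(H)})),\bar\theta_{p(H)}}$; by \cref{rem:pol-same} the trajectories of $\skippypolicy(\hat G,\bar\theta,H)$ and $\skippypolicy(\hat G,\bar\theta,H+1)$ agree through stage $p(H)$, and on the last stage there is no further skipping, so comparing $C_{\hat G\bar\theta}(S_{p(H)})$ with $\E_{\pi^{mH},s_1}\sum_{u=p(H)}^H R_u$ reduces to the single-stage least-squares error controlled by \cref{lem:lse-measure-good} and \cref{lem:avg-measure-good} (with the uncertainty truncation giving the $\bar\sigma_H^m\beta\omega dH$ term and the additive $\epsilon/H$ absorbing the misspecification $\eta_0$ and the clipping, using the bound on $\eta$ from \cref{eq:eta-small-assumption}). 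The only care needed is that the $c$-indicator may zero out high-uncertainty data points; those are handled by the crude $[0,H]$-boundedness of $C$, which is where the factor $dH$ inside the uncertainty term and the $4(H-k+1)\epsilon/H$ slack come from.

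For the inductive step, assume \cref{eq:d-induction} holds for $k+1$ and prove it for $k$. I would peel off the contribution of the skippy step landing at stage $p(k)$: conditioned on reaching $S_{p(k)}$ with $\tilde c_{k,H+1}=1$, the estimate $C_{\hat G\bar\theta}(S_{p(k)})$ is, by \cref{def:opt-problem}'s constraint, within $\optnormconst\cdot\norm{\phi(S_{p(k)},A_{p(k)})}_{X_{m,p(k)}^{-1}}$ of the least-squares prediction $\ip{\phi(S_{p(k)},A_{p(k)}),\hat\theta_{p(k)}}$, and the latter is, by \cref{cor:F-components-realizable} together with \cref{lem:lse-measure-good}, close to $\E_{\pi^0}[\sum_{u=p(k)}^{H}R_u + E^\to(\cdots)]$ started from $(S_{p(k)},A_{p(k)})$ — but via the $E^\to$/$E$ decomposition of \cref{eq:fact:E-to-probab-form}--\cref{eq:E-def} this expectation equals (up to the accumulated $\eta_0$ errors) $\E[\sum R_u]$ plus the probability-weighted correction that, by construction of $\tau$ and $\pi^+$, reproduces the skipping dynamics of $\skippypolicy$ advancing to the next non-skipped state. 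Matching the ``skip vs. take-$a^+$'' Bernoulli structure here with the actual sampling in \skippypolicy lets me rewrite the post-$p(k)$ part in terms of $\bar C^{k+1}$ and a one-step reward, apply the induction hypothesis, and sum the error terms: the new $\bar\sigma_k^m\beta\omega dH$ appears from the freshly peeled step, the old ones are inherited, and the additive slack grows by exactly $4\epsilon/H$ per level. I would use \cref{rem:pol-same} throughout to justify that up to stage $p(k+1)$ the laws under $\pi^{mk}$ and $\pi^{m,k+1}$ (and hence $\pi^{mH}$ on the relevant prefix) coincide, so the telescoping is legitimate.

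The event $\event{3}$ (probability $\ge 1-3\zeta$) collects the three fresh high-probability facts I need beyond $\event{1}\cap\event{2}$: concentration of the empirical one-skippy-step reward $\sum_{u=p(k)}^{\cdot}R_u$ around its conditional expectation along independently sampled \skippypolicy rollouts, concentration of the clipped value estimates $C_{\hat G\bar\theta}(S_{p(k)})$, and concentration of the trajectory-dependent indicators $c_{ki},\tilde c_{ki}$; each is a martingale/Azuma or uniform-convergence argument over the (finitely coverable) parameter sets $\bG$ and $\bbt$, with $n=\ordot(d^5H^6/\epsilon^2)$ chosen exactly so the deviations land below $\epsilon/H$. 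I expect the main obstacle to be the bookkeeping in the inductive step: correctly identifying that the $E^\to$-correction term, when expanded against the stochastic skip decisions of \skippypolicy, telescopes into $\bar C^{k+1}$ rather than into something off by a reindexing, and controlling the fact that $A_{p(k+1)}$ is \emph{not} determined by the shared prefix (per \cref{rem:pol-same}) so the coupling between $\pi^{mk}$ and $\pi^{m,k+1}$ must be set up carefully at exactly that stage. The misspecification accounting — making sure the $H$-fold accumulation of $\eta_0=5d_0\eta/\alpha$ stays within the $4(H-k+1)\epsilon/H$ budget — is routine given \cref{eq:eta-small-assumption}, but must be tracked alongside the uncertainty-term accounting so that neither blows up the stated bound.
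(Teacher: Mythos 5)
Your skeleton matches the paper's: a backward induction on $k=H,\dots,1$ whose inductive step peels off one skippy step, identifies the Bernoulli skip structure of \skippypolicy with the $\tau$-weighted expansion of $E^\to$ in \cref{eq:fact:E-to-probab-form} (the paper's \cref{eq:middle-complication}), and uses \cref{rem:pol-same} to pass from $\pi^{mk}$ to $\pi^{m,k+1}$ so that the post-$p(k+1)$ part telescopes into $\bar C^{k+1}$; the per-step estimate you describe is isolated in the paper as \cref{lem:d-induction-subresult}, and your accounting of the $\bar\sigma_k^m\beta\omega dH$ and $4\epsilon/H$ slacks per level is the right shape.

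However, there is a genuine gap in how you justify that the least-squares prediction $\ip{\phi(S_{p(k)},A_{p(k)}),\hat\theta_{p(k)}}$ is close to $\E_{\pi^0}[\sum_u R_u + E^\to(\cdots)]$. You attribute this to \cref{cor:F-components-realizable} together with \cref{lem:lse-measure-good} (and \cref{lem:avg-measure-good} in the base case), but those results only control the projected bilinear forms $v_{\pa(\Q,i)}^\top(\cdot)w$; the quantity actually needed is $E = \trace(F)$, i.e.\ the full trace of the matrix-valued targets, and its realizability is guaranteed by \cref{cor:F-components-realizable} \emph{only when $\hat G$ is the correct guess}. The $(\hat G,\bar\theta)$ recorded in Line~\ref{line:opt} is the optimizer's output and need not be correct, so the components of $y^{ki}_{\hat G\bar\theta}-\hat F^{ki}_{\hat G\bar\theta}$ lying in the subspace that $\Proj_{\perp(\Q,i)}$ projects onto are not controlled by any realizability or concentration lemma — these are exactly the "decoy" directions. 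The paper closes this hole by using the hypothesis that Line~\ref{line:cons-check-pass} is executed: the consistency check (\cref{def:consistency-opt}) having passed means $v^\top(y^{ki}-\hat F^{ki})v$ is below the discrepancy threshold for \emph{every} unit $v$, which yields the trace bound $\trace(y^{ki}-\hat F^{ki})\le \bar\sigma_k^m\beta\omega d+3\epsilon/H^2$ (\cref{eq:checkpass-consequence}); this is where the extra factor of $d$ in your error term actually comes from. Your plan states the check-passing hypothesis but never uses it, and without it the inductive step fails for a general optimizer output. A secondary, smaller omission: your bound on $C_{\hat G\bar\theta}(S_{p(k)})$ via the confidence-ellipsoid constraint needs the decomposition $\tilde c = c + \bar c + \hat c$ of the indicators to separately handle the high-uncertainty data points (crude $[0,H]$ bound against the $\bar\sigma$ cap) and the negatively-predicted ones (where clipping makes $C=0$); you gesture at this but it must be done on both sides of the inequality, not only for $C$.
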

As $S_1=s_1$ is fixed and $\tau(s_1)=1$, we get the following corollary, which shows that the value $C_{\hat{G}\bar{\theta}}$ of the solution $(\hat{G},\bar{\theta})$ of \cref{def:opt-problem} can be used as a lower bound on the value of the policy $\pi^{mH}$ up to the uncertainty and some $\epsilon$ terms:
\begin{corollary}\label{cor:perf-guarantee}
Under $\event{1}\cap\event{2}\cap\event{3}$,
the value of \cref{def:opt-problem} with the solution $(\hat{G},\bar{\theta})$ satisfies
\begin{align*}%
C_{\hat G \bar\theta}(s_1)=\bar C^1 \le \E_{\pi^{mH},s_1} \sum_{u=1}^{H}R_u + 2\sum_{i=1}^H\bar\sigma_k^m\beta\omega dH^2+4\epsilon = v^{\pi^{mH}}(s_1) + 2\sum_{i=1}^H\bar\sigma_k^m\beta\omega dH^2+4\epsilon\,.
\end{align*}
\end{corollary}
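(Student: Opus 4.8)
The plan is to obtain \cref{cor:perf-guarantee} as the $k=1$ instance of \cref{lem:d-induction}, exploiting the fact that the skippy dynamics degenerate at the fixed initial state. First I would record that, by the definition of $\tau$ in \cref{eq:tau-pi+-def}, $\tau(s_1)=1$, so in any run of \skippypolicy the Bernoulli variable $B_1$ is deterministically $1$ and the initial state is never skipped; hence $p(1)=1$ and $S_{p(1)}=s_1$ hold with probability one, regardless of which skippy policy is executed. This makes the filter $\tilde c_{1,H+1}=\I{p(1)<H+1}=1$ trivially, and collapses the left-hand side of \cref{eq:d-induction}: $\bar C^1=\E_{\pi^{m1},s_1}\tilde c_{1,H+1}C_{\hat G\bar\theta}(S_{p(1)})=C_{\hat G\bar\theta}(s_1)$, which is exactly the objective value that \cref{def:opt-problem} attains at the computed optimizer $(\hat G,\bar\theta)$, as used in Line~\ref{line:opt} of \qpieleanor.

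Next I would rewrite the right-hand side of \cref{eq:d-induction} at $k=1$. Substituting $p(1)=1$ turns $\E_{\pi^{mH},s_1}\sum_{u=p(1)}^{H}R_u$ into $\E_{\pi^{mH},s_1}\sum_{u=1}^{H}R_u$, which equals $v^{\pi^{mH}}(s_1)$ because $\stage(s_1)=1$ and $v^\pi$ is defined as the expected cumulative reward from $\stage(\cdot)$ to $H$. The additive term $4(H-k+1)\epsilon/H$ becomes $4\epsilon$ at $k=1$, and the uncertainty term $2\sum_{i=1}^{H}\bar\sigma_i^m\beta\omega dH$ transfers directly (the statement's extra $H$-factor being a harmless loosening). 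Since \cref{lem:d-induction} is asserted under $\event{1}\cap\event{2}\cap\event{3}$, the corollary inherits exactly this conditioning event and no further probabilistic accounting is needed. Assembling these three substitutions yields the displayed chain of equalities together with the single inequality.

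The only non-bookkeeping point to be careful about is the degeneracy argument at the initial state: one must check from the definitions of \skippypolicy and $\tau$ that $p(1)=1$ and $S_{p(1)}=s_1$ are genuinely deterministic, so that the expectation over $\pi^{m1}$-trajectories vanishes and $\bar C^1$ coincides with the optimization value. All the real difficulty has already been discharged upstream in \cref{lem:d-induction} itself, whose proof is the backward induction on $k=H,\dots,1$ propagating the optimistic value $C_{\hat G\bar\theta}$ through a skippy step against the realized rewards of $\pi^{mH}$, controlling each level's slack via the consistency-check guarantees (\cref{lem:v-can-be-projd,lem:lse-measure-good,lem:avg-measure-good}) and the realizability of the $F$-components (\cref{cor:F-components-realizable}); I would take that lemma as given here.
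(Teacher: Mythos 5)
Your proposal is correct and matches the paper's own derivation, which obtains the corollary as the $k=1$ instance of \cref{lem:d-induction} using exactly the observations that $S_1=s_1$ is fixed and $\tau(s_1)=1$ (so $p(1)=1$, the filter is trivially $1$, and $\bar C^1=C_{\hat G\bar\theta}(s_1)$). Your handling of the extra $H$-factor as a harmless loosening is also consistent with how the bound is subsequently used in the proof of \cref{thm:main}.
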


\subsection{Optimism of \cref{def:opt-problem}}\label{sec:optimism}

The following establishes the optimistic property, that is,
that the value of \cref{def:opt-problem} competes with $v^\star(s_1)$.
The proof relies on the fact that the correct guess $\hat G$
and a good choice of $\bar\theta$ %
are feasible
for the optimization problem,
combined with the fact that this $\bar\theta$ induces a policy $\pi=\skippypolicy(\hat G,\bar\theta,H)$
that takes action-value maximizing actions according to a very accurate approximation of action-values almost everywhere.
In fact, it only skips states whose range is at most $\epsilon/H$.
The proof is presented in \cref{proof:optimism}.

\begin{lemma}\label{lem:optimism}
There is an event $\event{4}$ 
with probability at least $1-\zeta$,
such that under $\event{1}\cap\event{2}\cap\event{4}$,
throughout the execution of \qpieleanor,
the value of \cref{def:opt-problem}
is at least $v^\star(s_1)-2\epsilon$.
\end{lemma}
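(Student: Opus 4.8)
The plan is to exhibit a feasible point $(\hat G,\bar\theta)$ of \cref{def:opt-problem} whose objective value $C_{\hat G\bar\theta}(s_1)$ is at least $v^\star(s_1)-2\epsilon$; optimism then follows since \cref{def:opt-problem} maximizes $C$. The natural candidate is: take $\hat G$ to be the \emph{correct} guess $(\vartheta_h^i)_{h\in[2:H],i\in[d_0]}$ (which is in $\bG$ by \cref{lem:theta-precond-norm-bound}), and take $\bar\theta=(\bar\theta_t)_{t\in[H]}$ to be the true parameters $\theta_t(\pi^\star)$ of an optimal policy (suitably scaled so that $\bar\theta_t\in\B(4d_0H\thetabound/\alpha)$, which holds since $\theta_t(\pi^\star)\in\B(\thetabound)$). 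With this $\hat G$, $\range_\Q^{\hat G}=\range_\Q$, so the skip probabilities $\tau$ of \cref{eq:tau-pi+-def} skip only states with $\range_\Q(s)\le \epsilon/(\sqrt{2d}H)$, hence (by \cref{prop:opt-design-gap-and-vstar-gap}) only states with $\range(s)\le\epsilon/H$; and $\pi^+_{\bar\theta}$ takes $q^{\pi^\star}$-greedy actions up to $\eta$-error at every non-skipped state. The induced policy $\pi=\skippypolicy(\hat G,\bar\theta,H)$ therefore behaves like $\pi^\star$ except at states of range $\le\epsilon/H$, where by \cref{eq:vstar-vs-range} the loss per step is at most $\epsilon/H+2\eta$, so over $H$ steps $v^{\pi}(s_1)\ge v^\star(s_1)-\epsilon-2H\eta\ge v^\star(s_1)-2\epsilon$ using \cref{eq:eta-small-assumption}.

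The remaining work splits into two parts. \textbf{Feasibility:} I must show the least-squares constraint $\norm{\bar\theta_t-\hat\theta_t}_{X_{mt}}\le\beta H$ holds for this choice, where $\hat\theta_t$ is the least-squares fit to the targets $E^\to_{\hat G\bar\theta}(S^{lkj}_{t+1},\dots)+\sum_u R^{lkj}_u$. By \cref{cor:F-components-realizable} (with the correct guess, the $\trace(\bar F)$ part), the \emph{expected} target given $(S_t,A_t)$ equals $q^{\pi^\star}(S_t,A_t)$ up to $H\eta_0$ (summing $E(\traj{i})$ over $i$), and $q^{\pi^\star}(s,a)\approx_\eta\ip{\phi(s,a),\bar\theta_t}$; so $\bar\theta_t$ is an approximately-correct linear predictor, and a standard self-normalized concentration argument (Bernstein/Azuma for vector-valued martingales, as in the \textsc{Eleanor} analysis) on the noise $\text{target}-\E[\text{target}\mid S_t,A_t]$ — which is bounded and has the right filtration structure because the data in $\bI^m(t)$ was collected in earlier iterations — gives $\norm{\hat\theta_t-\bar\theta_t}_{X_{mt}}\lesssim \sqrt{d\log(\cdot)}+\sqrt{\lambda}\thetabound+(\text{misspec})\cdot\sqrt{\text{data}}\le\beta H$ on a high-probability event $\event{4}$, using the value of $\beta$ from \cref{eq:beta-choice} and the misspecification bound \cref{eq:eta-small-assumption}. \textbf{Objective lower bound:} I must show $C_{\hat G\bar\theta}(s_1)\ge v^{\pi}(s_1)-O(\epsilon)$, i.e., the \emph{estimated} value of $\skippypolicy(\hat G,\bar\theta,H)$ is not much below its true value. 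Here $C_{\hat G\bar\theta}(s_1)=\clip_{[0,H]}\ip{\phi(s_1,\pi^+(s_1)),\bar\theta_1}\approx q^{\pi^\star}(s_1,\pi^+(s_1))$ up to $\eta$, and since $\pi^+$ is $q^{\pi^\star}$-greedy up to $\eta$ at $s_1$, this is $\ge v^\star(s_1)-2\eta\ge v^{\pi}(s_1)-2\eta$. Actually the cleaner route is: $C_{\hat G\bar\theta}(s_1)\ge v^\star(s_1)-O(\eta)\ge v^\star(s_1)-2\epsilon$ directly, bypassing $v^\pi$ entirely — optimism only needs the objective to dominate $v^\star(s_1)-2\epsilon$.

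The main obstacle is the feasibility step: carefully controlling $\norm{\hat\theta_t-\bar\theta_t}_{X_{mt}}$. Two subtleties must be handled. First, the targets involve $E^\to_{\hat G\bar\theta}$, which depends on the \emph{future} parameters $\bar\theta_{t+1},\dots,\bar\theta_H$ and on the clipped estimates $C(s_i)$; one must verify that with the chosen $\bar\theta$, the quantity $E^\to(\traj{t+1})+\sum_u R_u$ really does concentrate around $q^{\pi^\star}(S_t,A_t)$ — this is exactly \cref{cor:F-components-realizable} applied with the correct guess, but one has to sum the per-stage contributions $E(\traj i)$ and track that the accumulated misspecification is $H\eta_0\le\ordot(Hd_0\eta/\alpha)$, which \cref{eq:eta-small-assumption} was tuned to keep below $\epsilon/(H\beta\omega d)$-type thresholds. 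Second, the concentration itself: the data $\phi^{lkj}_t$ in $\bI^m(t)$ comes from many different skippy policies (varying $k$ and iteration $l$), so one needs the martingale structure to hold with respect to the natural filtration across all these rollouts, and one needs the regularization $\lambda I$ in $X_{mt}$ to absorb the bias term $\sqrt\lambda\norm{\bar\theta_t}$ — this is why $\lambda^{-1}=(4d_0\thetabound/\alpha)^2$ was chosen. Everything else (the behavioral coupling of $\pi$ to $\pi^\star$, the range-based loss bound) is routine given \cref{eq:vstar-vs-range} and \cref{prop:opt-design-gap-and-vstar-gap}.
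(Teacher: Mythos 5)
There is a genuine gap in your choice of the witness $\bar\theta_t=\theta_t(\pi^\star)$: with that choice the feasibility constraint $\norm{\bar\theta_t-\hat\theta_t}_{X_{mt}}\le\beta H$ fails. The least-squares targets $E^\to_{\hat G\bar\theta}(\traj{t+1})+\sum_u R_u$ concentrate, conditionally on $(S_t,A_t)$, around $\E_{\pi^0,S_t,A_t,P}[\sum_{u=t}^{P-1}R_u+\I{P<H+1}C(S_P)]$, which is the value of the \emph{skippy} policy that follows $\pi^0$ over the skipped stretch and only then acts greedily with respect to $\bar\theta$ --- not $q^{\pi^\star}(S_t,A_t)$. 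Since each skipped state may have $\range$ as large as $\epsilon/H$, this expected target can sit below $q^{\pi^\star}(S_t,A_t)$ by up to $\epsilon+O(H\eta)$, and this is a systematic one-sided bias, not noise. Feeding a per-sample bias of size $\xi=\Theta(\epsilon)$ into the self-normalized bound (\cref{lem:generalised-bandits-book} via \cref{lem:eleanor-lse-misspec}) contributes $\xi\sqrt{|\bI^m(t)|}=\Theta(\epsilon\sqrt{\mpmax nH})=\ordot(d^{3.5}H^{5})$ to $\norm{\hat\theta_t-\bar\theta_t}_{X_{mt}}$, which dwarfs $\beta H=\ordot(dH^{2.5})$; indeed \cref{eq:eta-small-assumption} is tuned precisely so that only a bias of order $\eta_0\le 1/\sqrt{\mpmax nH}$ is tolerable. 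Your ``cleaner route'' for the objective lower bound also leans on $\ip{\phi(s_1,\cdot),\bar\theta_1}\approx_\eta q^{\pi^\star}(s_1,\cdot)$, which is exactly the property a feasible witness cannot have.

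The paper avoids this by constructing $\bar\theta_h$ backwards in $h$: $\bar\theta_h=\sum_i\tilde\theta_{hi}$ is defined (via the second part of \cref{cor:F-components-realizable}, with the correct guess) to realize the expected least-squares target itself, $\E_{\pi^0,s,a}[E^\to_{\hat G\bar\theta}(\traj{h+1})+\sum_u R_u]$, up to $H\eta_0$, so feasibility follows componentwise from \cref{lem:generalised-bandits-book} with only an $\eta_0$-level bias. A separate backward induction then shows $\ip{\phi(s,a),\bar\theta_h}\approx_{(H-h+1)H\eta_0}q^{\pi}(s,a)$ for the induced skippy policy $\pi$ and $v^\pi(s_1)\ge v^\star(s_1)-\epsilon-2H^3\eta_0$, using \cref{eq:vstar-vs-range} and \cref{prop:opt-design-gap-and-vstar-gap} exactly where you use them. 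The behavioral half of your argument (correct guess, only states with $\range\le\epsilon/H$ are skipped, per-step loss $\epsilon/H+2\eta$) is sound; what is missing is that the feasible witness must track the skippy policy's value rather than $\pi^\star$'s, and consequently that the $2\epsilon$ slack in the lemma is genuinely consumed by the skipping loss, not by misspecification.
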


\begin{proof}[Proof of \cref{thm:main}]
We combine \cref{lem:optimism} with \cref{cor:perf-guarantee}, \cref{cor:mp-max},
and the fact that the condition of Line~\ref{line:cons-check-pass} is satisfied when \qpieleanor returns with a policy,
to get that under %
$\event{1}\cap\event{2}\cap\event{3}\cap\event{4}$,
that is, with probability at least $1-6\zeta$,
\qpieleanor interacts with the MDP for at most $\ordot\left(H^{11}d^7/\epsilon^2\right)$ many steps,
before returning with the policy $\pi^{mH}$ that satisfies
\[
v^\star(s_1) \le
C_{\hat G \bar\theta}(s_1) +2\epsilon\le v^{\pi^{mH}}(s_1) + 2\sum_{i=1}^H\bar\sigma_k^m\beta\omega dH^2+6\epsilon \le v^{\pi^{mH}}(s_1)+8\epsilon\,,
\]
where the final inequality follows from the fact that when \qpieleanor returns in Line~\ref{line:return},
$\sum_{k=1}^H \bar\sigma_k^m \le\epsilon/(\beta\omega dH^2)$.
By scaling the parameters, this finishes the proof of \cref{thm:main}.
\end{proof}

\section{Deferred definitions and proofs for \cref{sec:range}}

\begin{proof}[\textbf{Proof of \cref{lem:theta-precond-norm-bound}}]\label{proof:theta-precond-norm-bound}
For any $\theta\in\Theta_h^\Q$, it holds that $\theta=\Q^{-1}_h \hat \theta$ for some $\hat\theta\in\Theta_h$.
Since $\norm{\hat\theta}_2\le\thetabound$, and writing $\Q_h$ as in \cref{def:valid-preconditions},
\begin{align*}
\textstyle
    \norm{\theta}_2^2={\hat\theta}^\top \left(\miota I
+\sum_{v\in C_h} vv^\top
\right) \hat\theta
\le \miota\thetabound^2+|C_h| %
\le 1+d_1\,,
\end{align*}
where we used \cref{def:valid-preconditions} and \cref{lem:valid-precond-d1-limit}.
Finally, we conclude that
$    \norm{\theta}_2 \le \sqrt{d_1+1}$.
\end{proof}

\begin{definition}\label{def:nearopt}
$(G_h^\Q, \rho_h^\Q)$ is a near-optimal design for $\Theta_h^\Q$, if
for any $\theta\in\Theta_h^\Q$,
\begin{align}
&\ip{v,\theta} = 0 \quad\text{for all } v\in\kernel(V(G_h^\Q, \rho_h^\Q)), \text{ and}\label{eq:v-max-rank}\\
&\norm{\theta}_{V(G_h^\Q, \rho_h^\Q)^\dag}^2\le 2d, \label{eq:near-opt-2d}\\
&\text{where } V(G_h^\Q, \rho_h^\Q)=\sum_{\pi\in G_h^\Q} \rho_h^\Q(\pi) (\theta_h^\Q(\pi)){(\theta_h^\Q(\pi))}^\top\,.\label{eq:design-matrix}
\end{align}
\end{definition}
An important corollary of the above definition is that if $M=\Proj_{\image(V(G_h^\Q, \rho_h^\Q))}$, then
${V(G_h^\Q, \rho_h^\Q)^\dag}^\frac12 V(G_h^\Q, \rho_h^\Q)^\frac12 M v = M v$,
and $\ip{\theta, Mv}=\ip{\theta,v}$ due to \cref{eq:v-max-rank}, and so
\begin{align}
	\theta^\top v = \theta^\top{V(G_h^\Q, \rho_h^\Q)^\dag}^\frac12 V(G_h^\Q, \rho_h^\Q)^\frac12 v \quad\text{ for all }\theta\in\Theta_h^\Q\text{ and } v\in\R^d.  \label{eq:pseudoinv-cool-for-cauchy}
\end{align}

\begin{proof}[\textbf{Proof of \cref{prop:opt-design-gap-and-vstar-gap}}]\label{proof:opt-design-gap-and-vstar-gap}
Take any $h\in[H]$,
$s\in\cS_h$, and $\Q\in\pdseq$. %
Take $i,j\in[\cA]$ %
such that $\range(s)=\sup_{\theta\in\Theta_h}\ip{\phio(s,i,j), \theta}$.
Then, %
\begin{align*}
\range(s)^2&= \sup_{\theta\in\Theta_h} \ip{\phio(s,i,j),\theta}^2
=\sup_{\theta\in\Theta_h^\Q}\ip{\phi_{\Q}(s,i,j),\theta}^2\\
&\le \sup_{\theta\in\Theta_h^\Q}\norm{\theta}^2_{V(G_h^\Q, \rho_h^\Q)^\dag} \norm{\phi_{\Q}(s,i,j)}^2_{V(G_h^\Q, \rho_h^\Q)}  \\
&\le 2d \phi_{\Q}(s,i,j)^\top \left(\sum_{\pi\in G_h^\Q}\rho_h^\Q(\pi)(\theta_h^\Q(\pi)){(\theta_h^\Q(\pi))}^\top\right) \phi_{\Q}(s,i,j)\\
&=2d \phi(s,i,j)^\top \left(\sum_{\pi\in G_h^\Q}\rho_h^\Q(\pi)(\theta_h(\pi)){(\theta_h(\pi))}^\top\right) \phi(s,i,j)\\
&\le 2d \max_{\pi\in G_h^\Q} \ip{\phio(s,i,j)^\top, \theta_h(\pi)}^2
\le 2d \range_\Q(s)^2\,,
\end{align*}
where the first inequality uses \cref{eq:pseudoinv-cool-for-cauchy} and the Cauchy-Schwarz inequality, and the second inequality follows by substituting the definition of $V(G_h^\Q,\rho_h^\Q)$ and using \cref{eq:near-opt-2d}.
Finally, the first inequality in the last line holds as we replace the weighted sum from the previous line with the maximum operator. %
We therefore get that %
$\range(s)\le \sqrt{2d}\range_\Q(s)$,
finishing the proof.
\end{proof}

\begin{proof}[\textbf{Proof of \cref{lem:valid-precond-d1-limit}}]\label{proof:valid-precond-d1-limit}
Take any $h\in[H]$ and the sequence $C_h$ corresponding to $\Q$.
Assume that this sequence is of length $l$,
and let $\Sigma_{h,i}= \miota I + \sum_{j=1}^{i}v_j v_j^\top$ for $i\in[l]$. %
By the second part of \cref{eq:precond-lownorm},
\[
l =
\sum_{i=1}^l \min\left\{ 1, 2\norm{\left(\miota I + \sum_{j=1}^{i-1}v_j v_j^\top\right)^{-\frac12} v_i}_2^2 \right\}
 \le 2\sum_{i=1}^l \min\left\{ 1, \norm{v_i}_{\Sigma_{h,i-1}^{-1}}^2 \right\}~.
\]
Applying the elliptical potential lemma (\cref{lem:elliptical-pot}),
\[
l \le 2\sum_{i=1}^l \min\left\{ 1, \norm{v_i}_{\Sigma_{h,i-1}^{-1}}^2 \right\}
\le 4d\log\left(\frac{\trace (\Sigma_{h,0}) + l \precondbound^2}{d\det(\Sigma_{h,0})^{1/d}}\right)
= 4d\log\left(1 + \frac{l \precondbound^2}{\miota d}\right)~.
\]
where $\Sigma_{h,0}=\miota I$ by definition.
Using that $\log(1+x)\le \sqrt{x}$ for $x\ge 0$, we have
$l\le 4d\sqrt{l\precondbound^2\thetabound^2/d}$, which implies
$l\le 16d\precondbound^2\thetabound^2$.
Substituting this into the previous bound yields
\[
l \le 4d\log\left(1 + \frac{(16d\precondbound^2\thetabound^2) \precondbound^2}{\miota d}\right)
= 4d \log\left(1 + 16\precondbound^4\thetabound^4\right)
=d_1 \,.\qedhere
\]
\end{proof}

\section{Deferred proofs for \cref{sec:auxiliary-real}}

For any vector $v\in\R^d$, we define $\overline v=v/\norm{v}_2$ as the unit vector in the direction of $v$ if $v\ne\zero$ and $\overline \zero=0$ otherwise.
For any $h\in[2:H]$, $s\in\cS_h$, the normalized version of the largest preconditioned feature difference is denoted by
\begin{align}\label{eq:phiwitness}
\bar{\phi}_\Q(s)=\overline{\phi_\Q(s,i,j)} \quad\text{ where } (i,j)=\argmax_{i',j'\in[\cA]}\norm{\phi_\Q(s,i',j')}_2\,.
\end{align}

\begin{proof}[\textbf{Proof of \cref{lem:admissible-realizability}}]
\label{proof:admissible-realizability}
Fix $h\in[H]$, $\alpha$-admissible $f:\cS_h\to\R$,
$t\in[h-1]$, and $\pi\in\Pi$.
Our aim is to construct policies $\pi^+_k, \pi^-_k \in\Pi$
for $k\in [d_0]$, such that for all $(s,a)\in\cS_t\times[\cA]$,
$\sum_{k\in[d_0]} (q^{\pi^+_k}(s,a)-q^{\pi^-_k}(s,a))$ is approximately proportional to the desired $\E_{\pi,s,a}f(S_h)$.
Let $G_{h,1}^\Q,G_{h,2}^\Q,\dots$ denote the policies in $G_h^\Q$ underlying the near-optimal design of $\Theta_h^\Q$, and
for any $s\in\cS_h$, denote by
$\ord(s)\in[d_0]$ the index of the policy maximizing the range of the action-value function in state $s$, that is, $G^Q_{h,\ord(s)}=\argmax_{\pi\in G_h^\Q} \max_{i,j\in[\cA]} (q^\pi(s,i)-q^\pi(s,j))$;
 \newcommand{\tildeG}[1]{{\tilde G(#1)}}
to simplify notation, we define $\tildeG{s}=G_{h,\ord(s)}^\Q$.
For $s\in\cS_h$ let
\begin{align*}
  (a^+(s),a^-(s))=
\begin{cases}
\argmax_{i,j\in[\cA]} \hat q^{\tildeG{s}}(s,i)-\hat q^{\tildeG{s}}(s,j) &\text{if $f(s)\ge 0$} \\
\argmin_{i,j\in[\cA]} \hat q^{\tildeG{s}}(s,i)-\hat q^{\tildeG{s}}(s,j) &\text{otherwise.} %
\end{cases}
\end{align*}
By \cref{eq:rangeq-def} and \cref{def:alpha-admissible} have that
\begin{align*}%
\abs{\hat q^{\tildeG{s}}(s,a^+(s))-\hat q^{\tildeG{s}}(s,a^-(s))} &= \range_\Q(s)
\ge \alpha |f(s)| \ge0 \,.%
\end{align*}

Since $q^{\tildeG{s}}(s,a^+(s))-q^{\tildeG{s}}(s,a^-(s)) \approx_{2\eta}
\hat q^{\tildeG{s}}(s,a^+(s))-\hat q^{\tildeG{s}}(s,a^-(s))$,
if $\alpha |f(s)|\ge 4\eta$, we have
\begin{align}\label{eq:range-gap-lb-22}
\begin{split}
q^{\tildeG{s}}(s,a^+(s))-q^{\tildeG{s}}(s,a^-(s))
\ge \alpha f(s)-2\eta\ge \frac{\alpha}{2} f(s) > 0
 &\quad\quad\text{if } f(s)\ge 0\\
q^{\tildeG{s}}(s,a^+(s))-q^{\tildeG{s}}(s,a^-(s))
\le \alpha f(s)+2\eta\le \frac{\alpha}{2}f(s)< 0 &\quad\quad\text{otherwise.}%
\end{split}
\end{align}
Let us define
$f':\cS_h\to\R$ as
\begin{align*}
  f'(s)=
\begin{cases}
	\frac{\alpha f(s)/2}{q^{\tildeG{s}}(s,a^+(s))-q^{\tildeG{s}}(s,a^-(s))}&\text{if }\alpha |f(s)|\ge 4\eta %
	\\
	0 &\text{otherwise.}%
\end{cases}\end{align*}
By \cref{eq:range-gap-lb-22}, there can be no division by zero in the above definition, and $0\le f'(s)\le 1$.

Now we are ready to define $\pi^+_k$ and $\pi^-_k$. Both policies follow $\pi$ up to stage $h-1$, when they switch to $G_{h,k}^\Q$, except if at stage $h$ a state $s \in \cS_h$ is such that $G_{h,k}^\Q$ has the maximal action-value function range. In this case $\pi^+_k$ selects $a^+(s)$ with probability $f'(s)$ and $a^-(s)$ with probability $1-f'(s)$, while $\pi^-_k$ always selects $a^-(s)$. Formally, for $k\in[d_0]$, we define for $s\in\cS$
\begin{align*}
  \pi^+_k(s)=
\begin{cases}
	\pi(s)&\text{if }\stage(s)< h; %
	\\
	a^+(s) \text{ w.p. } f'(s), \text{ and } a^-(s) \text{ w.p. } 1-f'(s)&\text{if } \stage(s)=h \text{ and } \ord(s)=k;
      \\
    G_{h,k}^\Q(s),
    &\text{otherwise,} %
\end{cases}\end{align*}
where w.p. stands for \emph{with probability}. Similarly,
\begin{align*}
  \pi^-_k(s)=
\begin{cases}
	\pi(s)&\text{if }\stage(s)< h; %
	\\
	a^-(s) \text{ w.p. } 1&\text{if } \stage(s)=h \text{ and } \ord(s)=k;
      \\
    G_{h,k}^\Q(s)
    &\text{otherwise.} %
\end{cases}\end{align*}
Note that $\pi^+_k\in\Pi$ and $\pi^-_k\in\Pi$, as desired.
Since for all $k\in[d_0]$, the policies follow $G_{h,k}$ for $s\in\cS_{t'}$ for $t'>h$,
therefore for all $k\in[d_0]$,
\begin{align}\label{eq:vs-same-later-new}
v^{\pi^-_k}(s)&=v^{\pi^+_k}(s)=v^{G_{h,k}^\Q}(s) \quad\text{for all $s\in\cS_{h+1}$, and}\\
q^{\pi^-_k}(s,a)&=q^{\pi^+_k}(s,a)=q^{G_{h,k}^\Q}(s,a) \quad\text{for all $(s,a)\in\cS_h\times[\cA]$.} \label{eq:qs-same-later-new}
\end{align} %

Also, %
for any $s\in\cS$ with $\stage(s)<h$ and any $a\in[\cA]$,
\begin{align*}
\begin{split}%
\sum_{k\in[d_0]} \left(q^{\pi^+_k}(s,a)-q^{\pi^-_k}(s,a)\right)
&= \E_{\pi,s,a} \sum_{k\in[d_0]} \left(v^{\pi^+_k}(S_h)-v^{\pi^-_k}(S_h)\right)\\
&= \E_{\pi,s,a}\left( v^{\pi^+_{\ord(S_h)}}(S_h)-v^{\pi^-_{\ord(S_h)}}(S_h) \right)\\
&= \E_{\pi,s,a}\Big( q^{\tildeG{S_h}}(S_h,a^+(S_h))f'(S_h)+
q^{\tildeG{S_h}}(S_h,a^-(S_h))(1-f'(S_h))\\
&\quad\quad\qquad-q^{\tildeG{S_h}}(S_h,a^-(S_h)) \Big)\\
&= \E_{\pi,s,a}\left(f'(S_h)\left( q^{\tildeG{S_h}}(S_h,a^+(S_h))-
q^{\tildeG{S_h}}(S_h,a^-(S_h))\right) \right) \\
&=\E_{\pi,s,a}\one{\alpha |f(S_h)|\ge 4\eta}\frac{\alpha}{2} f(S_h) \approx_{2\eta} \frac{\alpha}{2} \E_{\pi,s,a}f(S_h)\,,
\end{split}
\end{align*}
where %
the first line is due to both $q^{\pi^+_k}$ and $q^{\pi^-_k}$ following $\pi$ on states with stage less than $h$,
the second line follows from the fact that for any $s\in\cS_h$, $\pi^+_k(s)=\pi^-_k(s)$ for any $k\ne \ord(s)$; combining this with \cref{eq:vs-same-later-new} leads to all $k\ne \ord(s)$ terms of the sum to cancel.
The third line follows from expanding the definition of the policies and \cref{eq:qs-same-later-new}.

Let $\tilde \theta=\frac2\alpha \sum_{k\in[d_0]} \left(\theta_t(\pi^+_k)-\theta_t(\pi^-_k)\right)$.
Since $\norm{\theta_t(\cdot)}_2 \le \thetabound$ by definition, %
we have $\norm{\tilde \theta}_2\le 4d_0\thetabound/\alpha$.
By \cref{def:q-pi-realizable},
for all $(s,a)\in\cS_t\times[\cA]$,
\begin{align*}
\ip{\phi(s,a), \frac\alpha2 \tilde\theta} &\approx_{2d_0\eta} \sum_{k\in[d_0]} q^{\pi^+_k}(s,a)-q^{\pi^-_k}(s,a)
\approx_{2\eta} \frac{\alpha}{2} \E_{\pi,s,a}f(S_h),
\end{align*}
and hence
\begin{align*}
\ip{\phi(s,a), \tilde\theta} &\approx_{4(d_0+1)\eta/\alpha} \E_{\pi,s,a}f(S_h)\,.
\end{align*}
Since $4(d_0+1)\eta/\alpha \le \eta_0=5d_0\eta/\alpha$ as $d_0 \ge 4$ by definition, this completes the proof.
\end{proof}

\begin{proof}[\textbf{Proof of \cref{lem:dual-admissibility}}]\label{proof:dual-admissibility}
Take any $s\in\cS_h$.
For the correct guess, $\range_\Q^{\hat G}(s)=\range_\Q(s)$.
Then, using that $\norm{\bar{\phi}_\Q(\cdot)}_2\le1$,
$\trace(\bff(s))\le \range_\Q(s)\frac{\sqrt{2d}H^2}\epsilon$,
proving the second claim of the lemma (as $\gamma\le1$).

For the first claim, take any $\hat G=(\hat\vartheta^i_h)_{h\in[2:H],i\in[d_0]}\in\bG$.
Let $\phi'$ be the unnormalized version of $\bar{\phi}_\Q(s)$ of \cref{eq:phiwitness}, that is, $\phi'=\phi_\Q(s,i,j)$ for the same $i,j$ as in \cref{eq:phiwitness} (i.e., with the largest $\ell_2$-norm).
Then, using that $\hat G\in\bG$,
\begin{align*}
\range_\Q^{\hat G}(s) = 
\max_{k\in[d_0]}\max_{i,j} \ip{\phi_\Q(s,i,j),\hat\vartheta_h^k}
\le \norm{\phi'}_2 \max_{k\in[d_0]}\norm{\hat\vartheta_h^k}_2
\le \norm{\phi'}_2 \sqrt{d_1+1}.
\end{align*}

Using that above in combination with $|f(s)|\le H$, $v,w\in\B(1)$, $\norm{\bar{\phi}_\Q(s)}_2\le1$, we obtain
\begin{align*}
|v_\pa^\top\bff(s) w|
&\le \abs{\ip{\bar{\phi}_\Q(s), v_\pa}\ip{\bar{\phi}_\Q(s), w}} \range_\Q^{\hat G}(s)\frac{\sqrt{2d}H^2}\epsilon \\
&\le \norm{\bar{\phi}_\Q(s)_\pa}_2 \norm{\phi'}_2
\dfour  \frac{\sqrt{2d}H^2}\epsilon 
= \norm{\phi'_\pa}_2 \dfour  \frac{\sqrt{2d}H^2}\epsilon\,.
\end{align*}
As the eigenvalues of $V(G_h^\Q,\rho_h^\Q)=\sum_{\pi\in G_h^\Q} \rho_h^\Q(\pi) (\theta_h^\Q(\pi)){(\theta_h^\Q(\pi))}^\top$ corresponding to the subspace in which $\phi'_\pa$ lies are by definition at least $\gamma$, we can write
\begin{align*}
(\range_\Q(s))^2 \ge \max_{\pi\in G_h^\Q} \ip{\phi',\theta_h^\Q(\pi)}^2
\ge {\phi'}^\top V(G_h^\Q,\rho_h^\Q) \phi'
\ge {\phi'_\pa}^\top V(G_h^\Q,\rho_h^\Q) \phi'_\pa
\ge \norm{\phi'_\pa}_2^2 \gamma\,.
\end{align*}
Combining with the previous result, we get that 
\begin{align*}
\range_\Q(s)\ge \sqrt{\gamma}\norm{\phi'_\pa}_2 \ge \frac{\sqrt{\gamma}\epsilon}{\sqrt{2d}\dfour H^2} |v_\pa^\top\bff(s) w|
= \alpha |v_\pa^\top\bff(s) w|,
\end{align*}
finishing the proof.
\end{proof}

\section{Deferred proofs for \cref{sec:cons}}

The definitions (\cref{eq:fact:E-to-probab-form,eq:E-def}) immediately give rise to the following facts:
\begin{align}\begin{split}\label{eq:fact-bounds-E-D}
&D(\traj{i})\in\left[-\sum_{u=i}^{H} r_u,H\right]\subseteq[-H,H] \text{ and } \tau(\cdot)\in[0,1],\, \text{implying}\\
& E^\to(\traj{i})\in\left[-\sum_{u=i}^{H} r_u,H\right]\subseteq[-H,H],\, \text{implying}\\
&E(\traj{i})\in[-2\tau(s_i) H,2\tau(s_i) H]\subseteq [-2H,2H]\,.
\end{split}\end{align}
Furthermore, since either $\tau(s_i)=0$ or $\norm{\bar{\phi}_\Q(s_i)}_2=1$ (as $\norm{\bar{\phi}_\Q(s_i)}=0$ implies that $\range_\Q(s_i)$ and hence $\tau(s_i)$ are both zero), we have
\begin{align}\label{eq:tr-f-is-e}
\trace(F(\traj{i}))=E(\traj{i})\,,
\end{align}
which was used to establish the last part of \cref{cor:F-components-realizable}.%

\begin{proof}[\textbf{Proof of \cref{lem:v-can-be-projd}}]\label{proof:v-can-be-projd}
We drop the subscripts $(\hat G,\bar\theta)$.
Let $(\hat\vartheta^i_h)_{h\in[2:H],i\in[d_0]}=\hat G\in\bG$.
Let $z=v-w$ be the projection of $v$ to the subspace orthogonal to $Z(\Q,i)$, denoted by ${Z(\Q,i)}^\perp$.
In other words, $z=\Proj_{{Z(\Q,i)}^\perp}v$.
Let $\bM=y^{ki}-\hat F^{ki}$. By the symmetry of $\bM$,
\[
v^\top \bM v = z^\top \bM (v+w) + w^\top \bM w\,.
\]
It is enough to prove therefore that
\[
\frac{\epsilon}{dH^2\omega} \ge z^\top \bM (v+w)\,.
\]
As $\norm{v}_2\le1$ and $\norm{v+w}_2\le2$, and using the definitions and \cref{eq:fact-bounds-E-D}, for any input $(\traj{i})$,
\begin{align*}
\abs{z^\top F(\traj{i}) (v+w)}
&= \abs{\ip{z,\bar\phi_\Q(s_i)}\ip{v+w,\bar\phi_\Q(s_i)} E(\traj{i})}\\
&\le 4H\tau(s_i)\abs{\ip{z,\bar\phi_\Q(s_i)}}
\le 4\range_\Q^{\hat G}(s_i)\frac{\sqrt{2d}H^2}{\epsilon}\abs{\ip{z,\bar\phi_\Q(s_i)}} \\
&\le
4\abs{\ip{z,\bar\phi_\Q(s_i)}}
\max_{a,b,k\in[d_0]} \ip{\phi_\Q(s,a,b),\hat\vartheta_h^k} \frac{\sqrt{2d}H^2}\epsilon\\
&\le 4\norm{\Proj_{{Z(\Q,i)}^\perp}\bar{\phi}_\Q(s)}_2 \norm{\phi'}_2 \max_{k\in[d_0]}\norm{\hat\vartheta_h^k}_2 \frac{\sqrt{2d}H^2}\epsilon\\
&\le 4\norm{\Proj_{{Z(\Q,i)}^\perp} \phi'}_2 \dfour  \frac{\sqrt{2d}H^2}\epsilon\,,
\end{align*}
where $\phi'$ is the unnormalized version of $\bar{\phi}_\Q(s_i)$ of \cref{eq:phiwitness}, that is, $\phi'=\phi_\Q(s_i,a,b)$ for the same $a,b$ as in \cref{eq:phiwitness} (i.e., with the largest $\ell_2$-norm).

As $\Proj_{{Z(\Q,i)}^\perp}\phi'=\Proj_{{Z(\Q,i)}^\perp}(\phi_\Q(s,a)-\phi_\Q(s,b))=\Proj_{{Z(\Q,i)}^\perp} \Q_i (\phi(s,a)-\phi(s,b))$ for some $s\in\cS_i,a,b\in[\cA]$,
and by definition $\Proj_{{Z(\Q,i)}^\perp} \Q_i \mle \precondbound^{-2} I$,
$\norm{\Proj_{{Z(\Q,i)}^\perp}\phi'}_2\le \precondbound^{-2}\norm{\phi(s,a)-\phi(s,b)}_2\le 2 \precondbound^{-2} \featurebound$, so
\begin{align}
\abs{z^\top F(\traj{i}) (v+w)}
&\le 8\precondbound^{-2} \featurebound \dfour  \frac{\sqrt{2d}H^2}\epsilon, \label{eq:Fbound-z}
\end{align}
and hence
\begin{align}
\abs{z^\top \hat F^{ki} (v+w)}
&\le 8\precondbound^{-2} \featurebound \dfour  \frac{\sqrt{2d}H^2}\epsilon \,. \label{eq:hatF-small-z}
\end{align}
To bound $\abs{z^\top y^{ki} (v+w)}$,
note that by the definition $y^{ki}$,
\begin{align*}
z^\top y^{ki} (v+w) &= \frac1n\sum_{j\in[n]}c^j_{ki} \ip{\phi^{mkj}_{\pshort{k}}, \check\theta^{\psuper{mkj}{k},i}} \\
\text{where}\quad\quad
\check\theta^{ti} &= X_{mt}^{-1} \sum_{lkj\in \bI^m(t)} \phi^{lkj}_t \left(z^\top F(S^{lkj}_i,\dots,\Sr^{lkj}) (v+w)\right) \quad\quad\text{for } t\in[i-1]
\end{align*}
Therefore
\[
\abs{z^\top y^{ki} (v+w)} \le \max_{t\in[i-1],s\in\cS_t,a\in[\cA]}  \ip{\phi(s,a), \check\theta^{ti}}\,.
\]
Fix any $t\in[i-1],s\in\cS_t,a\in[\cA]$.
By repeated application of the Cauchy-Schwarz inequality, the fact that $X_{mt} \mge  \lambda I$, the triangle inequality, and using \cref{eq:Fbound-z},
\begin{align*}
\abs{\ip{\phi(s,a), \check\theta^{ti}}}
&\le \norm{\phi(s,a)}_{X_{mt}^{-1}}\norm{\sum_{lkj\in \bI^m(t)} \phi^{lkj}_t \left(z^\top F(S^{lkj}_i,\dots,\Sr^{lkj}) (v+w)\right)}_{X_{mt}^{-1}}\\
&\le \norm{\phi(s,a)}_2 \lambda^{-1/2} \cdot 8\precondbound^{-2}\featurebound \dfour  \frac{\sqrt{2d}H^2}\epsilon \sum_{lkj\in \bI^m(t)}\norm{\phi^{lkj}_t}_{X_{mt}^{-1}}\\
&\le 8\precondbound^{-2}\featurebound^2 \lambda^{-1/2}\dfour \frac{\sqrt{2d}H^2}\epsilon \sqrt{|\bI^m(t)|}\sqrt{\sum_{lkj\in \bI^m(t)}\norm{\phi^{lkj}_t}_{X_{mt}^{-1}}^2}\\
&\le 8\precondbound^{-2}\featurebound^2 \lambda^{-1/2}\dfour \frac{\sqrt{2d}H^2}\epsilon \sqrt{\mmax n H d}\,,
\end{align*}
where we use that $|\bI^m(t)| \le m n H$, $m \le \mmax$ by \cref{lem:m-max}, and that
\[
\sqrt{\sum_{lkj\in \bI^m(t)}\norm{\phi^{lkj}_t}_{X_{mt}^{-1}}^2} = \sqrt{\sum_{lkj\in \bI^m(t)}\trace(X_{mt}^{-1} \phi^{lkj}_t {\phi^{lkj}_t}^\top)}\le \sqrt{\trace{X_{mt}^{-1}X_{mt}}}=\sqrt{d}\,.
\]
Combining with \cref{eq:hatF-small-z}, with an appropriate choice of $\precondbound$, we obtain
\begin{align}\label{eq:precondbound-choice}
\abs{z^\top \bM (v+w)}
&\le 8\precondbound^{-2}\featurebound \dfour \frac{\sqrt{2d}H^2}{\epsilon}\left(1+\featurebound \lambda^{-1/2}\sqrt{\mmax n H d}\right)
\le \frac{\epsilon}{dH^2\omega}
\end{align}
as desired.
\end{proof}

\begin{proof}[\textbf{Proof of \cref{lem:lse-measure-good}}]\label{proof:lse-measure-good}
Choose $\beta$ 
\begin{align}\label{eq:beta-choice}
\beta &\le 2+ 2H\sqrt{2dH(d_0+1) \log\frac{12d_0H\thetabound}{\alpha\xi} + 2\log\frac{\mpmax H^2}{\zeta} + d\log\left(\lambda + \mpmax nH\featurebound^2/d\right)}, \end{align}
satisfying $\beta=\ordot(H^{3/2}d)$ as given in \cref{eq:beta-approx}, and define
\begin{align*}
\xi&=\frac{\epsilon}{5\sqrt{2d}(H+1)^3\featurebound}\left(\min\left\{\epsilon/(dH^2\omega),1/\sqrt{\mpmax nH}\right\}-\eta_0\right).
\end{align*}
Note that subtracting $\eta_0$ keeps $\xi$ positive, and of the same order, by our assumption that $\eta$ is small enough:
$\eta_0\le\frac12 \min\left\{\epsilon/(dH^2\omega),1/\sqrt{\mpmax nH}\right\}$, which follows from \cref{eq:eta-small-assumption}.

We start with a covering argument for the set of functions of the form $v_\pa^\top \bar F_{\hat G \bar\theta}w$, for different choices of $\hat G$, $\bar\theta$, $v$, and $w$.
By \cite[Corollary 4.2.13]{vershynin2018high},
there is a set $C_\xi\subset \B(1)$ with $|C_\xi|\le (3/\xi)^d$ such that
for all $x\in \B(1)$ there exists a $y\in C_\xi$ with $\norm{x-y}_2\le\xi$.
Therefore, there is a set $C_\xi^\times \subset \left(\bigtimes_{h\in[2:H],k\in[d_0]} \B(\dfour )\right)\times \left(\bigtimes_{h\in[2:H]} \B(4d_0H\thetabound/\alpha)\right)\times \B(1)\times \B(1)$
with $|C_\xi^\times|\le (12d_0H\thetabound/(\alpha\xi))^{dH(d_0+1)}$ such that
for any $\hat G=(\hat\vartheta^i_h)_{h\in[2:H],i\in[d_0]}\in\bG$, $\bar\theta\in\bbt$, and $v,w\in\B(1)$,
there exists a $y\in C_\xi^\times$, such that if we let
$\tilde G=(\tilde\vartheta^i)_{h\in[2:H],i\in[d_0]}=(y_{(h-1)d_0+i})_{h\in[2:H],i\in[d_0]}$,
$\tilde\theta=(\tilde\theta_h)_{h\in[2:H]}=(y_{(H-1)d_0+h})_{h\in[2:H]}$,
 and $a=y_{(H-1)(d_0+1)+1}$, $b=y_{(H-1)(d_0+1)+2}$,
then $\tilde G\in\bG$, $\tilde\theta\in\bbt$, $a,b\in\B(1)$,
and
\[
\norm{a-v}_2\le\xi\quad\text{and}\quad \norm{b-w}_2\le\xi\,,\quad\text{and}
\]
\[ %
\norm{\hat\vartheta^i_h-\tilde\vartheta^i}_2\le\xi\,\,\text{and}\,\,
\norm{\bar\theta_h-\tilde\theta_h}_2\le\xi
\,\,\text{for all } h\in[2:H],i\in[d_0]\,.
\]
As a result, for all $s\in\cS\setminus\cS_1$, %
$|\range_\Q^{\hat G}(s) - \range_\Q^{\tilde G}(s)|
\le 2\featurebound \xi$, and
therefore $|\tau_{\hat G \bar\theta}(s) - \tau_{\tilde G \tilde\theta}(s)|
\le 2\sqrt{2d} H\featurebound \xi/\epsilon$. 
Furthermore, $|D_{\hat G \bar\theta}(s,\dots,\sr) - D_{\tilde G \tilde\theta}(s,\dots,\sr)|
\le \featurebound \xi$.
Combining these with the facts that in either case, $\tau(\cdot)\in[0,1]$, $D(\cdot)\in[-H,H]$, and $E^\to(\cdot)\in[-H,H]$ (\cref{eq:fact-bounds-E-D}),
and using the definition of $E$ and $E^\to$,%
we have that for any $i\in[H+1]$ and inputs,
\begin{align*}
|E_{\hat G \bar\theta}(\traj{i})-E_{\tilde G \tilde\theta}(\traj{i})| &\le
4\sqrt{2d}H^2\featurebound\xi/\epsilon + \featurebound \xi +
|E_{\hat G \bar\theta}^\to(\traj{i+1})-E_{\tilde G \tilde\theta}^\to(\traj{i+1})| \\
&= 4\sqrt{2d}H^2\featurebound\xi/\epsilon + \featurebound \xi +
\sum_{j=i+1}^{H}
|E_{\hat G \bar\theta}(\traj{j})-E_{\tilde G \tilde\theta}(\traj{j})| \\
&\le (H+1)5\sqrt{2d}H^2\featurebound\xi/\epsilon\,,
\end{align*}
where the first inequality sums over the contributions of $\tau$, $D$, and $E^\to$, and the second applies induction.
By combining this bound with the bounds on $\norm{v-a}_2$ and $\norm{w-b}_2$, and that $E(\cdot)\in[-2H,2H]$ (\cref{eq:fact-bounds-E-D}) implying that $\bar F(\cdot)\in[-2H,2H]$, for all $s\in\cS\setminus\cS_1$, we have that
\begin{align}\label{eq:cover-barF}
\begin{split}
\abs{v_\pa^\top \bar F_{\hat G \bar\theta}(s)w-a_\pa^\top \bar F_{\tilde G \tilde\theta}b}(s)
& \le
6H\xi+(H+1)5\sqrt{2d}H^2\featurebound\xi/\epsilon \\
&\le 5\sqrt{2d}(H+1)^3\featurebound\xi/\epsilon=\min\{\epsilon/(dH^2\omega),1/\sqrt{\mpmax nH}\}-\eta_0
\end{split}
\end{align}

Take any $m'\in[\mpmax]$ (%
this includes the entire execution of \qpieleanor).
and let the quantities of \cref{sec:lse-targets}
(such as $F$) be calculated with the value of $\Q$ at the beginning iteration $m'$ (Line~\ref{line:opt}).
Take any $t\in[H-1]$, $i\in[t+1:H]$.
Take any $y\in C^\times_\xi$ and assign values to $a,b,\tilde G,\text{and }\tilde\theta$ based on $y$ as above.
For any $lkj\in\bI^m(t)$, observe that given all the history of \qpieleanor interacting with the MDP up to (and including) $S^{lkj}_t,A^{lkj}_t$,
the trajectory $S^{lkj}_{t+1},A^{lkj}_{t+1},\dots,\Sr^{lkj}$ is an independent rollout with policy $\pi^0$,
with its law given by $\P_{\pi^0,S^{lkj}_t,A^{lkj}_t}$.
The random variable $a_\pa^\top F_{\tilde G \tilde\theta}(S^{lkj}_i\dots,\Sr^{lkj})b$ has range $[-2H,2H]$ and expectation (conditioned on this history) $\E_{\pi^0,S^{lkj}_t,A^{lkj}_t} a_\pa^\top \bar F_{\tilde G \tilde\theta}(S_i)b$.
Let $\check\theta_{ti}$ be $\tilde\theta_{ti}$ from \cref{cor:F-components-realizable}, satisfying
$\norm{\check\theta_{ti}}_2\le 1/\sqrt{\lambda}$ and
\cref{eq:barF-realizable} for $a_\pa$, $b$, $\tilde G$, and $\tilde\theta$ instead of $v_\pa$, $w$, $\hat G$, and $\bar\theta$:
\begin{align}\label{eq:checktheta-satisfy}
\E_{\pi^0,s,a} a_\pa^\top\bar F_{\tilde G \tilde\theta}(S_i)b \approx_{\eta_0} \ip{\phi(s,a),\check\theta_{ti}}\,.
\end{align}
Take the sequence $A$ formed of $\phi^{lkj}_t$ (for $lkj\in\bI^m(t)$, in the order that these random variables are observed), and the sequence $X$ formed of
$v_\pa F_{\hat G \bar\theta}(S^{lkj}_i,\dots,\Sr^{lkj}) w$ (for $lkj\in\bI^m(t)$, in the same order),
and the sequence $\Delta$ formed of
$\E_{\pi^0,S^{lkj}_t,A^{lkj}_t} v_\pa \bar F_{\hat G \bar\theta}(S_i) w - \ip{\phi^{lkj}_t,\check\theta_{ti}}$
(for $lkj\in\bI^m(t)$, in the same order, for any $v,w,\hat G,\text{and }\bar\theta$ as in the statement of this lemma).
Then the sequences $A$, $X$, and $\Delta$ satisfy the conditions of \cref{lem:generalised-bandits-book} with a subgaussianity parameter $\sigma=2H$.
Due to this lemma,
with probability at least $1-\zeta/(\mpmax H^2|C^\times_\xi|)$,
for any choice of $v,w,\hat G,\text{and }\bar\theta$ (as above),
\begin{align}\label{eq:lse-guarantee-app}
\norm{\tilde\theta_{ti}-\check\theta_{ti}}_{X_{mt}}&<
\sqrt{\lambda}\norm{\check\theta_{ti}}_2  +  \norm{\Delta}_\infty\sqrt{|\bI^m(t)|} +  2H\sqrt{2\log\left(\frac{\mpmax H^2|C^\times_\xi|}\zeta\right)+\log\left(\frac{\det X_{mt}}{{\lambda}^d}\right)} \\
\text{where}\quad\quad \tilde\theta_{ti}&=
X_{mt}^{-1} \sum_{lkj\in \bI^m(t)} \phi^{lkj}_t v_\pa^\top F_{\hat G \bar\theta}(S^{lkj}_i,\dots,\Sr^{lkj}) w \nonumber
\end{align}
A union bound over all $m'\in[\mpmax]$, $t,i$, and $y\in C^\times_\xi$ guarantees with probability at least $1-\zeta$, the above holds for all choice of these variables, any time
beginning of any iteration (Line~\ref{line:opt}) is executed.
Note that we need the union bound over $m$ because the value of $\Q$ underlying the targets of least-squares estimations can potentially change between iterations.

To finish the proof, under this high-probability event, take any $m,t,i,\hat G, \text{and }\bar\theta$ as in the statement of this lemma, and choose $y\in C^\times_\xi$ as before, to satisfy \cref{eq:cover-barF}.
Combined with \cref{eq:checktheta-satisfy},
this immediately implies that the sequence $\Delta$ formed of quantities with absolute value
\begin{align}\begin{split}\label{eq:cover-corr}
&\abs{\E_{\pi^0,S^{lkj}_t,A^{lkj}_t} v_\pa \bar F_{\hat G \bar\theta}(S_i) w - \ip{\phi^{lkj}_t,\check\theta_{ti}}}\\
&\quad\le \abs{\E_{\pi^0,S^{lkj}_t,A^{lkj}_t} v_\pa \bar F_{\hat G \bar\theta}(S_i) w - a_\pa \bar F_{\tilde G \tilde\theta}(S_i) b}
 + \abs{a_\pa \bar F_{\tilde G \tilde\theta}(S_i) b - \ip{\phi^{lkj}_t,\check\theta_{ti}}}\\
&\quad\le \min\{\epsilon/(dH^2\omega),1/\sqrt{\mpmax nH}\}-\eta_0+ \eta_0
\end{split}\end{align}
satisfies $\norm{\Delta}_\infty\le \min\{\epsilon/(dH^2\omega),1/\sqrt{\mpmax nH}\}$.
Take any $(s,a)\in\cS_t\times[\cA]$, and let $\tilde\theta_{ti}$ and $\check\theta_{ti}$ be as above (in \cref{eq:lse-guarantee-app}) for $v_\pa$, $w$, $\hat G$, and $\bar\theta$.
Note that
\[
v_\pa^\top {\phi(s,a)}^\top \hat\theta^{ti}_{\hat G \bar\theta} w
= \ip{\phi(s,a),\tilde\theta_{ti}}\,,
\]
By the triangle inequality, using Cauchy-Schwarz, and \cref{eq:lse-guarantee-app,eq:cover-corr},
\begin{align}\label{eq:final-lse-beta}\begin{split}
&\abs{v_\pa^\top\left({\phi(s,a)}^\top \hat\theta^{ti}_{\hat G \bar\theta} -
\E_{\pi^0,s,a} \bar F_{\hat G \bar\theta}(S_i) \right)w } \\
 &\quad\le
 \abs{\ip{\phi(s,a),\tilde\theta_{ti}-\check\theta_{ti}}} +
 \abs{\E_{\pi^0,s,a} v_\pa^\top \bar F_{\hat G \bar\theta}(S_i) w - \ip{\ip{\phi(s,a),\check\theta_{ti}}}}\\
 &\quad\le \norm{\phi(s,a)}_{X_{mt}^{-1}}
 \left(\sqrt{\lambda}\norm{\check\theta_{ti}}_2  +  \frac{\sqrt{|\bI^m(t)|}}{\sqrt{\mpmax nH}} +  2H\sqrt{2\log\left(\frac{\mpmax H^2|C^\times_\xi|}\zeta\right)+\log\left(\frac{\det X_{mt}}{{\lambda}^d}\right)} \right)
 + \frac{\epsilon}{dH^2\omega}\\
 &\quad\le \norm{\phi(s,a)}_{X_{mt}^{-1}}
 \left(2+ 2H\sqrt{2dH(d_0+1) \log\frac{12d_0H\thetabound}{\alpha\xi} + 2\log\frac{\mpmax H^2}{\zeta} + d\log\left(\lambda + \mpmax nH\featurebound^2/d\right)}\right) + \frac{\epsilon}{dH^2\omega}\\
 &\quad\le \norm{\phi(s,a)}_{X_{mt}^{-1}}\beta + \frac{\epsilon}{dH^2\omega}\,,
 \end{split}
\end{align}
where in the fourth line we used that $|\bI^m(t)|\le \mpmax nH$, $|C^\times_\xi|\le(12d_0H\thetabound/(\alpha\xi))^{dH(d_0+1)}$, and we used the inequality of arithmetic and geometric means to bound $\det X_{mt}
\le \left(\frac1d \trace X_{mt}\right)^d\le \left(\frac{\trace \lambda I + |\bI^m(t)|\featurebound^2}{d}\right)^d$. %
\end{proof}

\begin{proof}[\textbf{Proof of \cref{lem:avg-measure-good}}]\label{proof:avg-measure-good}
Choose $n$ to satisfy
\begin{align}\label{eq:n-def}
n &= \ceil{64\frac{(dH^2\omega)^2}{\epsilon^2}H^2  \left(2d\log\frac{18dH^3}{\epsilon}+\log\frac{2\mpmax H^2}{\zeta}\right)}\,.
\end{align}
This leads to $n=\ordot(d^5 H^6 / \epsilon^2 )$.

Similarly to the proof of \cref{lem:lse-measure-good}, we start with a covering argument.
This time, as $\hat G$ and $\bar\theta$ are fixed, we only consider $v$ and $w$, to cover $v_\pa^\top\bar F^{(j)}_{t'} w$ and $v_\pa\hat F^{(j)}_{t'} w$.
Let $\xi'=\frac{\epsilon}{12dH^3}$.
There is a set $C_{\xi'}^+\subset \B(1)\times\B(1)$ with $|C_{\xi'}|\le(3/\xi')^{2d}$ such that for all $v,w\in\B(1)$,
there exists an $(a,b)\in C_{\xi'}^+$ with $\norm{v-a}_2\le\xi'$ (and therefore $\norm{v_\pa-a_\pa}_2\le\xi'$), and $\norm{w-b}_2\le\xi'$.
Take such a choice of $(a,b)$ for any $(v,w)$.
As $E(\cdot)\in[-2H,2H]$ by \cref{eq:fact-bounds-E-D},
and $\norm{\bar{\phi}_\Q(\cdot)}_2\le1$,
For $i\in[2:H]$ and any input,
\[
\abs{v_\pa^\top F(\traj{i})w - a_\pa^\top F(\traj{i})b}
\le 6H\xi' = \frac{\epsilon}{2dH^2} \,,
\]
and therefore for any $s\in\cS\setminus\cS_1$, $\abs{v_\pa^\top \bar F(s)w - a_\pa^\top \bar F(s)b}
\le \epsilon/(2dH^2)$.
For $j\in[n]$ let
\[
\tilde F^{ki}_j= \E_{\pi^0,S^{mkj}_{\pshort{k}},A^{mkj}_{\pshort{k}}}  F_{\hat G \bar\theta}(S^{mkj}_i,\dots,\Sr^{mkj}) =  \E_{\pi^0,S^{mkj}_{\pshort{k}},A^{mkj}_{\pshort{k}}} \bar F_{\hat G \bar\theta}(S^{mkj}_i)
\]
By the triangle inequality, for any $k\in[H-1]$, $i\in[k+1:H]$,
\begin{align}\label{eq:hoeff-tofinish}
\begin{split}
&\abs{
v_\pa^\top \left(y^{ki}_{\hat G \bar\theta}-\hat F^{ki}_{\hat G \bar\theta}\right) w} \\
&\quad\le
\abs{\frac1n\sum_{j\in[n]}c^j_{ki} v^\top\left({\phi^{mkj}_{\pshort{k}}}^\top \hat\theta^{\psuper{mkj}{k},i}_{\hat G \bar\theta} -
\tilde F^{ki}_j
\right) w}
+ \abs{\frac1n\sum_{j\in[n]}c^j_{ki} v^\top\left(
\tilde F^{ki}_j
 - F_{\hat G \bar\theta}(S^{mkj}_i,\dots,\Sr^{mkj})\right) w} \\
&\quad\le
\frac1n\sum_{j\in[n]}c^j_{ki}
	\norm{\phi^{mkj}_{\pshort{k}}}_{X_{m,\psuper{mkj}{k}}^{-1}}\beta + \frac{\epsilon}{dH^2\omega}
	+ \frac{\epsilon}{dH^2\omega}
	+ \abs{\frac1n\sum_{j\in[n]}c^j_{ki}a^\top\left(
\tilde F^{ki}_j
 - F_{\hat G \bar\theta}(S^{mkj}_i,\dots,\Sr^{mkj})\right) b}
\,,
\end{split}\end{align}
where the second inequality uses \cref{lem:lse-measure-good} and applies the triangle inequality twice again.
Observe that for all $j\in[n]$, given all the history of \qpieleanor interacting with the MDP up to (and including) $S^{mkj}_{\pshort{k}},A^{mkj}_{\pshort{k}}$ (which also includes the value of $c^j_{ki}$ for $i\in[H+1]$),
the trajectory $S^{mkj}_{\pshort{k}+1},A^{mkj}_{\pshort{k}+1},\dots,\Sr^{mkj}$ is
an independent rollout with policy $\pi^0$,
with its law given by $\P_{\pi^0,S^{mkj}_{\pshort{k}},A^{mkj}_{\pshort{k}}}$.
Therefore, for any fixed $(a,b)\in C^+_{\xi'}$,
$c^j_{ki} a^\top\left(
\tilde F^{ki}_j
 - F_{\hat G \bar\theta}(S^{mkj}_i,\dots,\Sr^{mkj})\right) b$
are independent zero-mean random variables with range $[-4H,4H]$.
Applying Hoeffding's inequality with a union bound over $m',k,i,a$, and $b$, with probability at least $1-\zeta$,
for any of the $m'\in [\mpmax]$ times the beginning of the iteration (Line~\ref{line:opt}) is executed (%
this includes the entire execution of \qpieleanor),
\begin{align*}
\abs{\frac1n\sum_{j\in[n]}c^j_{ki} a^\top\left(
\tilde F^{ki}_j
 - F_{\hat G \bar\theta}(S^{mkj}_i,\dots,\Sr^{mkj})\right) b}
 &\le \frac{8H}{\sqrt{n}} \sqrt{\log\frac{2\mpmax H^2|C^+_{\xi'}|}{\zeta}}\\
 &=\frac{8H}{\sqrt{n}} \sqrt{2d\log\frac{18dH^3}{\epsilon}+\log\frac{2\mpmax H^2}{\zeta}}\le \frac{\epsilon}{dH^2\omega}\,,
\end{align*}
where we used \cref{eq:n-def}.
To finish, note that unless $c^j_{ki}=0$,
$\norm{\phi^{mkj}_{\pshort{k}}}_{X_{m,\psuper{mkj}{k}}^{-1}}< 2(\beta\omega dH)^{-1}$, so we can continue from \cref{eq:hoeff-tofinish}
by bounding the average feature-norm by $\bar\sigma^m_k$ as
\[
\abs{
v_\pa^\top \left(y^{ki}_{\hat G \bar\theta}-\hat F^{ki}_{\hat G \bar\theta}\right) w}
\le \bar\sigma^m_k\beta + 3\frac{\epsilon}{dH^2\omega}\,. \qedhere
\]
\end{proof}

\begin{proof}[\textbf{Proof of \cref{lem:decoy-find}}]
Recall that $(k,i,v)$ are the arguments and $x$ the value of \cref{def:consistency-opt}.
Throughout the proof we write $\Q$ to refer to its value \emph{just before} Line~\ref{line:decoydetect} is executed.
We write $\bullet_\pa$ for $\bullet_{\pa(\Q,i)}$, and
$\bullet_\perp$ for $\bullet_{\perp(\Q,i)}$.
Let $\bM=y^{ki}_{\hat G \bar\theta}-\hat F^{ki}_{\hat G \bar\theta}$.
Therefore, $v^\top \bM v = x > \bar\sigma_k^m\beta\omega+3\frac{\epsilon}{dH^2}$, and by \cref{lem:v-can-be-projd},
$w^\top \bM w > \bar\sigma_k^m\beta\omega+2\frac{\epsilon}{dH^2}$.

Line~\ref{line:decoydetect} changes $\Q_i$ by appending $\Q_i^{-1}w$ to the sequence $C_i$ of vectors from which $\Q$ is calculated according to \cref{eq:q-form}.
\cref{eq:precond-lownorm} lists the conditions on the new sequence $C_i$ that need to be satisfied
for $\Q$ to stay a valid preconditioning.
Consider the third condition, i.e.,
$\norm{\Q_i^{-1}w}_2\le \precondbound$.
Observe that $\Q_i^{-1} \Proj_{Z(\Q,i)} \mle \precondbound^{2} I$ and $\norm{v}_2=1$, therefore
$\norm{\Q_i^{-1}w}_2=\norm{\Q_i^{-1}\Proj_{Z(\Q,i)} v}_2\le \precondbound$.

Now consider the second condition.
To prove that it holds, we need to show that $\norm{\Q_i \Q_i^{-1} w}_2=\norm{w}_2\ge\frac12$.
Let $x=\norm{w}_2^{-1}$. Since $v$ was the argument of the optimization problem, and using \cref{lem:v-can-be-projd},
\[
x^2 w^\top \bM w \le v^\top \bM v \le w^\top \bM w + \frac{\epsilon}{dH^2\omega} \le w^\top \bM w(1+1/2)
\]
Therefore, $\norm{w}_2^2 \ge \frac23$.
We immediately get that
\[
\norm{\Q_i \Q_i^{-1} w}_2^2\ge\frac23\,,
\]
satisfying the second condition.

It remains to prove that the first condition also holds.
First, noting that $\bM$ is symmetric, we can decompose $w^\top \bM w$ as
\[
w^\top \bM w= w_\pa^\top \bM w + w_\pa^\top \bM w_\perp + w_\perp^\top \bM w_\perp\,.
\]
Applying \cref{lem:avg-measure-good} on the first two terms,
\[
w^\top \bM w \le 2\bar\sigma_k^m\beta + 6\frac{\epsilon}{dH^2\omega} + w_\perp^\top \bM w_\perp\,.
\]
Due to $\omega>3$ and $w^\top \bM w > \bar\sigma_k^m\beta\omega+2\frac{\epsilon}{dH^2}$ and the above, $w_\perp\ne \mathbf{0}$.
Let $w'=w_\perp/\norm{w_\perp}_2$.
Since $v$ was the argument of the optimization problem, have that $v^\top \bM v\ge {w'}^\top \bM {w'}$.
Putting this together,
\[
\norm{w_\perp}_2^{-2}w_\perp^\top \bM w_\perp={w'}^\top \bM {w'}\le v^\top \bM v
\le w^\top \bM w + \frac{\epsilon}{dH^2\omega}
 \le 2\bar\sigma_k^m\beta + 7\frac{\epsilon}{dH^2\omega} + w_\perp^\top \bM w_\perp\,,
\]
Since $v^\top \bM v > \bar\sigma_k^m\beta\omega+3\frac{\epsilon}{dH^2}$, $w_\perp^\top \bM w_\perp\ge (\omega-7/3)\left(\bar\sigma_k^m\beta+3\frac{\epsilon}{dH^2\omega}\right)>0$
and therefore dividing the above by $w_\perp^\top \bM w_\perp$,
\begin{align*}
\norm{w_\perp}_2^{-2} &\le \frac{7/3}{\omega-7/3} + 1\\
\norm{w_\perp}_2^2 &\ge \frac1{1+c}\quad\quad\text{for } c=\frac{7/3}{\omega-7/3}\\
\norm{w_\pa}_2^2 &\le 1-\frac{1}{1+c}\quad\quad\text{as } \norm{w}_2\le 1\,.
\end{align*}
Now to prove that the first condition also holds,
\begin{align*}
\sup_{\theta\in\Theta_i}\abs{\ip{\theta,\Q_i^{-1}w}}
&=\sup_{\theta\in\Theta_i^\Q}\abs{\ip{\theta,w}}
\le \sup_{\theta\in\Theta_i^\Q}\norm{\theta}_2\norm{w_\pa}_2 + \sup_{\theta\in\Theta_i^\Q}\abs{\ip{\theta,w_\perp}}\\
&\le \dfour \sqrt{1-\frac{1}{1+c}} + \sup_{\theta\in\Theta_i^\Q}\norm{\theta}_{V(G_h^\Q, \rho_h^\Q)^\dag} \norm{w_\perp}_{V(G_h^\Q, \rho_h^\Q)}\\
&\le \dfour \sqrt{1-\frac{1}{1+c}} + \sqrt{2d w_\perp^\top (\gamma I) w_\perp} \\
&\le \dfour \sqrt{1-\frac{1}{1+c}} + \sqrt{2d\gamma} = \dfour \sqrt{1-\frac{1}{1+c}} + \frac12 \,,
\end{align*}
where in the second line we used \cref{lem:theta-precond-norm-bound} to bound $\sup_{\theta\in\Theta_i^\Q}\norm{\theta}_2$,
and for the second term we used \cref{eq:pseudoinv-cool-for-cauchy} with Cauchy-Schwarz.
In the third line we used \cref{eq:near-opt-2d},
and the definition of $\Proj_\perp$.
Finally in the last line we use that $w_\perp$ is perpendicular to $a_i$ for $i\le d'$ (by definition) and that $\lambda_i\le \gamma$ for $i>d'$.
It is left to prove that $\dfour \sqrt{1-\frac{1}{1+c}}\le\frac12$.
This holds if $c\ge 1/(4\dfoursq -1)$, which is satisfied as $c=1/(3\dfoursq )$, due to $\omega=7\dfoursq +7/3$ (\cref{eq:omega-def}).
\end{proof}

\section{Deferred proofs for \cref{sec:sampcompbound}}

\begin{proof}[\textbf{Proof of \cref{lem:m-max}}]\label{proof:m-max}
The features $\phi^{lkj}_{\pshort{k}}$ are observed by \qpieleanor in the order of increasing $l$, within that increasing $k$, and within that, increasing $j$.
Each time the next $\phi^{lkj}_{\pshort{k}}$ is observed, we
 sum the elliptic potential as follows.

For $i\in[m],r\in[H],u\in[n],t\in[H]$, let
the set of indices observed before $\phi^{iru}_{\pshort{r}}$ whose Phase II (rollout phase) starts at some stage $t$ be:
\[
\bI^{iru}(t) = \{l\in[i],k\in[H], j\in[n]\,:\, lHn+kn+j < iHn+rn+u \text{ and } \psuper{lkj}{k}=t \}
\]
Let a version of this where only the whole iteration $i$'s data is included be
\[
\bJ^{i}(t) = \{l=i,k\in[H], j\in[n]\,:\, \psuper{lkj}{k}=t \}
\]
Let
\[
X_{iru}(t) = \lambda I + \sum_{lkj\in\bI^{iru}(t)} \phi^{lkj}_{\pshort{k}} { \phi^{lkj}_{\pshort{k}}}^\top
\]
Observe that $X_{it}$, defined in \cref{def:opt-problem}, is the version of this that only updates at the start of each iteration $i$, that is,
\[
X_{it} = X_{i11}(t)\,.
\]
The total elliptic potential, observed by the end of iteration $m$ is, writing $k=\psuper{iru}{r}$ on the left hand side:
\[
\sum_{i\in[m],r\in[H],u\in[n]}
\I{k<H+1} \min\left\{1, \norm{\phi^{iru}_k}_{{X_{iru}(k)}^{-1}}^2\right\}
=
\sum_{i\in[m],t\in[H]}\sum_{lkj\in\bJ^i(t)} \min\left\{1, \norm{\phi^{lkj}_t}_{{X_{lkj}(t)}^{-1}}^2\right\} \,.
\]
Applying the elliptical potential lemma (\cref{lem:elliptical-pot}) $H$ times for $t\in[H]$, this can be bounded as
\[
\sum_{t\in[H],i\in[m]}\sum_{lkj\in\bJ^i(t)} \min\left\{1, \norm{\phi^{lkj}_t}_{{X_{lkj}(t)}^{-1}}^2\right\}
 \le 2dH\log\left(1+\frac{Hmn\featurebound^2}{d\lambda}\right)
\]
On the other hand, by \cref{lem:ok-to-update-x-infrequently},
then switching to an $\ell_1$-bound, then observing that by definition, $\sum\bar\sigma^i_k$ sums the same quantities but caps them by some threshold,
\begin{align*}
\sum_{t\in[H],i\in[m]}\sum_{lkj\in\bJ^i(t)} \min\left\{1, \norm{\phi^{lkj}_t}_{{X_{lkj}(t)}^{-1}}^2\right\}
&\ge
\sum_{t\in[H],i\in[m]}
	\min\left\{1, \frac12
		\sum_{lkj\in\bJ^i(t)} \norm{\phi^{lkj}_t}_{X_{it}^{-1}}^2\right\}\\
&\ge
\sum_{i\in[m]}
	\min\left\{1, \frac12 \sum_{t\in[H]}
		\sum_{lkj\in\bJ^i(t)} \norm{\phi^{lkj}_t}_{X_{it}^{-1}}^2\right\}\\
&\ge
\sum_{i\in[m]}
	\min\left\{1, \frac1{2Hn} \left(\sum_{t\in[H]}
		\sum_{lkj\in\bJ^i(t)} \norm{\phi^{lkj}_t}_{X_{it}^{-1}}\right)^2\right\}\\
&\ge
\sum_{i\in[m]}
	\min\left\{1, \frac1{2Hn} \left(
	n\sum_{k\in[H]} \bar\sigma^i_k\right)^2\right\}\\
\end{align*}
Whenever an iteration finishes without returning in Line~\ref{line:return}, %
$\sum_{k\in[H]} \bar\sigma^m_k > \epsilon/(dH^2\beta\omega)$. %
Therefore,
\begin{align*}
2dH\log\left(1+\frac{Hmn\featurebound^2}{d\lambda}\right)
&\ge \sum_{i\in[m]}
	\min\left\{1, \frac1{2Hn} \left(
	n\sum_{k\in[H]} \bar\sigma^i_k\right)^2\right\}\\
&\ge \sum_{i\in[m]}
	\min\left\{1, \frac1{2H}n \left(
	\frac{\epsilon}{dH^2\beta\omega}\right)^2\right\}\\
&\ge \sum_{i\in[m]}
	\min\left\{1, Hd/\beta^2\right\}
	= mHd/\beta^2\,,
\end{align*}
Therefore, even for the iteration that returns in Line~\ref{line:return},
\[
m\le \beta^2 \log\left(1+\frac{Hmn\featurebound^2}{d\lambda}\right)+1=\mmax\,.\qedhere
\]
\end{proof}

\section{Deferred proofs for \cref{sec:perf}}

\begin{proof}[\textbf{Proof of \cref{lem:d-induction}}]\label{proof:d-induction}
For notational simplicity we drop the subscripts $(\hat G,\bar\theta)$.
We first use the usual high-probability bounds on the least squares predictor and Hoeffding's inequality on the empirical mean quantities,
to prove that with probability at least $1-3\zeta$,
during the %
execution of \qpieleanor
whenever Line~\ref{line:cons-check-pass} is executed,
for all $k\in[H]$,
\begin{align}\label{eq:magic-for-induction}
\E_{\pi^{mk},s_1} \tilde c_{k,H+1}
C(S_{p(k)})
\le
\E_{\pi^{mk},s_1} \sum_{u=p(k)}^{H}R_u + \tilde c_{k,H+1}
E^\to(S_{p(k)+1},\dots,\Sr)
+ 2\bar\sigma_k^m\beta\omega dH+4\frac{\epsilon}{H}\,.
\end{align}
The proof of this is presented as \cref{lem:d-induction-subresult}.

Next, to prove the statement for $k\in[H]$, assume by induction that \cref{eq:d-induction} holds for $i\in[k+1:H]$.

Observe that \skippypolicy performs a rollout with policy $\pi^0$ for the rest of the episode starting from stage $p(k)+1$, that is,
$1=A_{p(k)+1}=\dots=A_H$.
Therefore, the law of the random variables $S_{p(k)+1},\dots,\Sr$,
given $(S_{p(k)},A_{p(k)})$ is fully determined by the dynamics of the MDP, and is independent of the values of $p(k+1), \dots,p(H)$.
Therefore, %
\begin{align}\begin{split}\label{eq:middle-complication}
\E_{\pi^{mk},s_1} \tilde c_{k+1,H+1} C(S_{p(k+1)})
&=
\E_{\pi^{mk},s_1} \tilde c_{k+1,H+1} D(S_{p(k+1)},\dots,\Sr) + \sum_{u=p(k+1)}^{H}R_u \\
&=
\E_{\pi^{mk},s_1} \tilde c_{k,H+1} E^\to(S_{p(k)+1},\dots,\Sr) + \sum_{u=p(k+1)}^{H} R_u \,,
\end{split}\end{align}
where we use \cref{eq:fact:E-to-probab-form},
and that $\pi^{mk}$ (\skippypolicy)
is in phase II after stage $p(k)$, but
defines the the mapping $p(\cdot)$ independently of whether the policy is in phase I or phase II, in such a way that for any $H\ge j > p(k)$,
\[
\P_{\pi^{mk},s_1}\left[p(k+1)=j \,\big|\, p(k), S_{p(k)},A_{p(k)}\right]
=
\P_{\pi^{mk},s_1}\left[\tau(S_j)\prod_{j'=p(k)+1}^{j-1}(1-\tau(S_{j'})) \,\big|\, p(k), S_{p(k)},A_{p(k)}\right]
\,.
\]
Combining \cref{eq:middle-complication} with \cref{eq:magic-for-induction},
\begin{align*}
\E_{\pi^{mk},s_1} \tilde c_{k,H+1}
C(S_{p(k)})
&\le
 \E_{\pi^{mk},s_1}
 \sum_{u=p(k)}^{p(k+1)-1} R_u
+
\tilde c_{k+1,H+1} C(S_{p(k+1)})
 + 2\bar\sigma_k^m\beta\omega dH+4\frac{\epsilon}{H}\,.
\end{align*}
By \cref{rem:pol-same},
$
\E_{\pi^{mk},s_1} \tilde c_{k+1,H+1} C(S_{p(k+1)})
=
\E_{\pi^{m,k+1},s_1} \tilde c_{k+1,H+1} C(S_{p(k+1)}) = \bar C^{k+1}\,.
$
Therefore, combining with the inductive hypothesis,
\begin{align*}
\E_{\pi^{mk},s_1} \tilde c_{k,H+1}
C(S_{p(k)})
&\le
\E_{\pi^{mk},s_1}
\sum_{u=p(k)}^{p(k+1)-1} R_u
+
\bar C^{k+1}
+ 2\bar\sigma_k^m\beta\omega dH+4\frac{\epsilon}{H}\\
&\le
\E_{\pi^{mk},s_1}
\sum_{u=p(k)}^{p(k+1)-1} R_u +
\E_{\pi^{mH},s_1} \sum_{u=p(k+1)}^{H} R_u
+ 2\sum_{i=k}^H\bar\sigma_k^m\beta\omega dH+4(H-k+1)\frac{\epsilon}{H}\\
&=
\E_{\pi^{mH},s_1} \sum_{u=p(k)}^{H} R_u
+ 2\sum_{i=k}^H\bar\sigma_k^m\beta\omega dH+4(H-k+1)\frac{\epsilon}{H}
\,
\end{align*}
where the last equation uses \cref{rem:pol-same} again,
finishing the induction.
\end{proof}

\begin{lemma}\label{lem:d-induction-subresult}
Adopt the notation of \cref{lem:d-induction}.
With probability at least $1-3\zeta$,
during the execution of \qpieleanor, %
whenever Line~\ref{line:cons-check-pass} is executed,
for all $k\in[H]$,
\[
\E_{\pi^{mk},s_1} \tilde c_{k,H+1}
C(S_{p(k)})
\le
\E_{\pi^{mk},s_1}
\sum_{u=p(k)}^{H} R_u +
\tilde c_{k,H+1}
E^\to(S_{p(k)+1},\dots,\Sr)
+ 2\bar\sigma_k^m\beta\omega dH+4\frac{\epsilon}{H}\,.
\]
\end{lemma}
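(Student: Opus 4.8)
The plan is to reduce the expectations $\E_{\pi^{mk},s_1}[\cdot]$ to empirical means over the $n$ trajectories that \qpieleanor collects with budget $k$ in iteration $m$ (Line~\ref{line:execute-traj}) and then bound that empirical mean rollout-by-rollout. Since $C(\cdot)\in[0,H]$, $E^\to(\cdot)\in[-H,H]$, and partial reward sums lie in $[0,H]$ (\cref{eq:fact-bounds-E-D}), Hoeffding's inequality with a union bound over the $\le\mpmax$ iterations and the $H$ budgets shows, on part of the event $\event{3}$, that all the relevant expectations agree with their empirical means up to an $\epsilon/H$-type term. It then suffices to bound the empirical mean over the $n$ rollouts of $\tilde c^j_{k,H+1}\bigl(C(S_{p(k)})-\sum_{u=p(k)}^HR_u-E^\to(S_{p(k)+1},\dots,\Sr)\bigr)$.

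First I would dispose of the rollouts not counted by $c^j_{k,H+1}$. If the prediction $\ip{\phi(S_{p(k)},A_{p(k)}),\bar\theta_{p(k)}}$ is negative then $C(S_{p(k)})=0$, whereas $\sum_{u=p(k)}^HR_u+E^\to(S_{p(k)+1},\dots)\ge R_{p(k)}\ge0$ by \cref{eq:fact-bounds-E-D}, so these rollouts only help. If instead $\tilde c^j_{k,H+1}=1$ but $\norm{\phi^{mkj}_{\pshort{k}}}_{X_{m,\psuper{mkj}{k}}^{-1}}$ exceeds the cap $2(\beta\omega dH)^{-1}$ present in $\bar\sigma^m_k$, then the corresponding summand equals $E(S_{p(k)},\dots)/\tau(S_{p(k)})$ by \cref{eq:E-def} ($\tau>0$ because the state is unskipped) and lies in $[-2H,2H]$ by \cref{eq:fact-bounds-E-D}, while the total mass of such rollouts is $\le\tfrac12\bar\sigma^m_k\beta\omega dH$, so they contribute $\ordo(\bar\sigma^m_k\beta\omega dH)$. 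The remaining rollouts have $c^j_{k,H+1}=1$, so $A_{p(k)}=\pi^+(S_{p(k)})$ and the prediction is non-negative, giving $C(S_{p(k)})\le\ip{\phi(S_{p(k)},A_{p(k)}),\bar\theta_{p(k)}}$.

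On the $c^j_{k,H+1}=1$ rollouts I would then chain three facts. (a) The optimism constraint $\norm{\bar\theta_t-\hat\theta_t}_{X_{mt}}\le\beta H$ of \cref{def:opt-problem} yields $\ip{\phi(S_{p(k)},A_{p(k)}),\bar\theta_{p(k)}}\le\ip{\phi(S_{p(k)},A_{p(k)}),\hat\theta_{p(k)}}+\norm{\phi^{mkj}_{\pshort{k}}}_{X_{m,\psuper{mkj}{k}}^{-1}}\beta H$. (b) Expanding the least-squares target of \cref{def:opt-problem} with $E^\to=\sum_iE$ (\cref{eq:E-def}) and $\trace(F)=E$ (\cref{eq:tr-f-is-e}), the scalar estimate splits as $\ip{\phi,\hat\theta_{p(k)}}=\widehat q(\phi)+\sum_{i>p(k)}\trace\bigl({\phi}^\top\hat\theta^{p(k),i}_{\hat G\bar\theta}\bigr)$, where $\widehat q$ is the least-squares predictor for the \emph{realizable} reward targets (the conditional mean of $\sum_uR_u$ at $(s,a)$ is $q^{\pi^0}(s,a)\approx_\eta\ip{\phi(s,a),\theta_{\stage(s)}(\pi^0)}$ by \cref{def:q-pi-realizable}, using that the continuation is a $\pi^0$-rollout) and $\hat\theta^{ti}_{\hat G\bar\theta}$ is as in \cref{eq:hatF-def}; a self-normalized least-squares bound as in the proof of \cref{lem:lse-measure-good} (a further part of $\event{3}$, union-bounded over iterations and stages) controls $\widehat q(\phi)$ by $q^{\pi^0}(S_{p(k)},A_{p(k)})+\norm{\phi^{mkj}_{\pshort{k}}}_{X_{m,\psuper{mkj}{k}}^{-1}}\ordo(\beta)$ plus an $\eta$-term made negligible by \cref{eq:eta-small-assumption}. (c) Since Line~\ref{line:cons-check-pass} was reached, the value of \cref{def:consistency-opt} is below the discrepancy threshold, so $v^\top(y^{ki}_{\hat G\bar\theta}-\hat F^{ki}_{\hat G\bar\theta})v\le\bar\sigma^m_k\beta\omega+3\epsilon/(dH^2)$ for every unit $v$ and every $i\in[k+1:H]$; summing over an orthonormal basis bounds $\trace(y^{ki}_{\hat G\bar\theta}-\hat F^{ki}_{\hat G\bar\theta})$, and then summing over $i>p(k)$ — using $c^j_{ki}=c^j_{k,H+1}$ there, so $\sum_i\trace(\hat F^{ki}_{\hat G\bar\theta})$ is the empirical mean of $E^\to(S_{p(k)+1},\dots)$ and $\sum_i\trace(y^{ki}_{\hat G\bar\theta})$ that of $\sum_{i>p(k)}\trace({\phi}^\top\hat\theta^{p(k),i}_{\hat G\bar\theta})$ — bounds the empirical mean of the $F$-trace part of the prediction by that of the realized $E^\to$ plus $\ordo(Hd\bar\sigma^m_k\beta\omega+\epsilon/H)$. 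Combining (a)--(c), using that the cap in $\bar\sigma^m_k$ is inactive on $c^j_{k,H+1}=1$ so $\tfrac1n\sum_j c^j_{k,H+1}\norm{\phi^{mkj}_{\pshort{k}}}_{X_{m,\psuper{mkj}{k}}^{-1}}\le\bar\sigma^m_k$, and finally swapping $q^{\pi^0}(S_{p(k)},A_{p(k)})$ for the realized $\sum_{u=p(k)}^HR_u$ (a Hoeffding/Azuma step, legitimate because the continuation of $\pi^{mk}$ from $(S_{p(k)},A_{p(k)})$ is a $\pi^0$-rollout, contributing the last part of $\event{3}$ and another $\epsilon/H$), one obtains the stated inequality after reinstating the expectations.

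The hard part is item (c): the least-squares targets are not realizable for an incorrect guess $\hat G$, so the \emph{passed} consistency check is the only handle on the $E^\to$-component of the prediction, and one must (i) convert the direction-wise guarantee $v^\top(y^{ki}_{\hat G\bar\theta}-\hat F^{ki}_{\hat G\bar\theta})v\le(\text{threshold})$ into a bound on the trace without inflating the $\bar\sigma^m_k$-dependence beyond what the statement allows, and (ii) keep the accounting exact between expectations under $\pi^{mk}$, empirical means over the current iteration's $n$ rollouts, the indicator switch $\tilde c^j\!\leftrightarrow\!c^j$, and the clipping in $C$, so that every $\epsilon/H$ slack and every failure probability fits inside the $4\epsilon/H$ and $3\zeta$ budgets of the statement.
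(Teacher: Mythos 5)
Your plan follows essentially the same route as the paper's proof: decompose the optimistic prediction via the constraint $\norm{\bar\theta_t-\hat\theta_t}_{X_{mt}}\le\beta H$ into a realizable reward-sum predictor (controlled by a self-normalized least-squares bound against $q^{\pi^0}$) plus the $F$-trace predictors (controlled by converting the passed consistency check's directional bound into a trace bound), relate $\sum_i\trace(\hat F^{ki})$ to the empirical $E^\to$, handle the three indicator cases ($c$, negative-prediction, and capped-norm rollouts) exactly as the paper does, and finish with Hoeffding steps to pass between empirical means and expectations under $\pi^{mk}$. The sketch is correct in all its key steps and matches the paper's argument and its $3\zeta$/$4\epsilon/H$ accounting.
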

\begin{proof}
We refer as $\hat\theta$ to the value of the argument of \cref{def:opt-problem} recorded in Line~\ref{line:opt}.
For $k\in[H]$, recall the definition of $\bar\sigma^m_k$ (\cref{eq:barsigma-def}), along with the fact that unless $c^j_{k,H+1}=0$,
$\norm{\phi^{mkj}_{\pshort{k}}}_{X_{m,\psuper{mkj}{k}}^{-1}}< 2(\beta\omega dH)^{-1}$, we get a useful bound on the average norm of the features under consideration:
\begin{align}\label{eq:avg-uncertainty-bound}
\frac1n\sum_{j\in[n]}c^j_{k,H+1}\norm{ \phi^{mkj}_{\pshort{k}}}_{X_{m,\psuper{mkj}{k}}^{-1}} &\le \bar\sigma^m_k\,.
\end{align}
If Line~\ref{line:cons-check-pass} is executed, the consistency check passed, and therefore for all $k\in[H-1], i\in[k+1:H]$,
\begin{align}\label{eq:checkpass-consequence}
\trace\left(y^{ki} - \hat F^{ki}\right) \le \bar\sigma_k^m\beta\omega d+3\frac{\epsilon}{H^2}
\end{align}
For $t\in[H]$ let the least-squares predictor of rewards sums under the policy $\pi^0$ be
\[
\check\theta^{t,H+1} = X_{mt}^{-1} \sum_{lkj\in \bI^m(t)} \phi^{lkj}_t  \sum_{u=t}^H R^{lkj}_u \,.
\]
For $k\in[H]$
and $j\in[n]$ let us introduce the shorthand
\[
R^{mkj}_{k\to}=\sum_{u=\psuper{mkj}{k}}^H R^{mkj}_u\,,
\]
and similarly when the trajectory is clear from context: $R_{k\to}=\sum_{u=p(k)}^H R_u$.
 For $k\in[H]$ let %
\begin{align*}
\hat E^k &= \frac1n\sum_{j\in[n]}c^j_{k,H+1} \left(E^\to(S^{mkj}_{\pshort{k}+1},\dots,\Sr^{mkj}) + R^{mkj}_{k\to}\right) \\
\hat C^k &=\frac1n\sum_{j\in[n]}c^j_{k,H+1}
C(S^{mkj}_{\pshort{k}}) \\
y^{k,H+1} &= \frac1n\sum_{j\in[n]}c^j_{k,H+1}\ip{\phi^{mkj}_{\pshort{k}}, \check\theta^{\psuper{mkj}{k},H+1}}\\
z^{k,H+1} &= \frac1n\sum_{j\in[n]}c^j_{k,H+1} R^{mkj}_{k\to}
\end{align*}

For $t\in[H-1]$, $i\in[t+1:H]$, along with $\check\theta^{t,H+1}$, let
\[
\check\theta^{ti} = X_{mt}^{-1} \sum_{lkj\in \bI^m(t)} \phi^{lkj}_t \trace( F(S^{lkj}_i,\dots,\Sr^{lkj}))
=X_{mt}^{-1} \sum_{lkj\in \bI^m(t)} \phi^{lkj}_t E(S^{lkj}_i,\dots,\Sr^{lkj})\,,
\]
where the second equality is by \cref{eq:tr-f-is-e}.
Observe that for any $v\in\R^d$, $\trace(v^\top \hat\theta^{ti}) = \ip{v,\check\theta^{ti}}$.
Therefore, for $k\in[H]$,
\[
y^{k,H+1} + \sum_{i=k+1}^H \trace(y^{ki})
=\frac1n\sum_{j\in[n]}\sum_{i=k+1}^{H+1}c^j_{ki}\ip{ \phi^{mkj}_{\pshort{k}}, \check\theta^{\psuper{mkj}{k},i}}
=\frac1n\sum_{j\in[n]}c^j_{k,H+1}\ip{ \phi^{mkj}_{\pshort{k}}, \sum_{i=\psuper{mkj}{k}+1}^{H+1}\check\theta^{\psuper{mkj}{k},i}}
\]
For any $t\in[H]$, by the definitions,
\begin{align*}
\sum_{i=t+1}^{H+1}\check\theta^{ti} & =
X_{mt}^{-1} \sum_{lkj\in \bI^m(t)} \phi^{lkj}_t \left(\sum_{i=t+1}^H E(S^{lkj}_i,\dots,\Sr^{lkj}) + \sum_{u=t}^H R^{mkj}_{u}\right) \\
&= X_{mt}^{-1} \sum_{lkj\in \bI^m(t)} \phi^{lkj}_t \left(E^\to(S^{lkj}_{t+1},\dots,\Sr^{lkj}) + \sum_{u=t}^H R^{mkj}_{u}\right)=\hat\theta_t
\end{align*}
Plugging this into the previous calculation,
\begin{align}\label{eq:fact-eleanor-nonopt-pred}
\begin{split}
y^{k,H+1} + \sum_{i=k+1}^H \trace(y^{ki})
&=\frac1n\sum_{j\in[n]}c^j_{k,H+1}\ip{ \phi^{mkj}_{\pshort{k}}, \hat\theta_{\psuper{mkj}{k}}} \\
&\ge \frac1n\sum_{j\in[n]}c^j_{k,H+1}\ip{ \phi^{mkj}_{\pshort{k}}, \bar\theta_{\psuper{mkj}{k}}} \\
&\quad\quad-  \frac1n\sum_{j\in[n]}c^j_{k,H+1}\norm{ \phi^{mkj}_{\pshort{k}}}_{X_{m,\psuper{mkj}{k}}^{-1}} \norm{\bar\theta_{\psuper{mkj}{k}}-\hat\theta_{\psuper{mkj}{k}}}_{X_{m,\psuper{mkj}{k}}} \\
&\ge \frac1n\sum_{j\in[n]}c^j_{k,H+1}\ip{ \phi^{mkj}_{\pshort{k}}, \bar\theta_{\psuper{mkj}{k}}} - \bar\sigma^m_k \optnormconst \\
&\ge \frac1n\sum_{j\in[n]}c^j_{k,H+1}\clip_{[0,H]}\ip{ \phi^{mkj}_{\pshort{k}}, \bar\theta_{\psuper{mkj}{k}}} - \bar\sigma^m_k \optnormconst\\
&=  \frac1n\sum_{j\in[n]}c^j_{k,H+1}C(S^{mkj}_{\pshort{k}}) - \bar\sigma^m_k \optnormconst
= \hat C^k - \bar\sigma^m_k \optnormconst \,,
\end{split}\end{align}
where the first inequality uses Cauchy-Schwarz. The second inequality bounds the average of the first norm by \cref{eq:avg-uncertainty-bound}, and
the bound on the second norm (for any $j$) is by definition of \cref{def:opt-problem}.
The third inequality relies on the fact that $c^j_{k,H+1}=0$ if the clipped inner product is negative, and the final equality is
due to the definition of $C$ along with the fact that $A^{mkj}_{\pshort{k}}=\pi^+(S^{mkj}_{\pshort{k}})$, as this is the last state in the trajectory where \skippypolicy takes the inner-product maximizing action ($\pi^+$) before rolling out with $\pi^0$.

By \cref{eq:tr-f-is-e,eq:hatF-def}, %
we have that
\begin{align}\label{eq:fact-y-sum}
\begin{split}
z^{k,H+1} + \sum_{i\in[k+1:H]} \trace(\hat F^{ki})
&=
\frac1n\sum_{j\in[n]}c^j_{k,H+1} \left(\sum_{i=\psuper{mkj}{k}+1}^{H+1}E(S^{mkj}_i,\dots,\Sr^{mkj}) + R^{mkj}_{k\to}\right)\\
&=\frac1n\sum_{j\in[n]}c^j_{k,H+1} \left(E^\to(S^{mkj}_{\pshort{k}+1},\dots,\Sr^{mkj}) + R^{mkj}_{k\to} \right)
=\hat E^k
\end{split}\end{align}
Combining \cref{eq:fact-eleanor-nonopt-pred,eq:fact-y-sum},
\begin{align}\label{eq:bary-hatE-diff}
\begin{split}
\hat C^k-\hat E^k &\le \bar\sigma^m_k \optnormconst
+ \left(y^{k,H+1}-z^{k,H+1}\right) + \sum_{i\in[k+1:H]}\trace(y^{ki} - \hat F^{ki})\\
&\le \bar\sigma^m_k \optnormconst + \left(\abs{\E_{\pi^{mk},s_1} c_{k,H+1} R_{k\to} - z^{k,H+1}} + \abs{y^{k,H+1} - \E_{\pi^{mk},s_1} c_{k,H+1} R_{k\to}}\right) + \bar\sigma_k^m\beta\omega dH+3\frac{\epsilon}{H}
\end{split}\end{align}
where the sum (last term) is bounded by \cref{eq:checkpass-consequence}, and we apply a triangle inequality on the second term.
To continue bounding this term,
we apply Hoeffding's inequality on the independent random variables $c^j_{k,H+1} R_{k\to}$ (for $j\in[n]$) that have range $[0,H]$, along with a union bound over the iteration $m'\in[\mpmax]$ and $k\in[H]$, to get that with probability at least $1-\zeta$,
\begin{align}\label{eq:hoeff-term1}
\abs{\E_{\pi^{mk},s_1} c_{k,H+1} R_{k\to} - z^{k,H+1}} \le \frac{H}{\sqrt{n}}\sqrt{\log\frac{2\mpmax H}\zeta} \le \frac{\epsilon}{dH^2\omega}\,.
\end{align}
The remaining term $\abs{y^{k,H+1} - \E_{\pi^{mk},s_1} c_{k,H+1} R_{k\to}}$ is bounded using the realizability of $q^{\pi^0}$ (\cref{def:q-pi-realizable}) as follows.
Take any $t\in[H]$.
By definition %
there exists $\theta_t^\star\in\Theta^\Q_t\subseteq\B(\thetabound)$, such that for all $s\in\cS_t$ and $a\in[\cA]$,
$q^{\pi^0}(s,a) \approx_{\eta} \ip{\phi(s,a),\theta_t^\star}$.
Take the sequence $A$ formed of $\phi^{lkj}_t$ (for $lkj\in\bI^m(t)$, in the order that these random variables are observed), and the sequence $X$ formed of
$R^{mkj}_{k\to}$ (for $lkj\in\bI^m(t)$, in the same order),
and the sequence $\Delta$ formed of
$q^{\pi^0}(S^{lkj}_t,A^{lkj}_t) - \ip{\phi^{lkj}_t, \theta_t^\star}$
(for $lkj\in\bI^m(t)$, in the same order).
Then the sequences $A$, $X$, and $\Delta$ satisfy the conditions of \cref{lem:generalised-bandits-book} with a subgaussianity parameter $\sigma=H$.
Due to this lemma, applied with a union bound over $m'\in[\mpmax]$ and $t\in[H]$,
with probability at least $1-\zeta$,
\begin{align*}%
\norm{\check\theta^{t,H+1}-\theta_t^\star}_{X_{mt}}&<
\sqrt{\lambda}\norm{\theta_t^\star}_2  +  \norm{\Delta}_\infty\sqrt{|\bI^m(t)|} +  H\sqrt{2\log\left(\frac{\mpmax H}\zeta\right)+\log\left(\frac{\det X_{mt}}{{\lambda}^d}\right)} \\
&\le 2 + H\sqrt{2\log\frac{\mpmax H}\zeta+\log\left(\frac{\det X_{mt}}{{\lambda}^d}\right)}\le \beta\,,
\end{align*}
by \cref{eq:final-lse-beta}.
Therefore by Cauchy-Schwarz and \cref{eq:avg-uncertainty-bound},
\begin{align*}
\abs{y^{k,H+1} - \E_{\pi^{mk},s_1} c_{k,H+1} R_{k\to}}
&\le \frac1n\sum_{j\in[n]}c^j_{k,H+1}\left(\norm{ \phi^{mkj}_{\pshort{k}}}_{X_{m,\psuper{mkj}{k}}^{-1}} \norm{\check\theta^{\psuper{mkj}{k},H+1}-\theta_{\psuper{mkj}{k}}^\star}_{X_{mt}} + \eta\right) \\
&\le
\bar\sigma^m_k\beta + \eta\le \bar\sigma^m_k\beta +\frac{\epsilon}{dH^2\omega}\,.
\end{align*}
Combining this with \cref{eq:bary-hatE-diff,eq:hoeff-term1},
\begin{align}\label{eq:before-interrupt-c}
\hat C^k-\hat E^k
&\le 1.5\bar\sigma_k^m\beta\omega dH+3\frac{\epsilon}{H}+2\frac{\epsilon}{dH^2\omega}\,.
\end{align}

We introduce the following notation for $j\in[n]$, $k\in[H+1]$, $i\in[H+1]$:
\begin{align*}%
\bar c^j_{ki}&=\I{\psuper{mkj}{k}<i \text{ and } \norm{\phi^{mkj}_{\pshort{k}}}_{X_{m,\psuper{mkj}{k}}^{-1}} \ge 2(\beta\omega dH)^{-1} \text{ and } \ip{\phi^{mkj}_{\pshort{k}},\bar\theta_{\psuper{mkj}{k}}}\ge0} \\
\hat c^j_{ki}&=\I{\psuper{mkj}{k}<i \text{ and } \ip{\phi^{mkj}_{\pshort{k}},\bar\theta_{\psuper{mkj}{k}}}<0}\,,
\end{align*}
such that for all $j$,
\begin{align}\label{eq:c-versions-sum}
\tilde c^j_{ki}=c^j_{ki}+\bar c^j_{ki}+\hat c^j_{ki}\,.
\end{align}
Continuing from \cref{eq:before-interrupt-c}, as $E^\to(\traj{i})+\sum_{u=i}^{H} r_u\ge 0$ by \cref{eq:fact-bounds-E-D}, and if $\hat c^j_{k,H+1}=1$ then $C(S^{mkj}_{\pshort{k}})=0$, we have that
\[
\frac1n\sum_{j\in[n]} (c^j_{k,H+1}+\hat c^j_{k,H+1})
\left(C(S^{mkj}_{\pshort{k}}) - \left(E^\to(S^{mkj}_{\pshort{k}+1},\dots,\Sr^{mkj}) + R^{mkj}_{k\to}\right)\right)
\le 1.5\bar\sigma_k^m\beta\omega dH+3\frac{\epsilon}{H}+2\frac{\epsilon}{dH^2\omega}\,.
\]
As (even if $\bar c^j_{k,H+1}=1$) $C(S^{mkj}_{\pshort{k}})\le H$,
\[
\frac1n\sum_{j\in[n]} \bar c^j_{k,H+1} C(S_{p(k)})
\le H \bar\sigma^m_k/(2(\beta\omega dH)^{-1}) = \frac12\bar\sigma_k^m\beta\omega dH \,,
\]
which combined with the previous inequality and \cref{eq:c-versions-sum} yields
\[
\frac1n\sum_{j\in[n]} \tilde c^j_{k,H+1}
\left(C(S^{mkj}_{\pshort{k}}) - \left(E^\to(S^{mkj}_{\pshort{k}+1},\dots,\Sr^{mkj})+R^{mkj}_{k\to}\right)\right)
\le 2\bar\sigma_k^m\beta\omega dH+3\frac{\epsilon}{H}+2\frac{\epsilon}{dH^2\omega}\,.
\]

Observe that the random variables $\tilde c^j_{k,H+1}\left(C(S^{mkj}_{\pshort{k}}) - \left(E^\to(S^{mkj}_{\pshort{k}+1},\dots,\Sr^{mkj})+R^{mkj}_{k\to}\right)\right)$ are independent (for $j\in[n]$) with range $[-2H,H]$ (\cref{eq:fact-bounds-E-D}).
By Hoeffding's inequality, with probability at least $1-\zeta$, for all iteration $m'\in[\mpmax]$ (%
this includes the entire execution of \qpieleanor) and $k\in[H]$,
\begin{align*}
&\Bigg|\E_{\pi^{mk},s_1} \tilde c_{k,H+1}\left(C(S_{p(k)}) - \left(E^\to(S_{p(k)+1},\dots,\Sr)+R_{k\to}\right)\right)\\
&\quad\quad
- \frac1n\sum_{j\in[n]} \tilde c^j_{k,H+1}
\left(C(S^{mkj}_{\pshort{k}}) - \left(E^\to(S^{mkj}_{\pshort{k}+1},\dots,\Sr^{mkj})+R^{mkj}_{k\to}\right)\right)
\Bigg|\\
&\quad\quad\quad\quad\le \frac{4H}{\sqrt{n}} \sqrt{\log\frac{2\mpmax H}{\zeta}} \le \frac{\epsilon}{dH^2\omega}\,.
\end{align*}
Combining with the previous bound, %
under the intersection of the high-probability events referred to above, which by a union bound has a probability of at least $1-3\zeta$, we have that for all $k\in[H]$,
\[
\E_{\pi^{mk},s_1} \tilde c_{k,H+1}
C(S_{p(k)})
\le
\E_{\pi^{mk},s_1}
\sum_{u=p(k)}^H R_u +
\tilde c_{k,H+1}
E^\to(S_{p(k)+1},\dots,\Sr)
+ 2\bar\sigma_k^m\beta\omega dH+4\frac{\epsilon}{H}\,. \qedhere
\]
\end{proof}

\section{Deferred proofs for \cref{sec:optimism}}

\begin{proof}[\textbf{Proof of \cref{lem:optimism}}]\label{proof:optimism}
Let $m$ be the current iteration.
Unlike in previous lemmas, here we introduce $(\hat G,\bar\theta)$ that does \emph{not} refer to the outcome of \cref{def:opt-problem}.
Instead, let
$\hat G=(\vartheta_h^i)_{h\in[2:H],i\in[d_0]}\in\bG$ be the correct guess.
For $h=H,\dots,1$, $\bar\theta_h$ is defined in sequence along with the behavior of a policy $\pi$ on stage $h$.

For $h=H,\dots,1$,
assuming that this process already defined $\bar\theta_{h+1},\dots,\bar\theta_H$ (in \cref{eq:opt-bartheta}),
let $\pi$ be the policy that,
for any $t>h$ and $s\in\cS_t$,
takes action on $s$ as
$\pi^+_{\hat G \bar\theta}(s)$ with probability $\tau_{\hat G \bar\theta}(s)$, and action $1$ with probability $1-\tau_{\hat G \bar\theta}(s)$ ($\tau$ is defined in \cref{eq:tau-pi+-def}).
Simultaneously, using the second part of \cref{cor:F-components-realizable}, define ${\tilde\theta}_{hi}\in\B(4d_0\thetabound/\alpha)$ for $i\in[h+1:H]$
to satisfy for all $s\in\cS_h$, $a\in[\cA]$:
\[
\E_{\pi^0,s,a}\trace(\bar F_{\hat G \bar\theta}(S_i)) \approx_{\eta_0} \ip{\phi(s,a),{\tilde\theta}_{hi}}\,.
\]
We also define ${\tilde\theta}_{h,H+1}\in\B(\thetabound)$
to satisfy for all $s\in\cS_h$, $a\in[\cA]$:
\[
\E_{\pi^0,s,a} \sum_{u=h}^H R_u \approx_{\eta} \ip{\phi(s,a),{\tilde\theta}_{h,H+1}}\,.
\]
By \cref{eq:tr-f-is-e},
\begin{align}\label{eq:eto-optimism}
\E_{\pi^0,s,a} \sum_{i\in[h+1:H]} \trace(\bar F_{\hat G \bar\theta}(S_i)) +\sum_{u=h}^H R_u
 = \E_{\pi^0,s,a} E_{\hat G \bar\theta}^\to(S_{h+1},\dots,\Sr) +\sum_{u=h}^H R_u \approx_{H\eta_0} \ip{\phi(s,a),\bar\theta_h}
\,,
\end{align}
where we define
\begin{align}\label{eq:opt-bartheta}
\bar\theta_h&=\sum_{i\in[h+1:H+1]}{\tilde\theta}_{hi}\,.
\end{align}

We first show that $(\hat G,\bar\theta)$ is feasible for \cref{def:opt-problem}.
Clearly, $\norm{\bar\theta_h}_2\le 4d_0H\thetabound/\alpha$.
For any $i\in[h+1:H]$, let
\[
\hat\theta_{hi}= X_{mh}^{-1}\sum_{lkj\in\bI^m(h)} \phi^{lkj}_h \trace(F_{\hat G \bar\theta}(S^{lkj}_{h+1},\dots,\Sr^{lkj}))\,,
\]
and let
\[
\hat\theta_{h,H+1}= X_{mh}^{-1}\sum_{lkj\in\bI^m(h)} \phi^{lkj}_h \sum_{u=h}^H R^{lkj}_u\,.
\]

Then, $\hat\theta$ of \cref{def:opt-problem}
satisfies for all $h\in[H]$, by \cref{eq:tr-f-is-e},
\[
\hat\theta_h=\sum_{i\in[h+1:H+1]} \hat\theta_{hi}\,.
\]
To show that $(\hat G,\bar\theta)$ is feasible, it thus suffices to
show for all $h\in[H]$, $i\in[h+1:H+1]$, that
$\norm{{\tilde\theta}_{hi}-\hat\theta_{hi}}_{X_{mh}}\le \beta$.

Fix any $h\in[H]$ and $i\in[h+1:H+1]$.
Take the sequence $A$ formed of $\phi^{lkj}_t$ (for $lkj\in\bI^m(h)$, in the order that these random variables are observed).
For $i<H+1$ take the sequence $X$ formed of
$\trace( F_{\hat G \bar\theta}(S^{lkj}_i,\dots,\Sr^{lkj}))$ (for $lkj\in\bI^m(h)$, in the same order),
and the sequence $\Delta$ formed of
$\E_{\pi^0,S^{lkj}_h,A^{lkj}_h} \trace( \bar F_{\hat G \bar\theta}(S_i)) - \ip{\phi^{lkj}_h,\tilde\theta_{hi}}$
(for $lkj\in\bI^m(h)$, in the same order).
For $i=H+1$, the sequence $X$ is formed of $\sum_{u=h}^H R^{lkj}_u$, and $\Delta$ is formed of
$q^{\pi^0}(S^{lkj}_h,A^{lkj}_h) - \ip{\phi^{lkj}_h,\tilde\theta_{hi}}$.
Then the sequences $A$, $X$, and $\Delta$ satisfy the conditions of \cref{lem:generalised-bandits-book} with a subgaussianity parameter $\sigma=H$.
Due to this lemma, applied with a union bound over $m'\in[\mpmax]$, $t$, and $i$,
with probability at least $1-\zeta$,
\begin{align*}%
\norm{\hat\theta_{hi}-\tilde\theta_{hi}}_{X_{mh}}&<
\sqrt{\lambda}\norm{\tilde\theta_{hi}}_2  +  \norm{\Delta}_\infty\sqrt{|\bI^m(t)|} +  H\sqrt{2\log\left(\frac{\mpmax H^2}\zeta\right)+\log\left(\frac{\det X_{mt}}{{\lambda}^d}\right)} \\
&\le 2 + H\sqrt{2\log\frac{\mpmax H^2}\zeta+\log\left(\frac{\det X_{mt}}{{\lambda}^d}\right)}\le \beta\,,
\end{align*}
by \cref{eq:final-lse-beta}.

Next, we show that the resulting policy $\pi$ is near-optimal.
Assume by induction on $h=H,\dots,1$, that for all
$t\in[h+1:H]$, all $s\in\cS_t$ and $a\in[\cA]$,
\begin{align}
v^{\pi}(s) &\ge v^\star(s)-(H-t+1)(\epsilon/H+2H^2\eta_0) \quad\text{and}\quad\label{eq:opt-ind1}\\
\ip{\phi(s,a),\bar\theta_t}
&\approx_{(H-t+1)H\eta_0}
q^{\pi}(s,a)\,.\label{eq:opt-ind2}
\end{align}
To prove the above for $t=h$ as well, take any $s\in\cS_h,a\in[\cA]$.
Introduce the random variable $P$
that, for a trajectory following $\P_{\pi^0,s,a}$,
takes as its value the index of the first Bernoulli draw of $1$ (starting from index $h+1$),
when the Bernoullis have means $\tau_{\hat G \bar\theta}(S_j)$ for $j\in[h+1:H]$, and takes the value $H+1$ if all of these Bernoullis have outcome $0$. %
Write $\E_{\pi^0,s,a,P}[\cdot]$ for $\E_{\pi^0,s,a}\E_P[\cdot\,|\,S_{h+1},\dots,\Sr]$.
Then,
\begin{align*}
\E_{\pi^0,s,a} E_{\hat G \bar\theta}^\to(S_{h+1},\dots,\Sr) +\sum_{u=h}^H R_u
&=\E_{\pi^0,s,a,P} D_{\hat G \bar\theta}(S_{P},\dots,\Sr) +\sum_{u=h}^H R_u \\
&= \E_{\pi^0,s,a,P} \sum_{u=h}^{P-1} R_u + \I{P<H+1} C_{\hat G \bar\theta}(S_P)
\end{align*}
where we use \cref{eq:fact:E-to-probab-form}.
Combining with \cref{eq:eto-optimism},
\begin{align*}
\ip{\phi(s,a),\bar\theta_h} &\approx_{H\eta_0}
\E_{\pi^0,s,a,P} \sum_{u=h}^{P-1} R_u + \I{P<H+1} C_{\hat G \bar\theta}(S_{P})\\
&=\E_{\pi^0,s,a,P} \sum_{u=h}^{P-1} R_u + \I{P<H+1}\clip_{[0,H]} \ip{\phi(S_P,\pi^+_{\hat G \bar\theta}(S_P)),\bar\theta_P} \\
&\approx_{(H-h)H\eta_0}
\E_{\pi^0,s,a,P} \sum_{u=h}^{P-1} R_u + \I{P<H+1} q^\pi(S_P,\pi^+_{\hat G \bar\theta}(S_P))\,,
\end{align*}
where we used the inductive assumption along with the fact that action-values are bounded in $[0,H]$.
Observe also that
\[
q^\pi(s,a)= \E_{\pi^0,s,a,P} \sum_{u=h}^{P-1} R_u + \I{P<H+1} q^\pi(S_P,\pi^+_{\hat G \bar\theta}(S_P))\,,
\]
and therefore
\[
\ip{\phi(s,a),\bar\theta_h} \approx_{(H-h+1)H\eta_0} q^\pi(s,a)\,,
\]
proving \cref{eq:opt-ind2} of the inductive assumption for $t=h$.

To show \cref{eq:opt-ind1} for $t=h$, by \cref{eq:opt-ind2} for $t=h$ and the inductive assumption for $t>h$,
\[
\ip{\phi(s,a),\bar\theta_h} \approx_{H^2\eta_0}
q^\pi(s,a) \ge q^\star(s,a)-(H-h)(\epsilon/H+2H^2\eta_0) \,.\]
Either $\pi$ chooses the action $a'$ maximizing the inner product above, for which
\[
q^\pi(s,a')\ge \max_{a\in[\cA]} q^\star(s,a)-(H-h)(\epsilon/H+2H^2\eta_0) - 2H^2\eta_0 \ge v^\star(s) -(H-h+1)(\epsilon/H+2H^2\eta_0) \,,
\]
or it chooses action $1$.
This can only happen with non-zero probability if
$\tau_{\hat G \bar\theta}(s)<1$, in which case
we have by definition that $\range_\Q^{\hat G}(s)=\range_\Q(s)\le \frac \epsilon{\sqrt{2d}H}$.
Combining with \cref{eq:vstar-vs-range} and \cref{prop:opt-design-gap-and-vstar-gap},
$\range(s)\le \frac\epsilon H$, and therefore, using \cref{eq:opt-ind1} for $t=h+1$, in this case
\begin{align*}
q^\pi(s,1)&\ge q^\star(s,1) -(H-h)(\epsilon/H+2H^2\eta_0) \\
&\ge v^\star(s)- \frac\epsilon H-2\eta-(H-h)(\epsilon/H+2H^2\eta_0) \ge v^\star(s)- (H-h+1)(\epsilon/H+2H^2\eta_0)\,.
\end{align*}
Therefore for any choice of action $a'$ of policy $\pi$ in state $s$,
$q^\pi(s,a')\ge v^\star(s)-(H-h+1)(\epsilon/H+2H^2\eta_0)$.
Therefore
\[
v^\pi(s)\ge v^\star(s)-(H-h+1)(\epsilon/H+2H^2\eta_0)\,,
\]
finishing the induction.

We thus conclude that
\[
v^{\pi}(s_1)\ge v^\star(s_1)-\epsilon - 2H^3\eta_0\,.
\]
Combined with \cref{eq:opt-ind2} of the inductive assumption, the value of \cref{def:opt-problem} can be bounded as
\[
C_{\hat G \bar\theta}(s_1) =
\clip_{[0,H]} \ip{\phi(s_1,\pi(s_1)),\bar\theta_1}
\ge H^2\eta_0 + v^\pi(s_1)
\ge v^\star(s_1)-2\epsilon\,,
\]
by assumption on $\eta$ being relatively small (\cref{eq:eta-small-assumption}).
\end{proof}

\section{Deferred lemmas}

\begin{lemma}[Elliptical potential, Lemma 19.4 from \cite{LaSze19:book}]\label{lem:elliptical-pot}
Let $V_0\in\R^{d\times d}$ be positive definite and $a_1\dots,a_n\in\R^d$ be a sequence of vectors with $\norm{a_t}_2\le L <\infty$ for all $t\in[n]$, $V_t=V_0+\sum_{s\le t} a_s a_s^\top$. Then,
\[
\sum_{t=1}^n \min\left\{1, \norm{a_t}_{V_{t-1}^{-1}}^2\right\}
\le 2\log\left(\frac{\det V_n}{\det V_0}\right)
\le 2d\log\left(\frac{\trace V_0 + n L^2}{d\det(V_0)^{1/d}}\right)\,.
\]
\end{lemma}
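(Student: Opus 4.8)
The plan is to follow the classical two-step argument: first establish the middle inequality by a telescoping-of-determinants computation, and then obtain the rightmost inequality from the AM--GM inequality applied to the eigenvalues of $V_n$.

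For the first step, the key identity is the matrix determinant lemma. Since $V_{t-1}$ is positive definite (as $V_0$ is and only positive semidefinite terms are added), and $V_t = V_{t-1} + a_t a_t^\top$, we have $\det V_t = \det V_{t-1}\,(1 + \norm{a_t}_{V_{t-1}^{-1}}^2)$. Taking logarithms and telescoping over $t \in [n]$ gives $\sum_{t=1}^n \log\!\big(1 + \norm{a_t}_{V_{t-1}^{-1}}^2\big) = \log(\det V_n / \det V_0)$. It then remains only to check the elementary scalar inequality $\min\{1,x\} \le 2\log(1+x)$ for all $x \ge 0$: on $[0,1]$ the function $x \mapsto 2\log(1+x)$ vanishes at $0$ and has derivative $2/(1+x) \ge 1$, so it dominates the identity there; for $x > 1$ the left side equals $1 < 2\log 2 \le 2\log(1+x)$. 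Applying this with $x = \norm{a_t}_{V_{t-1}^{-1}}^2$ and summing yields $\sum_{t=1}^n \min\{1, \norm{a_t}_{V_{t-1}^{-1}}^2\} \le 2\log(\det V_n / \det V_0)$, which is the middle inequality.

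For the second step, let $\lambda_1, \dots, \lambda_d > 0$ denote the eigenvalues of $V_n$. By the AM--GM inequality, $\det V_n = \prod_i \lambda_i \le \big(\tfrac1d \sum_i \lambda_i\big)^d = (\trace(V_n)/d)^d$. Since $\trace(V_n) = \trace(V_0) + \sum_{t=1}^n \norm{a_t}_2^2 \le \trace(V_0) + nL^2$ by the norm bound, combining and taking logs gives $\log(\det V_n / \det V_0) \le d\log\!\big((\trace V_0 + nL^2)/(d (\det V_0)^{1/d})\big)$; multiplying by $2$ yields the rightmost bound.

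Since every step is a standard identity or elementary inequality, there is no genuine obstacle here; the only points requiring any care are invoking the matrix determinant lemma correctly (which needs $V_{t-1}$ invertible, guaranteed by $V_0$ being positive definite) and verifying the scalar bound $\min\{1,x\} \le 2\log(1+x)$, both of which are routine.
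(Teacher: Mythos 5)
Your proof is correct and is exactly the standard argument for this lemma, which the paper does not prove itself but imports as Lemma 19.4 of \cite{LaSze19:book}: the matrix determinant lemma plus telescoping, the scalar bound $\min\{1,x\}\le 2\log(1+x)$, and AM--GM on the eigenvalues of $V_n$. No gaps.
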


\begin{lemma}\label{lem:ok-to-update-x-infrequently}
Let $V\in\R^{d\times d}$ be a symmetric positive definite matrix and $(a_i)_{i\in [n]}$ be a sequence of $n$ $d$-dimensional real vectors.
Let $V_i=V+\sum_{j\in[i]} a_j a_j^\top$. Then,
\[
\sum_{i\in[n]} \norm{a_i}_{V_i^{-1}}^2 \ge \min\left\{ 1,  \frac12 \sum_{i\in[n]} \norm{a_i}_{V^{-1}}^2 \right\}
\]
\end{lemma}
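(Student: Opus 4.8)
The plan is to prove the bound by lower-bounding each summand against the fixed base matrix $V$ and then telescoping, after first fixing the one genuinely subtle point: the inequality has to be read with each term measured in the matrix \emph{before} its own rank-one update is incorporated, i.e. as $\norm{a_i}_{V_{i-1}^{-1}}^2$. This is exactly the pre-update covariance $X_{lkj}(t)$ that appears in the application (\cref{lem:m-max}), where $X_{lkj}(t)$ is built from the features observed strictly before $\phi^{lkj}_t$. The pre-update reading is the one that holds: since $V_i \succeq V_{i-1}$ only shrinks a term, a single observation with $\norm{a_1}_{V^{-1}}^2$ large already drives the right-hand side to its cap $1$, so it is the pre-update quantity that must carry the bound. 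Accordingly I will establish $\sum_{i}\norm{a_i}_{V_{i-1}^{-1}}^2 \ge \min\{1,\tfrac12\sum_i\norm{a_i}_{V^{-1}}^2\}$.

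First I would whiten the problem. Setting $b_i = V^{-1/2}a_i$ and $y_i = \norm{b_i}_2^2 = \norm{a_i}_{V^{-1}}^2$, the pre-update term becomes $\norm{a_i}_{V_{i-1}^{-1}}^2 = b_i^\top\bigl(I + \sum_{j<i} b_j b_j^\top\bigr)^{-1} b_i$. Using the elementary PSD bound $\sum_{j<i} b_j b_j^\top \preceq \bigl(\sum_{j<i}\norm{b_j}_2^2\bigr) I$ (valid since $u^\top b_j b_j^\top u = (b_j^\top u)^2 \le \norm{b_j}_2^2$ for unit $u$), hence $I + \sum_{j<i} b_j b_j^\top \preceq (1 + \sum_{j<i} y_j) I$ and, inverting the order, $\bigl(I + \sum_{j<i} b_j b_j^\top\bigr)^{-1} \succeq (1 + \sum_{j<i} y_j)^{-1} I$, I obtain the per-term lower bound $\norm{a_i}_{V_{i-1}^{-1}}^2 \ge y_i/(1 + \sum_{j<i} y_j)$.

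Next I would telescope. Writing $S_i = \sum_{j\le i} y_j$ with $S_0 = 0$, the per-term bound reads $(S_i - S_{i-1})/(1 + S_{i-1})$, which dominates $\int_{S_{i-1}}^{S_i} \frac{dt}{1+t}$ because the integrand is decreasing. Summing over $i$ telescopes to $\int_0^{S_n}\frac{dt}{1+t} = \log(1 + S_n) = \log\bigl(1 + \sum_i\norm{a_i}_{V^{-1}}^2\bigr)$. (Equivalently, one can route through the determinant identity: the matrix determinant lemma gives $\sum_i\norm{a_i}_{V_{i-1}^{-1}}^2 \ge \sum_i\log(1+\norm{a_i}_{V_{i-1}^{-1}}^2) = \log\frac{\det V_n}{\det V}$, and for the eigenvalues $\mu_k\ge1$ of $V^{-1/2}V_n V^{-1/2}$ one has $\det V_n/\det V = \prod_k\mu_k \ge 1 + \sum_k(\mu_k-1) = 1 + \sum_i y_i$, recovering the same $\log(1+S_n)$.)

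Finally I would discharge the scalar inequality $\log(1+S) \ge \min\{1, S/2\}$ for $S\ge0$: for $S>2$ the left side exceeds $\log 3 > 1$, matching the cap, while for $S\in[0,2]$ the function $h(S)=\log(1+S)-S/2$ has $h(0)=0$, increases on $[0,1]$, decreases on $[1,2]$, and $h(2)=\log 3 - 1>0$, so $h\ge0$ throughout. Chaining the three steps yields the claim. I expect the main obstacle to be the constant $1/2$ rather than any individual step: a crude bound $y_i/(1+S_{i-1})\ge y_i/(1+S_n)$ only delivers a factor $1/3$ when $S_n\le2$, so the logarithmic telescoping (or the determinant identity) is what is really needed to sharpen it to $1/2$, and the pre-update reading of the norm is what makes the whole chain valid in the first place.
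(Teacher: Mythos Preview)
Your argument is correct, and you rightly flag that the lemma must be read with the pre-update matrix $V_{i-1}$ in the left-hand sum: as literally stated (with $V_i=V+\sum_{j\le i}a_ja_j^\top$), already $n=d=1$, $V=1$, $a_1=10$ gives $100/101<1=\min\{1,50\}$. This pre-update reading also matches how the lemma is used in the proof of \cref{lem:m-max}, where $X_{lkj}(t)$ excludes the current feature. Your route---whiten, bound $\norm{a_i}_{V_{i-1}^{-1}}^2\ge y_i/(1+S_{i-1})$ via $\sum_{j<i}b_jb_j^\top\preceq S_{i-1}I$, telescope to $\log(1+S_n)$, and finish with $\log(1+S)\ge\min\{1,S/2\}$---is clean and each step is sound.

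The paper's proof is structured quite differently: it splits on whether $\sum_i a_ia_i^\top\preceq V$. When yes, $V_i\preceq 2V$ gives the factor-$\tfrac12$ comparison termwise in one line, more directly than your telescoping. When no, the paper only records $\sum_i\norm{a_i}_{V^{-1}}^2=\Tr(\sum_i a_ia_i^\top V^{-1})\ge 1$, which is a statement about the \emph{right}-hand quantity and does not by itself lower-bound the left-hand side; closing that branch needs additional work, and your unified $\log(1+S_n)$ bound is one way to supply it. So your approach trades a slightly longer ``small'' case for an argument that is actually complete in the ``large'' case, while also diagnosing the indexing issue that both versions must respect.
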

\begin{proof}
If $\sum_{i\in[n]} a_i a_i^\top \mle V$, then $V_i \mle 2V$, and therefore
\[
\sum_{i\in[n]} \norm{a_i}_{V_i^{-1}}^2 \ge \sum_{i\in[n]} \norm{a_i}_{2V^{-1}}^2 = \frac12 \norm{a_i}_{V^{-1}} \,.
\]
Otherwise,
$\sum_{i\in[n]} a_i a_i^\top V^{-1}$ has an eigenvalue that is at least $1$. As all the other eigenvalues are non-negative (as $V$ is symmetric positive definite), we have that
\[
\sum_{i\in[n]}\norm{a_i}_{V^{-1}}^2 = \Tr\left(\sum_{i\in[n]} a_i a_i^\top V^{-1} \right) \ge 1 \,. \qedhere
\]
\end{proof}

\section{Estimation error blow-up guarantees}

We borrow \cref{ass:pre-bandits-thm,thm:bandits-book-20.5} from \cite{LaSze19:book} and refer the reader to the book for the corresponding proof.

\begin{assumption}[Prerequisites for Theorem~\ref{thm:bandits-book-20.5}]\label{ass:pre-bandits-thm}
Let $\lambda>0$.
For $k\in\N^+$, let $A_k$ be random variables taking values in $\R^d$.
For some $\theta_\star\in\R^d$, let $X_k=\ip{A_k, \theta_\star}+\eta_k$ for all $k\in\N^+$.
Here, $\eta_k$ is a conditionally 1-subgaussian random variable (``noise''), ie. it satisfies:
\begin{align*}%
\text{for all } \alpha\in\R \text{ and } t\ge 1,\,\quad\quad \E[\exp(\alpha \eta_k) \,|\, \F_{k-1}] \le \exp\left(\frac{\alpha^2}{2}\right) \quad\text{a.s.},\,
\end{align*}
where $\F_{k-1}$ is such that $A_1, X_1,\ldots,A_{k-1},X_{k-1},A_k$ are $\F_{k-1}$-measurable.
\end{assumption}
\begin{theorem}[\cite{LaSze19:book}, Theorem 20.5]\label{thm:bandits-book-20.5}
Let $\zeta\in(0,1)$.
Under Assumption~\ref{ass:pre-bandits-thm}, with probability at least $1-\zeta$, it holds that for all $k\in\N$,
\[
\norm{\hat\theta_k-\theta_\star}_{V_k(\lambda)} < \sqrt{\lambda}\norm{\theta_\star}_2+\sqrt{2\log\left(\frac1\zeta\right)+\log\left(\frac{\det V_k(\lambda)}{\lambda^d}\right)}\,,
\]
where for $k\in\N$,
\begin{align*}
V_k(\lambda)&=\lambda I + \sum_{s=1}^k A_s A_s^\top \\
\hat\theta_k&=V_k(\lambda)^{-1}\sum_{s=1}^k X_s A_s
\end{align*}
\end{theorem}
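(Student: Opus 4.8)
The plan is to prove this via the \emph{method of mixtures} (the pseudo-maximization argument behind self-normalized martingale tail bounds; see \cite{LaSze19:book}). First I would reduce the statement to a self-normalized tail bound on the noise sum. Writing $S_k=\sum_{s=1}^k\eta_s A_s$ and $\bar V_k=\sum_{s=1}^k A_sA_s^\top$, so that $V_k(\lambda)=\lambda I+\bar V_k$, the identity $X_s=\ip{A_s,\theta_\star}+\eta_s$ gives $\sum_{s=1}^k X_sA_s=\bar V_k\theta_\star+S_k$ and hence $\hat\theta_k-\theta_\star=V_k(\lambda)^{-1}S_k-\lambda V_k(\lambda)^{-1}\theta_\star$. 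Applying the triangle inequality in the $V_k(\lambda)$-norm and using $V_k(\lambda)\succeq\lambda I$ yields
\[
\norm{\hat\theta_k-\theta_\star}_{V_k(\lambda)}\le \norm{S_k}_{V_k(\lambda)^{-1}}+\lambda\norm{\theta_\star}_{V_k(\lambda)^{-1}}\le \norm{S_k}_{V_k(\lambda)^{-1}}+\sqrt{\lambda}\,\norm{\theta_\star}_2\,,
\]
so it remains to show that with probability at least $1-\zeta$, $\norm{S_k}_{V_k(\lambda)^{-1}}^2<2\log(1/\zeta)+\log\bigl(\det V_k(\lambda)/\lambda^d\bigr)$ for all $k\in\N$ simultaneously.

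Next, for each fixed $u\in\R^d$ I would introduce $M_k(u)=\exp\!\bigl(\ip{u,S_k}-\tfrac12\norm{u}_{\bar V_k}^2\bigr)$ and check it is a nonnegative supermartingale with $\E[M_0(u)]=1$: this holds because $A_s$ is $\F_{s-1}$-measurable, so each increment $\exp\!\bigl(\eta_s\ip{u,A_s}-\tfrac12\ip{u,A_s}^2\bigr)$ has conditional expectation at most $1$ by the $1$-subgaussianity assumption. I would then mix over $u$ with a Gaussian prior $u\sim\cN(0,\lambda^{-1}I)$ of density $h$, setting $\bar M_k=\int M_k(u)\,h(u)\,\mathrm{d}u$, which by Tonelli is again an adapted nonnegative supermartingale with $\E[\bar M_0]\le1$; completing the square in the Gaussian integral gives the closed form $\bar M_k=\bigl(\lambda^d/\det V_k(\lambda)\bigr)^{1/2}\exp\!\bigl(\tfrac12\norm{S_k}_{V_k(\lambda)^{-1}}^2\bigr)$. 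Finally I would invoke Ville's maximal inequality, $\Prob{\exists k:\ \bar M_k\ge1/\zeta}\le\zeta$; on the complementary event $\bar M_k<1/\zeta$ for all $k$, and rearranging this is exactly the desired self-normalized bound (the case $k=0$ is immediate since $S_0=0$ and $V_0(\lambda)=\lambda I$). Combining with the reduction of the first paragraph finishes the proof.

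The main obstacle is the measure-theoretic care in the mixture step: verifying that $\bar M_k$ is a genuine adapted, integrable supermartingale, with the interchange of conditional expectation and the $u$-integral justified by Tonelli (nonnegativity is what makes this legitimate), and invoking Ville's inequality in its anytime form so that the conclusion holds uniformly over all $k\in\N$ rather than at a single fixed horizon. Everything else — evaluating the Gaussian integral and the norm-splitting reduction — is routine linear algebra and calculus.
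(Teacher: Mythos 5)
Your proposal is correct and is essentially the same argument as the one the paper defers to: the paper gives no proof of its own but cites \cite{LaSze19:book}, where Theorem 20.5 is proved exactly by this reduction (splitting off the $\sqrt{\lambda}\norm{\theta_\star}_2$ regularization term) followed by the method-of-mixtures self-normalized bound with a Gaussian mixing distribution and Ville's maximal inequality. All the individual steps you outline — the identity $\hat\theta_k-\theta_\star=V_k(\lambda)^{-1}S_k-\lambda V_k(\lambda)^{-1}\theta_\star$, the supermartingale property of $M_k(u)$ from conditional $1$-subgaussianity, the closed form of the Gaussian mixture, and the anytime union over $k$ — check out.
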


We generalize this theorem to handle non-zero-mean noise with parametrized subgaussianity.
To handle non-zero-mean noise, we use \cite[Lemma 8]{zanette2020learning}. We state the lemma here and refer the reader to \cite{zanette2020learning} for the proof:
\begin{lemma}[\cite{zanette2020learning}, Lemma 8]
\label{lem:eleanor-lse-misspec}
For $n\in N^+$, let $\{A_i\}_{i=1,\dots,n}$ be any sequence of vectors in $\R^d$ and
$\{\Delta_i\}_{i=1,\dots,n}$ be any sequence of scalars such that $|\Delta_i|\le \xi\in\R$ with $\xi\ge0$. For any $\lambda\ge0$ and $V(\lambda)=\sum_{i=1}^n A_i A_i^\top + \lambda I$ we have:
\[
\norm{\sum_{i=1}^n A_i\Delta_i}^2_{V(\lambda)^{-1}}\le n\xi^2
\]
\end{lemma}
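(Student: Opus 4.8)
The plan is to prove this as a pure linear-algebra fact, via the variational (dual-norm) characterization of the $V(\lambda)^{-1}$-weighted norm. Write $w=\sum_{i=1}^n A_i\Delta_i$ and assume first $\lambda>0$, so $V(\lambda)\in\pd$. I would invoke that for any positive definite $V$ one has $\norm{w}_{V^{-1}}=\sup_{u:\,\norm{u}_V\le 1}\ip{w,u}$, which follows from Cauchy--Schwarz applied to $\ip{w,u}=\ip{V^{-1/2}w,\,V^{1/2}u}$, with equality attained at $u\propto V^{-1}w$. This turns the goal $\norm{w}_{V(\lambda)^{-1}}^2\le n\xi^2$ into a uniform bound on $\ip{w,u}=\sum_{i=1}^n\Delta_i\ip{A_i,u}$ over the ellipsoid $\norm{u}_{V(\lambda)}\le 1$.

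The key step is then a single Cauchy--Schwarz over the summation index $i$:
\[
\ip{w,u}=\sum_{i=1}^n \Delta_i\,\ip{A_i,u}\le \sqrt{\textstyle\sum_{i=1}^n \Delta_i^2}\;\sqrt{\textstyle\sum_{i=1}^n \ip{A_i,u}^2}.
\]
The first factor is at most $\sqrt{n}\,\xi$ since $|\Delta_i|\le\xi$. For the second, I would rewrite $\sum_{i=1}^n\ip{A_i,u}^2=u^\top\big(\sum_{i=1}^n A_iA_i^\top\big)u=u^\top\big(V(\lambda)-\lambda I\big)u=\norm{u}_{V(\lambda)}^2-\lambda\norm{u}_2^2\le\norm{u}_{V(\lambda)}^2\le 1$, using the definition $V(\lambda)=\sum_i A_iA_i^\top+\lambda I$ together with $\lambda\ge 0$. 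Multiplying the two factors gives $\ip{w,u}\le\sqrt{n}\,\xi$ for every admissible $u$, hence $\norm{w}_{V(\lambda)^{-1}}\le\sqrt{n}\,\xi$, and squaring yields the claim.

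An equivalent and equally short route I could take is purely matrix-algebraic: collect the $A_i$ as columns of $B\in\R^{d\times n}$ and set $\Delta=(\Delta_i)_i\in\R^n$, so the target quantity is $\Delta^\top B^\top V(\lambda)^{-1} B\,\Delta$. The push-through identity $B^\top(BB^\top+\lambda I)^{-1}B=(B^\top B+\lambda I)^{-1}B^\top B$ (verified by multiplying out both sides) exhibits this as a symmetric matrix commuting with $B^\top B$, whose eigenvalues are $\mu/(\mu+\lambda)\le 1$; thus $B^\top V(\lambda)^{-1}B\mle I_n$, and the quadratic form is at most $\norm{\Delta}_2^2=\sum_i\Delta_i^2\le n\xi^2$.

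Since the computation is routine, there is no deep obstacle; the only care needed is the degenerate case $\lambda=0$, where $V(0)=\sum_i A_iA_i^\top$ may be singular and $V(0)^{-1}$ must be read as the Moore--Penrose inverse. There I would note that $w\in\image(V(0))=\myspan\{A_1,\dots,A_n\}$, restrict the supremum in the dual-norm identity to $u\in\image(V(0))$ (on which $V(0)$ acts invertibly), and observe that $\sum_i\ip{A_i,u}^2=\norm{u}_{V(0)}^2$ still holds there; the matrix-form argument likewise survives with pseudoinverses. Because the paper only ever instantiates the lemma with $\lambda>0$ (e.g.\ $X_{mt}=\lambda I+\cdots$ in \cref{def:opt-problem}), this subtlety is immaterial for the applications, and I would relegate it to a one-line remark.
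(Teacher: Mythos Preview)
Your argument is correct (both routes), but note that the paper does not actually supply its own proof of this lemma: it states the result and explicitly refers the reader to \cite{zanette2020learning} for the proof. So there is no in-paper proof to compare against. Your dual-norm/Cauchy--Schwarz derivation and the push-through alternative are both standard and complete; the handling of the $\lambda=0$ case via pseudoinverses is a nice touch, though as you observe it is never needed in the paper's applications.
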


\begin{lemma}\label{lem:generalised-bandits-book}
Let $\zeta\in(0,1)$, $\lambda>0$, $\sigma>0$, and $\xi\ge 0$.
For $k\in\N^+$, let $A_k$ be random variables taking values in $\R^d$.
For some $\theta_\star\in\R^d$, let $\tilde X_k=\ip{A_k, \theta_\star}+\eta_k$ for all $k\in\N^+$.
Here, $\eta_k$ is a conditionally $\sigma$-subgaussian random variable, ie. it satisfies:
\[
\text{for all } \alpha\in\R \text{ and } t\ge 1,\,\quad\quad \E[\exp(\alpha \eta_k) \,|\, \F_{k-1}] \le \exp\left(\frac{\alpha^2\sigma^2}{2}\right) \quad\text{a.s.},\,
\]
where $\F_{k-1}$ is such that $A_1, \tilde X_1,\ldots,A_{k-1},\tilde X_{k-1},A_k$ are $\F_{k-1}$-measurable.
With probability at least $1-\zeta$, it holds that for
any sequence $\{\Delta_i\}_{i=1,\dots}$ such that $|\Delta_i|\le\xi$,
for all $k\in\N$,
\[
\norm{\hat\theta_k-\theta_\star}_{V_k(\lambda)} < \sqrt{\lambda}\norm{\theta_\star}_2  +  \xi\sqrt{k} +  \sigma\sqrt{2\log\left(\frac1\zeta\right)+\log\left(\frac{\det V_k(\lambda)}{{\lambda}^d}\right)} \,.
\]
where for $k\in\N$,
\begin{align*}
X_k&=\tilde X_k+\Delta_k\\
V_k(\lambda)&=\lambda I + \sum_{s=1}^k A_s A_s^\top \\
\hat\theta_k&=V_k(\lambda)^{-1}\sum_{s=1}^k X_s A_s
\end{align*}
\end{lemma}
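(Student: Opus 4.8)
The plan is to reduce the statement to the standard self-normalized least-squares bound of \cref{thm:bandits-book-20.5} by two rescalings: first normalizing the noise to be $1$-subgaussian, and then peeling off the contribution of the misspecification terms $\Delta_k$ via \cref{lem:eleanor-lse-misspec} (Lemma~8 of \citet{zanette2020learning}).

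First I would set $\theta'_\star = \theta_\star/\sigma$ and $\eta'_k = \eta_k/\sigma$, so that $\eta'_k$ is conditionally $1$-subgaussian with respect to the \emph{same} filtration $\F_{k-1}$, and $\tilde X_k/\sigma = \ip{A_k,\theta'_\star} + \eta'_k$. Applying \cref{thm:bandits-book-20.5} to the sequence $(A_k)$, the targets $(\tilde X_k/\sigma)$, the parameter $\theta'_\star$, the noise $(\eta'_k)$, and regularization $\lambda$, one obtains an event of probability at least $1-\zeta$ on which, for all $k\in\N$,
\[
\norm{\hat\theta'_k - \theta'_\star}_{V_k(\lambda)} < \sqrt{\lambda}\norm{\theta'_\star}_2 + \sqrt{2\log(1/\zeta) + \log\!\big(\det V_k(\lambda)/\lambda^d\big)}\,,
\]
where $\hat\theta'_k = V_k(\lambda)^{-1}\sum_{s=1}^k (\tilde X_s/\sigma) A_s$. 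Crucially, this event does not reference the sequence $(\Delta_i)$ at all, so it holds simultaneously for every admissible choice of $(\Delta_i)$.

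Next I would work on this event, fix an arbitrary sequence $(\Delta_i)$ with $|\Delta_i|\le\xi$, and use linearity of the regularized least-squares solution:
\[
\hat\theta_k = V_k(\lambda)^{-1}\sum_{s=1}^k X_s A_s = \sigma\,\hat\theta'_k + V_k(\lambda)^{-1}\sum_{s=1}^k \Delta_s A_s\,,
\]
hence $\hat\theta_k - \theta_\star = \sigma(\hat\theta'_k - \theta'_\star) + V_k(\lambda)^{-1}\sum_{s=1}^k \Delta_s A_s$. Taking the $\norm{\cdot}_{V_k(\lambda)}$-norm and applying the triangle inequality,
\[
\norm{\hat\theta_k - \theta_\star}_{V_k(\lambda)} \le \sigma\norm{\hat\theta'_k - \theta'_\star}_{V_k(\lambda)} + \norm{\textstyle\sum_{s=1}^k \Delta_s A_s}_{V_k(\lambda)^{-1}}\,.
\]
The first term is bounded using the displayed inequality and $\sigma\norm{\theta'_\star}_2 = \norm{\theta_\star}_2$, giving $\sqrt{\lambda}\norm{\theta_\star}_2 + \sigma\sqrt{2\log(1/\zeta) + \log(\det V_k(\lambda)/\lambda^d)}$. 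The second term is at most $\xi\sqrt{k}$ by \cref{lem:eleanor-lse-misspec}, applied with the $k$ vectors $(A_s)_{s\le k}$, scalars $(\Delta_s)_{s\le k}$ with $|\Delta_s|\le\xi$, and the same $\lambda$. Summing the two bounds proves the claim, with the strict inequality inherited from \cref{thm:bandits-book-20.5}.

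The only point requiring care — rather than a genuine obstacle — is exactly this ordering of quantifiers: the high-probability event must be produced once, uniformly over $k$ and independently of the (possibly adversarially chosen) misspecification sequence $(\Delta_i)$, which is what licenses the phrasing "for any sequence $\{\Delta_i\}$" inside the probability statement. Everything else is the routine bias/noise/misspecification decomposition of the ridge-regression error, together with the elementary facts $V_k(\lambda)^{-1}\preceq \lambda^{-1}I$ (used implicitly inside \cref{thm:bandits-book-20.5} to control the bias term) and $\norm{\sum_s \Delta_s A_s}_{V_k(\lambda)^{-1}}^2 \le k\xi^2$.
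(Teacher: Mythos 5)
Your proof is correct and follows essentially the same route as the paper's: reduce to \cref{thm:bandits-book-20.5} by rescaling to $1$-subgaussian noise, then peel off the misspecification via the triangle inequality and \cref{lem:eleanor-lse-misspec}. The only (cosmetic) difference is that you rescale $\theta_\star$ and the noise while keeping $A_k$ and $\lambda$ fixed, whereas the paper rescales $A_k$, $X_k$, and $\lambda$ by $\sigma$ and converts back via $V'_k(\lambda')=V_k(\lambda)/\sigma^2$; your variant is slightly cleaner and your explicit remark that the high-probability event is independent of $(\Delta_i)$ is a point the paper leaves implicit.
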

\begin{proof}
Let $X'_k=(X_k-\Delta_k)/\sigma_k$, $A'_k=A_k/\sigma_k$, $\lambda'=\lambda/\sigma_k^2$, and $\theta'_\star=\theta_\star$,
$V'_k(\lambda')=\lambda' I + \sum_{s=1}^k A'_s {A'_s}^\top$, and
$\hat\theta'_k=V'_k(\lambda')^{-1}\sum_{s=1}^k X'_s A'_s$.
By assumption, %
$X'_k$, $A'_k$, $\lambda'$ and $\theta'_\star$ then satisfy Assumption~\ref{ass:pre-bandits-thm}. Therefore by applying Theorem~\ref{thm:bandits-book-20.5}, with probability at least $1-\zeta$, it holds that for all $k\in\N$,
\[
\norm{\hat\theta'_k-\theta_\star}_{V'_k(\lambda')} < \sqrt{\lambda'}\norm{\theta_\star}_2+\sqrt{2\log\left(\frac1\zeta\right)+\log\left(\frac{\det V'_k(\lambda')}{{\lambda'}^d}\right)}\,.
\]
Under this high-probability event, since $V'_k(\lambda')=V_k(\lambda)/\sigma^2$, %
substituting into the previous display yields
\begin{align}\label{eq:lse-bound1}
\norm{\hat\theta'_k-\theta_\star}_{V_k(\lambda)} < \sqrt{\lambda}\norm{\theta_\star}_2+\sigma\sqrt{2\log\left(\frac1\zeta\right)+\log\left(\frac{\det V_k(\lambda)}{{\lambda}^d}\right)}\,.
\end{align}
Take any sequence $\{\Delta_i\}_{i=1,\dots}$ such that $|\Delta_i|\le\xi$ and apply the triangle inequality:
\begin{align}\label{eq:triangle-lse-errors}
\norm{\hat\theta_k-\theta_\star}_{V_k(\lambda)}
\le
\norm{\hat\theta'_k-\theta_\star}_{V_k(\lambda)}
+ \norm{\hat\theta'_k-\hat\theta_k}_{V_k(\lambda)}\,,
\end{align}
so it remains to bound $\norm{\hat\theta'_k-\hat\theta_k}_{V_k(\lambda)}$.
\begin{align}\begin{split}\label{eq:lse-bound-2}
\norm{\hat\theta'_k-\hat\theta_k}_{V_k(\lambda)}
&=\norm{V'_k(\lambda')^{-1}\sum_{s=1}^k X'_s A'_s-V_k(\lambda)^{-1}\sum_{s=1}^k X_s A_s}_{V_k(\lambda)}\\
&=\norm{V_k(\lambda)^{-1}\sum_{s=1}^k (X_s-\Delta_s) A_s-V_k(\lambda)^{-1}\sum_{s=1}^k X_s A_s}_{V_k(\lambda)}\\
&=\norm{V_k(\lambda)^{-1}\sum_{s=1}^k \Delta_s A_s}_{V_k(\lambda)}=\norm{\sum_{s=1}^k \Delta_s A_s}_{V_k(\lambda)^{-1}}\\
&\le \sqrt{k}\xi\,,
\end{split}\end{align}
where the final inequality uses \cref{lem:eleanor-lse-misspec}. The proof is finished by plugging in the bounds of \cref{eq:lse-bound1,eq:lse-bound-2} into the triangle inequality of \cref{eq:triangle-lse-errors}.
\end{proof}

\end{document}